\DeclareMathOperator*{\argmax}{arg\,max}
\newcommand{\human}[1]{#1_h}
\newcommand{\chain}[1]{#1_\text{chain}}
\newcommand{\ai}[1]{#1_{AI}}
\newcommand{\vgoal}{V^{\pi_g}}
\newcommand{\vdis}{V^{\pi_d}}
\newcommand{\mdp}{\mathcal{M}}
\newcommand{\indicator}[1]{\mathbb{I}\left \{ #1 \right \}}
\newcommand{\mdpclass}{\mathscr{M}}
\newcommand{\chainworld}{\mdp_\theta}
\newcommand{\simplechains}{\chain{\mdpclass}}
\newcommand{\mono}[1]{#1_\text{mono}}
\newcommand{\monotonicchains}{\mono{\mdpclass}}
\newcommand{\policyclass}{\bar \Pi}
\newcommand{\parenfrac}[2]{\left(\frac{#1}{#2}\right)}
\newcommand{\tmin}{\human{t}^\text{min}}
\newcommand{\aigoal}{\ai{r}^g}
\newcommand{\aidis}{\ai{r}^d}
\newcommand{\aiint}{\ai{r}^i}
\newcommand{\twoabs}[1]{{#1}_{\text{prog}}}
\newcommand{\multi}[1]{{#1}_\text{multi}}
\title{Reinforcement Learning Interventions on Boundedly Rational Human Agents in Frictionful Tasks}
\author{Eura Nofshin}
\affiliation{
  \institution{Harvard University}
  \city{Cambridge}
  \country{USA}}
\email{eurashin@g.harvard.edu}
\author{Siddharth Swaroop}
\affiliation{
  \institution{Harvard University}
  \city{Cambridge}
  \country{USA}}
\email{}
\author{Weiwei Pan}
\affiliation{
  \institution{Harvard University}
  \city{Cambridge}
  \country{USA}}
\email{}
\author{Susan Murphy}
\affiliation{
  \institution{Harvard University}
  \city{Cambridge}
  \country{USA}}
\email{}
\author{Finale Doshi-Velez}
\affiliation{
  \institution{Harvard University}
  \city{Cambridge}
  \country{USA}}
\email{}
\begin{abstract}
Many important behavior changes are \emph{frictionful}; they require individuals to expend effort over a long period with little immediate gratification. Here, an artificial intelligence (AI) agent can provide personalized interventions to help individuals stick to their goals. In these settings, the AI agent must personalize \emph{rapidly} (before the individual disengages) and \emph{interpretably}, to help us understand the behavioral interventions. In this paper, we introduce Behavior Model Reinforcement Learning (BMRL), a framework in which an AI agent intervenes on the parameters of a Markov Decision Process (MDP) belonging to a \emph{boundedly rational human agent}. Our formulation of the human decision-maker as a planning agent allows us to attribute undesirable human policies (ones that do not lead to the goal) to their maladapted MDP parameters, such as an extremely low discount factor. 
Furthermore, we propose a class of tractable human models that captures fundamental behaviors in frictionful tasks. Introducing a notion of \emph{MDP equivalence} specific to BMRL, we theoretically and empirically show that AI planning with our human models can lead to helpful policies on a wide range of more complex, ground-truth humans. 
\end{abstract}
\keywords{Reinforcement learning; Personalization; Agent-based modeling of humans; Bounded rationality}
\newcommand{\BibTeX}{\rm B\kern-.05em{\sc i\kern-.025em b}\kern-.08em\TeX}
\begin{document}


\pagestyle{fancy}
\fancyhead{}


\maketitle 


\section{Introduction}
In many AI+human applications of behavior change, AI agents assist the human in performing \emph{frictionful} tasks, where making progress toward the human's goal requires sustained effort over time with little immediate gratification. Examples include physical therapy (PT) programs, adherance to scheduled medication, or passing an online course. Two key challenges for AI agents in these settings are rapid personalization \citep{wang2021optimizingRLSimulatorHealth, park2023understandingDisengagement, tabatabaei2018narrowing} and learning interpretable policies for intervention \citep{trella2022designingRLforDigitalHealth, xu2022towardsBridgingModels}. In frictionful tasks, since effort exerted by the human does not reap immediate benefits, the AI agent must learn a personalized intervention policy for each human in a small number of interactions, or risk disengagement. These policies must also be interpretable to experts in behavioral science so that they can discover which interventions work for which individuals, and investigate why. 


Grounded in behavioral literature that treats humans as sequential decision-makers (e.g. \citep{taylor2021awarenessEatingBehavior, taylor2021awarenessSmoking, niv2009Dopamine1, shteingart2014Dopamine2, zhou2018personalizingFitness}), we model the human as an agent planning under a ``maladapted'' Markov Decision Process (MDP). In maladapted human MDPs, the optimal policy does not reach the human's stated goal; for example, in physical therapy (PT), the goal may be a rehabilitated shoulder and the maladapted MDP parameter may be an extremely low discount rate, $\gamma$. This results in myopic decision-making, wherein an individual forgoes the long-term goal (rehabilitated shoulder) to avoid experiencing friction in the short-term (unpleasantness of PT). The AI agent helps the individual achieve their long-term goal by changing the maladapted human MDP (and thereby the optimal policy).

While there is existing reinforcement learning (RL) literature for optimizing interventions on human utility functions (i.e. reward) in maladapted MDPs \citep{yu2022environmentDesignBiased, zhou2018personalizingFitness, mintz2023behavioralAnalytics}, interventions on $\gamma$ have not been optimized from an RL perspective.  On the other hand, in behavioral science, humans have been observed to use a problematically low $\gamma$ \citep{story2014doesTemporal} and scientists have developed interventions to change a human's $\gamma$ (e.g. \citep{magen2008hiddenZero}).  However, no work optimizes for \emph{when} and with what mechanisms to intervene on the parameters of the human's maladapted MDP. 

In this paper, we introduce a \emph{flexible} and \emph{behaviorally interpretable} framework called ``Behavior-Model RL'' (BMRL). In BMRL, the human is modeled as an RL agent, whose actions are \emph{behaviors}, such as performing or skipping PT; the AI agent provides personalized assistance by delivering \emph{interventions} on the human's maladapted MDP parameters. 
By linking the behaviors of our human agents to their MDP parameters, BMRL allows us to \emph{interpret} the mechanism behind the human's maladapted decision-making. Our framework is also more \emph{flexible} than existing ones, since we allow the AI agent's actions to include operations on any part of the human MDP (such as $\gamma$). 
By solving for the AI agent's optimal policy, we learn the best set of interventions to change the human agent's behavior and to help the human reach their goal.

Unfortunately, current RL approaches have two major drawbacks when used to solve for the optimal AI agent policy in BMRL. 
First, most planning methods are too data-intensive for our setting, in which personalization occurs online.
For example, online algorithms in robotics require thousands of interactions to learn reasonable policies (e.g. in \citep{yang2020dataEfficientRobots, thabet2019sampleDeepHRI, tebbe2021TableTennis}), but in frictionful tasks, we are limited to \emph{tens to hundreds} of interactions \citep{trella2022designingRLforDigitalHealth}. 
Second, existing planning methods model the human as a black-box transition or value function. Unfortunately, in learning black-box representations of the human agent, we lose the ability to interpretably attribute human behavior to their MDP parameters.

In this paper, we propose a tractable planning method for the AI agent in our BMRL framework. Our method provides the AI agent with a useful inductive bias, in the form of a human model that captures important behavioral patterns in frictionful tasks. Specifically, we identify a small, behaviorally-grounded model of the human that the AI agent can leverage to rapidly personalize interventions, including previously under-explored interventions on $\gamma$. 
Then, we introduce the concept of ``AI equivalence'' to identify a class of more complex human models for which AI policies learned in our simple human model can be lifted with no loss of performance. 
In our empirical analysis, we test whether AI planning with our small model is robust to complex human models that are not covered by our equivalence result. 
Throughout all of this, our small model preserves scientific interpretability-- in fact, it has an analytical solution for the human behavior policy-- which allows experts to inspect and learn from the AI policies. 

\section{Related Works}
\vskip0.15cm \emph{Computational modeling of human behaviors.} 
Behavioral scientists have developed and verified several computational models of \emph{dynamic} human decision-making. Unlike \emph{static} models, such as Social Cognitive Theory \citep{bandura1999social},  \emph{dynamic} models of decision-making apply to interactive human-AI settings, since they capture person-level variation and changes over time, as in \citet{zhang2022theoryBasedHabit}.
Scientists developed these models to explain \emph{offline data} from frictionful settings such as health (e.g. \citep{martin2018ControlFluidSCT, zhang2021usingCognitiveModels, wang2021optimizingRLSimulatorHealth}), energy \citep{mogles2018computationalEnergy}, and experience sampling \citep{khanshan2023simulatingESM} or to capture broader behaviors such as risk \citep{liu2019modelingRisk} and adherence \citep{pirolli2016computationalACTR}. However, these models involve too many latent variables-- corresponding to internal human processes-- to facilitate rapid AI learning from \emph{online data}.
In contrast, we propose a minimal, behaviorally-grounded model, one whose set of latent parameters is small and structured enough that our AI can learn.

\vskip0.15cm \emph{Computational modeling of human agent deficiencies.} 
RL is frequently used to model the complex mechanisms underlying human behavior, from the firing of dopaminergic neurons in the brain (e.g. in \citep{niv2009Dopamine1, shteingart2014Dopamine2}) to frictionful tasks such as mindful eating \citep{taylor2021awarenessEatingBehavior}, weight loss \citep{aswani2019behavioralWeightLoss}, and smoking cessation \citep{taylor2021awarenessSmoking}. Although these works use RL to model humans, the models themselves are not used to enrich planning for an AI agent.
One exception is inverse reinforcement learning, in which the AI agent infers the human agent's rewards (e.g. \citep{zhi2020onlineGoalInference, brown2019extrapolatingTREX}), transitions \citep{reddy2018you}, discount factor \citep{giwa2021estimationDiscount}, or entire MDP \citep{shah2019feasibility, evans2016learningIgnorant, jarrett2021inverseDecisonModeling}, but does not \emph{intervene} on the parts of the human MDP that are maladapted.
When the AI agent does intervene, the changes are limited to the human's reward \citep{yu2022environmentDesignBiased, tabrez2019explanationReward, zhou2018personalizingFitness, mintz2023behavioralAnalytics} or states \citep{chen2022mirror, reddy2021CommunicationState}. Our BMRL framework is flexible enough to incorporate AI interventions on multiple parts of the human MDP, including the discount factor or transitions.

\vskip0.15cm \emph{Equivalence of (human) MDP models.} In RL, there are notions of equivalence that can reduce larger human MDPs to smaller, more manageable ones. 
Equivalence, as defined in bisimulations \citep{givan2003equivalenceBisimulation}, homomorphism \citep{ravindran2002modelHomomorphisms}, and approximate homomorphisms (e.g. \citep{ravindran2004approximateHomomorphism, vanderpol2020plannable}), requires that one human MDP strictly preserves the transition and reward functions of another, given a mapping between the state and action spaces. 
State abstraction methods, which equate the optimal value function between two human-level MDPs, are less strict \citep{li2006towardsUnifiedAbstraction}. However, these equivalences are still \emph{stricter than necessary} in our setting, where we only care that the human MDPs are similar enough that the AI agent policy will not differ.
Furthermore, the simpler MDPs recovered by these methods are not guaranteed to be behaviorally valid or interpretable. In our approach, we define two human MDPs as equivalent if they lead to the same \emph{AI optimal policies}, and we use this definition to build up to more complex human MDPs from a behaviorally interpretable one. 


\section{The Behavior Model RL (BMRL) Framework for AI Interventions}
\label{sec: BMRL}

\begin{figure}[t] 
    \centering    \includegraphics[width=1\linewidth]{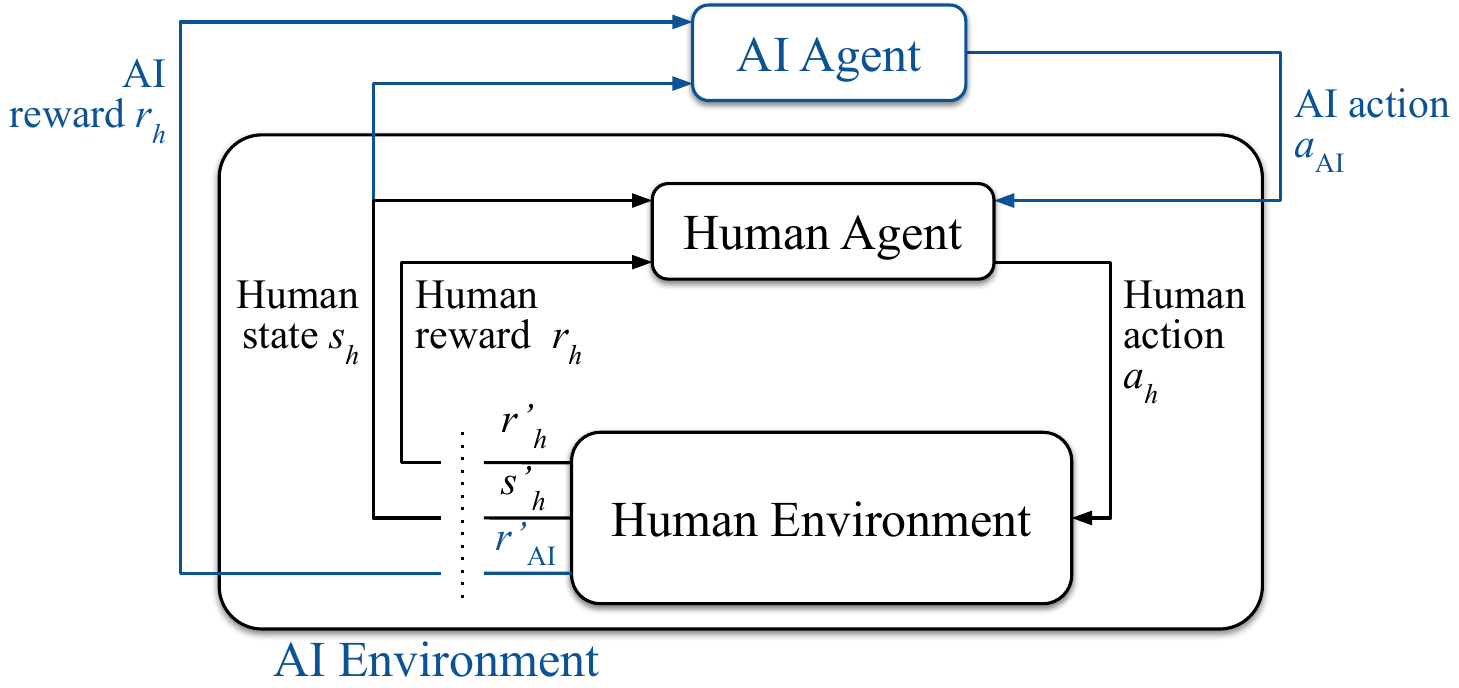}
    \caption{Overview of BMRL. \textnormal{The human agent interacts with the environment as in standard RL. The AI agent's actions affect the human agent. The human agent + environment form the AI environment. }}
    \label{fig: bmrl}
\end{figure}%

We define a formal framework,  called BMRL, in which an AI agent learns to intervene on a human agent's maladapted MDP parameters (overview in \cref{fig: bmrl}). 


\subsection{Assumptions on human agent}
\label{sec: human-agent-assumptions}
In BMRL, human agents perform optimal planning on (subconscious) knowledge of their MDP,
\begin{equation}
\label{eq: human-mdp}
    \human{\mathcal{M}} = \langle \human{\mathcal{S}}, \human{\mathcal{A}}, \human{T}, \human{R}, \human{\gamma}, s_g, s_d \rangle,
\end{equation}
where $s_g, s_d \in \human{\mathcal{S}}$ are absorbing goal (e.g. a rehabilitated shoulder) and disengagement states (e.g. quitting PT).

Though in general, it is possible for the human's perception of the states $\human{\mathcal{S}}$, actions $\human{\mathcal{A}}$ and transitions $\human{T}$ to be maladapted, in this paper we assume that the human's perception matches the true environment. 
On the other hand, we allow the human's rewards $\human{R}$ and discount $\human{\gamma}$ to vary by perception. For example, one may skip PT because of a tendency to ignore long-term rewards (low $\human{\gamma}$) while another may skip PT because they find the workout to be extremely unpleasant (bad $\human{R}$).

We assume that at any point the human subconsciously ``knows'' their own MDP, solves for the optimal policy, and uses it to select actions. In future work, BMRL can extend to sub-optimal human planning. 
Despite being optimal, our human agents are still boundedly rational because their MDP is maladapted. That is, under certain values of $\human{\gamma}, \human{R}$, even an optimal human policy will \emph{never} lead to the goal state (e.g. if the path to the goal reward is laced with extremely negative rewards). The existence of maladapted MDPs in humans is shown in behavior science, where myopic discounting has been linked to excessive alcohol intake \citep{story2014doesTemporal} or miscalibrated rewards have been linked to unhealthy eating \citep{taylor2021awarenessEatingBehavior}. Despite subconscious knowledge of their own MDP, our human agents are still boundedly rational because (1) they may not be \emph{conscious} of their deficiencies and unable to target them; (2) even if aware, they may still struggle to change their deficiencies. 
In both cases, behavioral interventions (delivered by the AI agent) can help.

\subsection{AI agent}
\label{sec: ai-agent-mdp}
Our AI agent encourages the human agent toward the goal by intervening on the human's decision-making parameters, such as $\human{\gamma}$. 
To do so, the AI agent plans according to an MDP, 
\begin{equation}
    \label{eq: ai-mdp}
    \ai{\mathcal{M}} = \langle \ai{\mathcal{S}}, \ai{\mathcal{A}}, \ai{T}, \ai{R}, \ai{\gamma} \rangle,
\end{equation} 
with known rewards $\ai{R}$ and unknown transitions $\ai{T}$.

Upon observing state $\ai{s} = [\human{s}, \human{a}]$, which consists of the human's current state and \emph{previous} action, the AI agent must decide whether to intervene on the human's discount ($\ai{a} = a_\gamma$), reward ($\ai{a} = a_R$), or to do nothing ($\ai{a} = 0$). In practice, a discounting intervention $a_\gamma$ could be “episodic future thinking,” where individuals imagine future events as if they are presently occurring \citep{brown2022episodicFutureThinking}; this could executed as a guided activity in-app. A common intervention on reward  $a_R$ is to offer extrinsic rewards, such as badges \citep{fanfarelli2015understandingBadges}. 
Domain experts would determine how the interventions are executed, e.g. if the burden intervention should be a badge, motivational message, or cash.

To encourage policies that quickly lead to the goal state, the AI agent receives a positive reward when the human reaches the goal state, a negative reward when the human disengages, and a negative reward for the ``cost'' of intervening. 
The AI's transitions factorize into two probability distributions, $\ai{T}(\ai{s}, \ai{a}, \ai{s}') = P(\human{s}' | \human{s}, \human{a} ) P(\human{a}' | \human{s}, \ai{a} ) =  \human{T}(\human{s}, \human{a}, \human{s}')\human{\pi}(\human{a}' | \human{s}, \ai{a})$. The first distribution refers to the human-level transitions $\human{T}$. The second distribution is over human actions; it is the human policy that results from the AI's intervention on the human's MDP. Importantly, we assume that the effect of AI actions on the human MDP is \emph{temporary}. For example, if the AI agent increases the human's discount factor $\human{\gamma}$ to $\human{\gamma}'$ in the current time step, the human's discounting will have reverted to $\human{\gamma}$ at the next time step.

In \cref{tab: who-knows-what}, we provide a comparison on what the AI and human agents separately know and observe.
Note that all of the AI agent's unknown parameters pertain to the human MDP $\human{\mdp}$ and are contained in the AI's transitions $\ai{T}$. Instead of explicitly learning $\human{\mdp}$ to form $\ai{T}$,  we could directly estimate $\ai{T}$ or $\ai{Q}^*$ using standard model-based or model-free techniques. 
However, by learning $\human{\mathcal{M}}$, we take advantage of the known structure of the problem; the better the AI's model of $\human{\mdp}$, the better the inductive bias for forming $\ai{T}$ (and therefore $\ai{\pi}^*$).


\begin{table}[t] 
    \centering
    \small
    \begin{tabular}{lcc}
        & Human agent & AI agent \\
        \hline
        \textbf{Knows}... & 
        $\human{\mathcal{S}}, \human{\mathcal{A}}, \human{T}, \human{R}, \human{\gamma}
        $ &
        $\ai{\mathcal{S}}, \ai{\mathcal{A}},\ai{R}
        $ \\
        \hline
        \textbf{Does not know}... &
        --- &
        $\ai{T}$ (includes $\human{T}, \human{R}, \human{\gamma}
        $)\\
        \hline
        \textbf{Observes}... &
        $\human{\mathcal{S}}, \human{\mathcal{A}}, \ai{\mathcal{A}}$ &
        $\human{\mathcal{S}}, \human{\mathcal{A}}, \ai{\mathcal{A}}$%
    \end{tabular}%
    \caption{Overview of what is known, unknown, and observable to the human and AI agent. \textnormal{the AI agent does not know (and must infer) the human agent's MDP ($\human{R}, \human{\gamma}$) and the true environmental transitions ($\human{T}$).}}%
    \label{tab: who-knows-what}
\end{table}%

\section{Rapid personalization in BMRL via a simple human model}

\subsection{Chainworlds: a simple human model that captures progress-based decision-making}
\label{sec: chain-worlds}
In this section we define \emph{chainworlds}, a class of simple human MDPs that the AI agent will use as a stand-in model for the \emph{true} human decision-making process. Chainworlds are based on the observation that many frictionful tasks contain a notion of human progress toward a goal; for example, in PT, the progress toward a rehabilitated shoulder may be summarized by the current strength of the joint.
We summarize these ``progress-based'' settings with a ``progress-only'' class of human MDPs, shown in \cref{fig: chainworld}, which we call \emph{chainworlds} and denote $\simplechains$. 

\begin{figure}[t] 
    \centering
    \begin{subfigure}{1\linewidth}
         \centering              \includegraphics[width=0.7\linewidth]{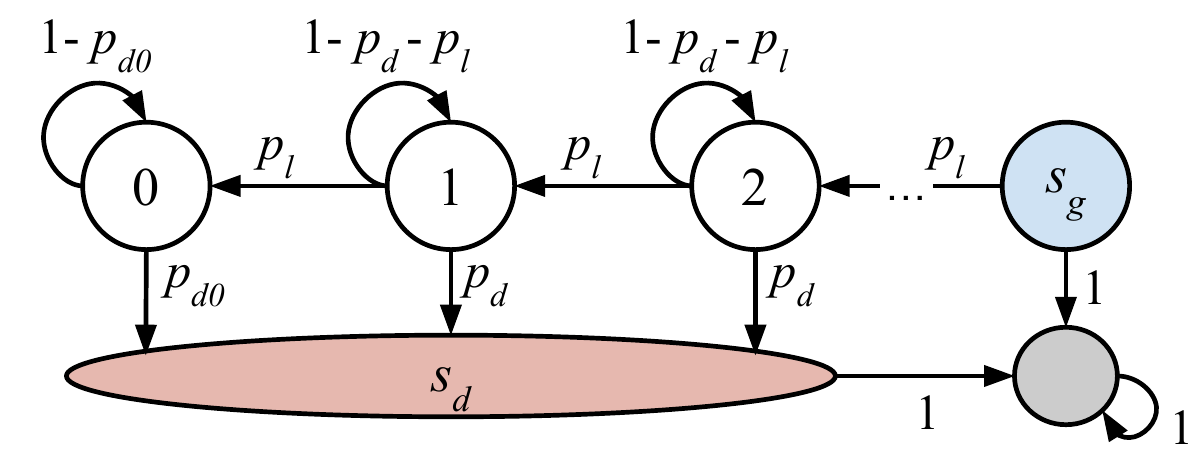}
         \label{fig: chain-action-0}
         \caption{$\human{a} = 0$}
     \end{subfigure}
    \begin{subfigure}{1\linewidth}
         \centering              \includegraphics[width=0.7\linewidth]{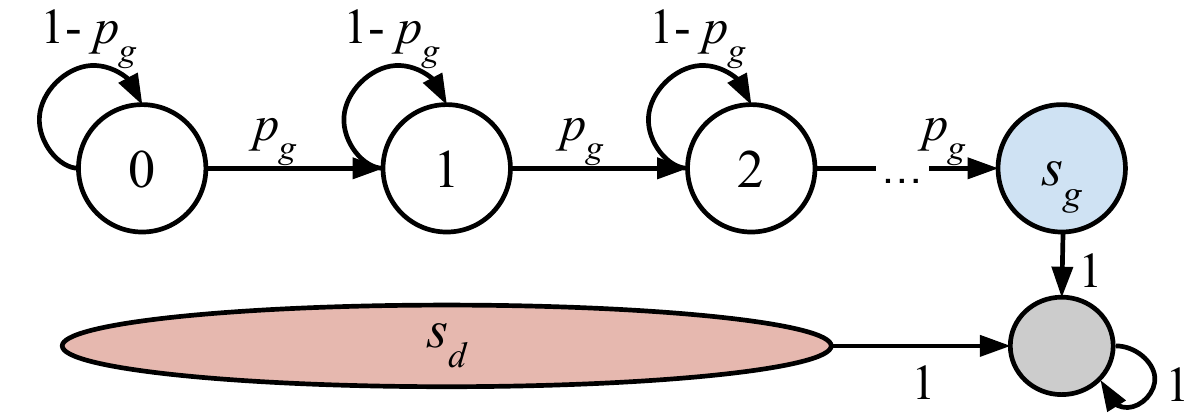}
         \label{fig: chain-action-1}
         \caption{$\human{a} = 1$}
     \end{subfigure}
    \caption{Graphical representation of the chainworld.}
    \label{fig: chainworld}
\end{figure}

Each element of $\simplechains$ is as follows: 
\begin{itemize}[leftmargin=*]
    \item \emph{States $\human{s} \in \{s_0, s_1, \ldots, s_N = s_g, s_d\}$.} The $N$ states are 1-D, discrete, and represent progress toward the goal. The goal state at the end of the chain, $s_N = s_g$ means that the human has rehabilitated their shoulder. The disengagement state $s_d$ means that the human has disengaged from PT.
    \item \emph{Actions  $\human{a} \in \{0, 1\}$.} The human decides to perform ($\human{a} = 1$) or not perform ($\human{a} = 0$) the goal-directed behavior. In the future, this could be extended to categorical actions. That said, many important applications have binary actions, such as "exercise or not" in PT, "smoke or not" in smoking cessation, and "adhered or not" in medication adherence.
    \item \emph{Rewards.} The human's utility function is the reward, 
    \begin{equation}
        \label{eq: chainworld-reward}
        \human{R}(s, a, s') = \begin{cases}
            r_b, &  a = 1 \\
            r_\ell, &  s' < s \\
            r_g, &  s = s_g \\
            r_d, &  s = s_d . 
        \end{cases}
    \end{equation}
    
    Goal behaviors, such as doing PT, incur a cost representing burden $r_b < 0$. Similarly, losing progress incurs $r_\ell< 0$. The goal and disengagement states have positive utility, $r_g > 0$ and $r_d > 0$. 
    \item \emph{Transitions.} The human knows that there is $p_g$ probability that they will move toward the goal as a result of the behavior, $p_\ell$ probability that they will lose progress from abstaining, and $p_d$ probability that they will disengage from abstaining. These probabilities are fixed across states, except for the first state $s_0$, which has a separate probability of disengagement $p_{d0} \ge p_d$. 
    \item \emph{Discount.} The human exponentially discounts future rewards via $\human{\gamma} \in [0, 1)$. We leave other behaviorally relevant forms of discounting, such as hyperbolic discounting \citep{fedus2019hyperbolic}, as future work. 
    \item \emph{Effect of AI interventions.} 
    When $\ai{a} = a_\gamma$ the human's discount $\human{\gamma}$ \emph{increases} by $\Delta_\gamma > 0$, and when $\ai{a} = a_b$ the human's burden $r_b < 0$ \emph{decreases} by $\Delta_b$. We clip $\human{\gamma} + \Delta_\gamma$ to be between $0$ and $1$.  
\end{itemize}

Each individual is an instance of the chainworld, $\chainworld \in \simplechains,$ with parameters $\theta$ = $\{r_b$, $r_\ell$, $r_g$, $r_d$, $p_g$, $p_\ell$, $p_d$, $p_{d0}$, $\human{\gamma}$, $\Delta_\gamma$, $\Delta_b \}$. For example, some people tend to prioritize short-term rewards (with a low $\human{\gamma}$) while others prioritize long-term rewards (with a high $\human{\gamma}$). The parameters $\theta$ must be inferred by the AI.

\vskip0.15cm \emph{Closed-form Solutions for Human Policies in Chainworlds.}
Chainworlds are inspectable to behavioral experts because there is an analytical solution for the optimal value function (all derivations in \cref{appendix: solving}). 
For a chainworld MDP $\chainworld \in \simplechains$, the optimal value function maximizes between the value of a policy that always pursues the goal, $\pi_g(s_n) = 1$, and the value of a policy that always chooses to disengage, $\pi_d(s_n) = 0$, where $s_n$ for $n \in {0, \ldots, N}$ refers to the $n$-th state on the chain. 
The value of goal pursuit is,
\begin{align}
    V^{\pi_g}_\theta (s_n) =  r_g \left (\frac{\gamma p_g }{z}\right)^{N - n} + r_b \left(\frac{1 - (\gamma p_g / z)^{N - n}}{1 - \gamma} \right),
\end{align}
where $z = 1 - \gamma(1 - p_g)$. 
The value of goal pursuit, $V^{\pi_g}_\theta (s_n)$, trades off between the long-term utility of the goal (the $r_g$ term) and the burden one accumulates to get there (the $r_b$ term). 
The value of disengagement is, 
\begin{align} 
\begin{split}
    V^{\pi_d}_\theta (s_n) = r_d \left (\frac{\gamma\ p_{d0}}{v} \right)\left (\frac{p_\ell\ \gamma}{u} \right )^n + (\gamma\ p_d\ r_d + p_\ell\ r_\ell) \left( \frac{1 - (\gamma p_\ell / u)^n}{1 - \gamma(1 - p_d)} \right),
\end{split}
\end{align}
where $v = 1 - \gamma (1 - p_{d0})$ and $u = 1 - \gamma(1 - p_d - p_\ell)$. The first term in the equation (with $r_d$), represents the value of disengagement from state $0$, after having lost all prior progress. The second term represents the value of disengagement after state $0$, which factors in the cost of disengagement $r_d$ and of losing progress $r_\ell$.

These equations allow us to hypothesize about the diverse space of AI actions that will encourage the human towards the goal, such as actions to increase the human's level of motivation (increasing $r_g$) or that highlight the consequences of quitting (decreasing $r_d$).

\subsection{Different humans yield different AI policies}
At this point, we have fully specified an AI MDP as defined in \cref{sec: ai-agent-mdp}, in which the human MDP is a chainworld $\chainworld \in \simplechains$. Solving this AI MDP will yield an optimal AI policy, which is the best intervention plan for a given human with parameters $\theta$. Importantly \cref{fig: ai-policy-differs} demonstrates that personalization is necessary because humans with different $\theta$ require different optimal AI policies. 

\begin{figure}[ht]
    \centering
    \begin{subfigure}{1\linewidth}
         \centering              \includegraphics[width=0.85\linewidth]{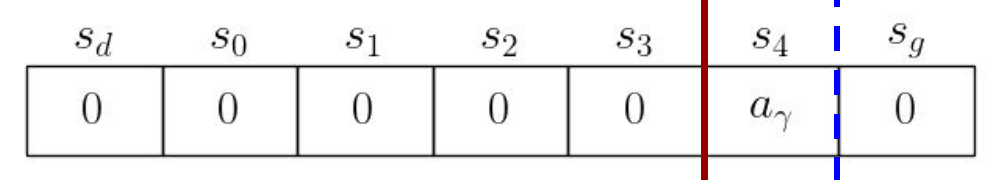}
         \caption{Highly myopic human ($\boldsymbol{\gamma = 0.1}$) with high burden ($\boldsymbol{r_b = -2}$).}
         \label{fig: ai-policy-user-1}
     \end{subfigure}
    \begin{subfigure}{1\linewidth}
         \centering              \includegraphics[width=0.85\linewidth]{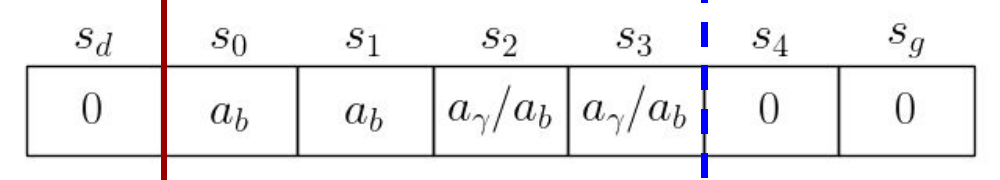}
         \caption{Highly myopic human ($\boldsymbol{\gamma = 0.1}$) with low burden ($\boldsymbol{r_b = -0.3}$).}
         \label{fig: ai-policy-user-2}
     \end{subfigure}%
    \caption{Example of different optimal AI policies for two humans with different chainworld parameters. \textnormal{Each square is a chainworld state. An $a_b$ means AI should select action to reduce $r_b$, while $a_\gamma$ means AI should select action to increase $\gamma$. Red solid and blue dotted lines show start and end of intervention window.}}
    \label{fig: ai-policy-differs}
\end{figure}

\section{Theoretical analysis: When is chainworld good enough?}
\label{sec: theoretical-equivalence-classes}
In this section, we define an \emph{equivalence class} of more complex human MDPs for which an AI agent that plans with the chainworld can still learn the optimal policy.  

\begin{definition}[AI equivalence of human MDPs]
\label{def: ai-equivalence}
We consider two human MDPs $\human{\mdp} \equiv \human{\widehat{\mdp}}$ under state mapping $f: \human{\mathcal{S}} \rightarrow \human{\widehat {\mathcal{S}}}$ and action mapping $g_s: \human{\mathcal{A}} \rightarrow \human{\widehat {\mathcal{A}}}$ if the corresponding optimal AI policies are equal, so that $\ai{\pi}^*\left(\ [s_h, a_h]\ \right) = \ai{\widehat{\pi}}^*\left(\ [f(s_h), g_{s_h}(a_h)] \ \right)$ for all $[s_h, a_h] \in \ai{\mathcal{S}}$. 
\end{definition}

The state mapping $f$ and (state-specific) action mapping $g_s$ let us map from the state and action space of the one MDP to the other. 
In terms of the chainworld, our definition states that if the optimal AI action in the chainworld MDP is the same as the optimal AI action in the true MDP for all states (after applying the mappings), then the two are equivalent. 

Our equivalence in \cref{def: ai-equivalence} is not as strict as the homomorphisms equivalence. Unlike homomorphisms, we \emph{do not} seek human MDPs that have the same rewards and transitions as chainworld. In fact, we do not even seek MDPs that result in the same optimal human policy as chainworld. Instead, we only care that the two human MDPs are similar enough to result in the same \emph{optimal AI policy.}  As a result, we get the largest set of human MDPs that admits simple planning of optimal interventions by the AI agent.


\subsection{Optimal AI policies for chainworld MDPs}
Under \cref{def: ai-equivalence}, the class of MDPs that is equivalent to chainworlds is determined by the space of AI policies that chainworlds can express. In this section, we show that all chainworld MDPs $\chainworld \in \simplechains$ result in AI optimal policies that follow a ``three-window format,'' which we refer to as $\policyclass$. 
Throughout this section, we describe the AI policy in terms of the chainworld states, $s_n$, where $n$ refers to $n$-th state on the chain; even though the \emph{previous} human actions are technically part of the AI state, they do not affect the \emph{best current} action in the AI's optimal policy.

A ``three-window'' AI policy consists of: window 1 (no intervention is effective enough to make human perform the behavior), window 2 (intervention window), and window 3 (human performs behavior without intervention). Two examples are in \cref{fig: ai-policy-differs}.
The size of these windows varies and may even be $0$. For example, if the interventions have no effect ($\Delta_\gamma = 0, \Delta_b = 0$) then the intervention window will be size $0$. 
The three windows are a consequence of how the AI's action affects the human's optimal policy; when the AI agent intervenes on the human, it changes the human's MDP parameters, which in turn, might change the human's optimal policy. 

To succinctly describe the human's optimal policy, we introduce ``human thresholds'' $t$ in \cref{def: threshold}; when the human is in a state past the threshold, their optimal policy is to pursue the goal. A human with a smaller threshold $t$ will pursue the goal from farther away. An effective AI action is one that moves the threshold $t$ to a state \emph{preceding} the human's current state, so that the human chooses to move.

\begin{definition}[Human threshold]
\label{def: threshold}
For a chainworld $\chainworld \in \simplechains$, define $t \in \{0, \ldots, N - 1\}$ as the threshold where $\pi_\theta^*(s_n) = 0$ for  $n \le t$ and $\pi_\theta^*(s_n) = 1$ for $n > t$. 
\end{definition}

Even if the AI agent \emph{can} intervene to prompt the human toward the goal, whether or not the optimal AI \emph{does} intervene depends on the configuration of the AI rewards. If intervening has negligible cost, then the AI agent will intervene as soon as it is able. On the other hand, if there is a high cost, then the AI agent will wait until the human is closer to the goal, to minimize the total number of interventions needed. We define AI threshold $\ai{t}$ below, as the point at which the reward of reaching the goal outweighs the cost of interventions required to reach it:
\begin{definition}[AI threshold]
\label{def: ai-threshold}
For a human chainworld $\chainworld \in \simplechains$ and AI MDP $\ai{\mdp}$, define AI threshold $\ai{t} \in \{0, \ldots, N-1\}$ as the chainworld state in which the value of the goal is greater than the value of disengagement. For states $s_n$ where $n > \ai{t}$, the AI values are $\ai{V}^{\pi_g}(s_n) > \ai{V}^{\pi_d}(s_n)$, and for states where $n \le \ai{t}$, the AI values are $\ai{V}^{\pi_g}(s_n) \le \ai{V}^{\pi_d}$. 

\end{definition}

\noindent The human and AI thresholds define the intervention windows for the AI policy in \cref{thm: chainworld-policies}. 

\begin{theorem}[Chainworld AI policies]
\label{thm: chainworld-policies}
Suppose we are given: 
\begin{itemize}
    \item An AI MDP $\ai{\mdp} = \langle \ai{\mathcal{S}}, \ai{\mathcal{A}}, \ai{T}, \ai{R}, \ai{\gamma} \rangle$, where the actions are to do nothing ($\ai{a} = 0$), intervene on the discount ($\ai{a} = a_\gamma$), or to intervene on burden ($\ai{a} = a_b$)
    \item A human MDP $\chainworld \in \simplechains$, which results in human thresholds $\human{t}^0$, $\human{t}^\gamma$, and $\human{t}^b$ under AI actions $0$, $a_\gamma$, and $a_b$, respectively 
\end{itemize}

\noindent Let $\tmin = \min\left \{\human{t}^0, \human{t}^\gamma, \human{t}^b\right \}$ denote the earliest human threshold as a result of \emph{any AI action}. Let $\ai{t}$ denote the AI intervention threshold, as in \cref{def: ai-threshold}. 
Then, the optimal AI policy is,
\begin{equation}
\label{eq: chainworld-ai-policy}
    \ai{\pi}^*(s_n) = \begin{cases}
        0, & n \le \tmin\\
        0, & \tmin < n \le \ai{t}\\
        a_\gamma, & \max\{\ai{t}, \human{t}^\gamma\} < n \le \human{t}^0\\
        a_b, &\max\{\ai{t}, \human{t}^b\} < n \le \human{t}^0\\
        0, & n > \human{t}^0,
\end{cases}
\end{equation}
and $\ai{\pi}^*$ belongs to the three-window policy class, $\policyclass$. 
\end{theorem}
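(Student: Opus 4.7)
The plan is to proceed by case analysis on the state $s_n$, leveraging the closed-form expressions for $V^{\pi_g}_\theta$ and $V^{\pi_d}_\theta$ derived in \cref{sec: chain-worlds}, and verifying that the proposed three-window policy satisfies the Bellman optimality equation for $\ai{\mdp}$. The first preparatory step is to characterize the human's one-step response to each AI action: since interventions are temporary, applying $\ai{a}$ at $s_n$ amounts to having the human act optimally in a chainworld with parameters momentarily shifted by $\Delta_\gamma$ or $\Delta_b$. By the monotonicity of $V^{\pi_g}_\theta - V^{\pi_d}_\theta$ in $n$, this yields a single human threshold $\human{t}^{\ai{a}}$ per action, with $\human{a}=1$ iff $n > \human{t}^{\ai{a}}$. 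This reduces the AI's problem to choosing, at each state, which threshold to effectively induce for one step.

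Next I would dispose of the two ``outer'' windows by dominance. For $n \le \tmin$, none of the three AI actions changes the human's behavior (by definition of $\tmin$), so $\ai{T}(\ai{s}, \ai{a}, \cdot)$ is identical across all $\ai{a}$, while $\ai{R}$ strictly favors $\ai{a} = 0$ because intervening incurs cost; hence $\ai{\pi}^*(s_n)=0$. For $n > \human{t}^0$, the human already takes the goal behavior without assistance; the closed-form expressions in \cref{sec: chain-worlds} imply the same holds at every subsequent $s_{n'}$ with $n' > n$, so the entire future trajectory is unchanged by intervening, and again $\ai{a} = 0$ strictly dominates due to intervention cost.

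The heart of the argument is the middle region $\tmin < n \le \human{t}^0$, where intervention is \emph{feasible} but not necessarily \emph{optimal}. Here I would invoke \cref{def: ai-threshold}: for $\tmin < n \le \ai{t}$, $\ai{V}^{\pi_g}(s_n) \le \ai{V}^{\pi_d}(s_n)$, so even accounting for the positive reward at $s_g$, the cumulative intervention cost needed to drive the human along the goal trajectory outweighs the value of the natural disengagement trajectory. Thus $\ai{\pi}^*(s_n)=0$ and the human disengages on their own. For $\max\{\ai{t}, \human{t}^\gamma\} < n \le \human{t}^0$ (respectively $\max\{\ai{t}, \human{t}^b\} < n \le \human{t}^0$), $a_\gamma$ (resp.\ $a_b$) successfully flips the human's action and the goal trajectory is preferable; a one-step Bellman comparison then shows that intervening weakly dominates doing nothing, with ties broken by the relative intervention costs in $\ai{R}$.

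To close the argument, I would verify self-consistency: after a successful intervention at $s_n$, the human transitions to $s_{n+1}$ with probability $p_g$, where the proposed policy again prescribes either intervention or $\ai{a}=0$ depending on whether $n+1 \le \human{t}^0$. Monotonicity of the human thresholds in the chainworld guarantees that this recursion stays within the three-window format, so the candidate policy lies in $\policyclass$ and satisfies Bellman optimality at every state. The main obstacle will be the middle window: showing that no one-step deviation inside $(\ai{t}, \human{t}^0]$ improves value requires carefully pairing the recursive cost of repeated interventions (until the human crosses $\human{t}^0$) against the discounted goal reward — exactly the trade-off encoded by \cref{def: ai-threshold}. Getting this bookkeeping right, using the closed forms for $\vgoal$ and $\vdis$ to make the inequalities explicit, is where the proof's substantive work will lie.
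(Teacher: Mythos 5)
Your plan follows essentially the same route as the paper's proof: dispose of the outer windows ($n \le \tmin$ and $n > \human{t}^0$) by noting that intervention cannot change the human's trajectory there and only incurs the cost $\ai{r}^i < 0$, then settle the middle window by comparing the value of the best goal-directed policy (discounted goal reward minus accumulated intervention costs) against the value of the best do-nothing/disengagement policy, with \cref{def: ai-threshold} supplying exactly that crossover point. The paper phrases the outer-window steps as proofs by contradiction rather than dominance and writes out the two trajectory values explicitly, but the decomposition, the key use of the human thresholds and the AI threshold, and the tie-breaking by intervention cost are all the same.
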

The proof is in \cref{appendix: proof-ai-policy-chainworld}. 
Note that if both $a_b$ and $a_\gamma$ are valid options in the intervention window (when $\ai{t} < n \le \human{t}^0$), then the AI agent will prefer the less expensive intervention. 
Theorem \ref{thm: chainworld-policies} shows that every chainworld results in an optimal AI policy belonging to $\policyclass$. Theorem \ref{thm: window-based-equivalence} shows the reverse; for any human MDP whose corresponding AI policy is $\ai{\pi} \in \policyclass$, there exists a chainworld MDP whose AI policy is also $\ai{\pi}$.

\begin{theorem}[Chainworld equivalence class]
    \label{thm: window-based-equivalence}
    If human MDP $\human{\mathcal{M}}$ has corresponding AI policy $\ai{\pi} \in \policyclass$, then $\exists \theta$ for $\chainworld \in \simplechains$ such that $\chainworld \equiv \human{\mdp}$. 
\end{theorem}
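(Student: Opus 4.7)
The plan is a direct construction. Since $\ai{\pi} \in \policyclass$, by hypothesis there is an ordering of the states of $\human{\mdp}$ that partitions them into three contiguous blocks of lengths $k_1, k_2, k_3$, in which $\ai{\pi}$ does nothing, intervenes (with $a_\gamma$, $a_b$, or some mix of the two), and does nothing, respectively. I will set $N = k_1 + k_2 + k_3$ and build the chainworld $\chainworld \in \simplechains$ with $N$ non-absorbing progression states. The state mapping $f$ sends the $i$-th state of $\human{\mdp}$ in the window ordering to chainworld state $s_{i-1}$, and the absorbing states of $\human{\mdp}$ to $s_g$ and $s_d$; the state-indexed action mapping $g_s$ sends, at each state $s$, the ``goal-directed'' human action to chainworld action $1$ and every other human action to $0$.

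By \cref{thm: chainworld-policies}, the AI policy of any chainworld is determined entirely by the three numbers $\human{t}^0$ (no-intervention human threshold), $\human{t}^{a^*}$ (intervened human threshold for the active intervention $a^*$), and $\ai{t}$ (AI threshold). To reproduce the target window layout, I need $\theta$ such that $\human{t}^0 = k_1 + k_2 - 1$, $\human{t}^{a^*} \le k_1 - 1$, and $\ai{t} = k_1 - 1$. Existence of such $\theta$ will follow from the closed-form expressions for $V^{\pi_g}_\theta$ and $V^{\pi_d}_\theta$ in \cref{sec: chain-worlds}: $\human{t}^0$ moves continuously and monotonically as $\human{\gamma}$ (or the ratio $r_g/|r_b|$) is varied, ranging from $N-1$ when $\human{\gamma}=0$ down to $0$ as $\human{\gamma}\to 1$, so an intermediate-value argument on the sign of $V^{\pi_g}_\theta(s_n)-V^{\pi_d}_\theta(s_n)$ places it at any desired integer. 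A sufficiently large $\Delta_{a^*}$ then drives $\human{t}^{a^*}$ arbitrarily close to $0$. Finally, $\ai{t}$ is controlled by the transition parameters $p_d, p_{d0}, p_\ell$, which enter $\ai{T}$ and reshape the AI's value of disengagement without directly altering the goal-pursuit arithmetic that governs $\human{t}^0$.

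The hard part will be that $\theta$ jointly parameterizes all three thresholds, so naive single-coordinate sweeps tend to move more than one threshold at once. I plan to handle this with a cascaded construction that uses nearly-disjoint subsets of $\theta$ per threshold: first fix the rewards and $p_g$ and tune $\human{\gamma}$ to pin $\human{t}^0$; next choose $\Delta_{a^*}$ large enough to guarantee $\human{t}^{a^*}\le k_1-1$ (this knob only activates under intervention, so it leaves $\human{t}^0$ unchanged); finally use $p_\ell, p_d, p_{d0}$ to slide $\ai{t}$ into place, checking via the closed forms that these degrees of freedom perturb $\human{t}^0$ and $\human{t}^{a^*}$ only within their already-satisfied inequalities. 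Once all three targets are met, \cref{thm: chainworld-policies} yields that the AI policy of $\chainworld$ realizes exactly the three-window layout of $\ai{\pi}$ after relabeling by $(f, g_s)$, which is precisely the statement $\chainworld \equiv \human{\mdp}$ per \cref{def: ai-equivalence}.
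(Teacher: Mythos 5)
Your overall strategy matches the paper's: reduce the problem to realizing prescribed threshold values with some setting of $\theta$, using the closed forms for $\vgoal$ and $\vdis$, and then invoke \cref{thm: chainworld-policies} to read off the resulting three-window AI policy. The paper likewise freezes most of $\theta$ (it sets $r_d = r_\ell = 0$, $p_g = 1$, $p_d = p_{d0} = p_\ell = 0$) and derives explicit, nonempty inequality windows on $r_g, r_b, \gamma$ (and then on $\Delta_b$ and $\Delta_\gamma$) whose feasibility reduces to $\gamma < 1$; your intermediate-value sweep over $\gamma$ to place $\human{t}^0$, followed by a large $\Delta_{a^*}$ to push the intervened threshold down, is the same idea in a different guise.

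The genuine gap is in your treatment of $\ai{t}$. You make it a third independent target and propose to dial it in with $p_\ell, p_d, p_{d0}$, asserting that these parameters reshape the AI's value of disengagement ``without directly altering the goal-pursuit arithmetic that governs $\human{t}^0$.'' That is not true: $\human{t}^0$ is defined by the sign change of $\vgoal(s_n) - \vdis(s_n)$, and $\vdis$ depends directly on $p_\ell, p_d, p_{d0}$ (they appear throughout its closed form), so the final stage of your cascade moves the quantity your first stage pinned down --- and $\human{t}^0$ must land on an exact integer, so ``perturbing only within already-satisfied inequalities'' does not rescue it. The paper sidesteps this entirely by observing that $\ai{t}$ is a \emph{derived} quantity, determined by the human thresholds together with the given AI MDP, so it suffices to show that $\theta$ can realize all possible values of $\human{t}^0, \human{t}^\gamma, \human{t}^b$; no separate knob for $\ai{t}$ is needed. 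If you drop the third stage of your cascade and adopt that reduction, the rest of your construction goes through essentially along the paper's lines.
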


\noindent Proof in \cref{appendix: proof-chainworld-equivalence-class}.
Theorem \ref{thm: window-based-equivalence} means that \emph{any human MDP} that results in a three-window AI policy---that is, consists of three regions: impossible to help, can be helped by the AI, and does not need help--- belongs to the chainworld equivalence class. 
In \cref{sec: empirical}, we will show that the AI agent can plan interventions using chainworld as a substitute for another human MDP in the same class, without any loss in performance.

\subsection{Realistic human models that are equivalent to chainworld}
\label{sec: theoretically-equivalent-worlds}
Ultimately, we care that the chainworld equivalence class contains \emph{realistic} models of humans that align with the behavioral literature. In this section, we provide examples of human MDPs that capture a meaningful behavior not covered by chainworlds, yet whose optimal AI policy is still in the equivalence class $\policyclass$. 

\vskip0.15cm \noindent \emph{\textbf{Monotonic chainworlds.}}
In monotonic chainworlds, the closer one gets to the goal, the higher the relative value of pursuing it. 

\begin{definition}[Monotonic chainworlds]
    \label{def: monotonic-chains}
    For a monotonic chainworld $\mdp$, the value of goal-pursuit increases closer to the goal: $\vgoal(s_n) - \vdis(s_n)$ $\le \vgoal(s_{n+1}) - \vdis(s_{n+1})$ for all states $n = 1, \ldots, N-1$. 
\end{definition}
For example, consider chainworlds in which the probability of disengagement $p_d$ decreases the closer the agent is to the goal (the human is less likely the quit the closer they are to recovery). Monotonic chainworlds relate to the goal-gradient hypothesis, which states that motivation to reach a goal increases with proximity \citep{mutter2014GoalGradient}.
In \cref{appendix: monotonic-proof}, we prove that all monotonic chainworlds are AI equivalent to our chainworld.

\vskip0.15cm \noindent \emph{\textbf{Progress worlds.}}  Progress worlds, while potentially multi-dimensional, have a one-dimensional notion of progress.

\begin{definition}[Progress worlds]
\label{def: progress-worlds}
Suppose $\mdp$ is a $D$ dimensional, path-connected graph with an absorbing goal state $s_g$, an absorbing disengagement state $s_d$, and actions that allow movement between states on the graph. 
Let $d(s, s')$ denote the shortest graph distance from $s$ to $s'$. $\mdp$ is a progress world if $d(s, s_d) = d(s', s_d)$ and $d(s, s_g) = d(s', s_g)$ for all pairs of $s, s' \in \mathcal{S}$. 
\end{definition}

In our PT example, ``progress'' may depend on a combination of metrics such as joint strength, the ability to perform daily tasks, and so on. We show in \cref{appendix: progress-proof} that worlds in which states can be mapped to a one-dimensional distance are equivalent to our chainworlds. This type of equivalence is simple yet useful, as it lets us reduce high-dimensional worlds to a single dimension of interest. 
Definition \ref{def: progress-worlds} restricts us to graphs 
in which all shortest paths between the disengagement and goal state are of the same length. Intuitively, this means that a single chainworld can represent all paths (and therefore, the entire world). Though not all graphs are progress worlds, in our empirical experiments, we test the chainworld AI's robustness to graphs that break this definition.

\vskip0.15cm \noindent \emph{\textbf{Multi-chain worlds.}} In multi-chain worlds, there is a principle dimension that corresponds to progress toward the goal (as in our simple chainworld) but there may be several additional dimensions associated with different ways of dropping out.  


\begin{definition}[Multi-chain worlds]
    A multi-chain world $\mdp$ consists of $C$ chains, each of length $N_c$.     
    The first chain, $c = 0$, is the \emph{goal chain}; when the human reaches the end of this chain, they have reached the goal. The remaining chains, $s_1, \ldots, s_{C-1}$, are disengagement chains; when the human reaches the end of \emph{any} of these chains, they disengage. When $a = 1$, the human moves along the goal chain with probability $p_0$ while staying still in the disengagement chains. When $a = 0$, the human stays still in the goal chain and (independently) moves along each of the $c$ disengagement chains with probability $p_c$. 
\end{definition}

In our PT example, the principle chain might still correspond to the overall strength of the joint as a measure of progress toward recovery.  Additional chains, corresponding to the level of motivation, level of pain, etc., may all represent mechanisms that cause disengagement.  This form of multi-chain reflects how disengagement is described in the behavioral literature (e.g. \citep{moshe2022predictorsDropoutBack, moroshko2011predictorsDropoutWeight}). In \cref{appendix: multi-chain-case-A} we show equivalence to multi-chain worlds whose disengagement chains are of length $2$, which corresponds to real-world situations in which one of many factors can abruptly trigger disengagement at any point (e.g. the PT patient is re-injured). 

\vskip0.15cm \noindent \emph{\textbf{Negative effect worlds}.} These are chainworlds in which the AI intervention has the opposite intended effect on the human. 

\begin{definition}[Negative effect worlds]
A negative effect world $\mdp$ is defined exactly as the chainworld, except that $\Delta_\gamma < 0$ (AI intervention on discount $\human{\gamma}$ \emph{decreases} it) or $\Delta_b > 0$ (AI intervention on burden $r_b$ \emph{increases} it). 
    
\end{definition}

The efficacy of a behavioral intervention is known to vary by individual (e.g. \citep{bryan2021behaviouralHetero}). In  \cref{appendix: negative-ai-effect-proof}, we prove that negative effect worlds result in AI policies that correspond to chainworlds where the intervention has \emph{no effect} (i.e. $\Delta_\gamma = 0$ and $\Delta_b = 0$). 

\section{Empirical Analysis: Testing Robustness of Chainworld}
\label{sec: empirical}
We test how AI planning using chainworld benefits performance, especially as we remove our assumptions and make the \emph{true} human model dissimilar to chainworld. 

\subsection{Setup}
All experiments are over $200$ trials of $15$ episodes each, and each trial corresponds to a human whose MDP parameters $\theta$ are sampled.
Not all settings of $\theta$ correspond to individuals that can reach their goal---for example, consider a human whose burden is so high that no AI intervention can make them act.  Here, we report results for the subset of sampled humans that can reach the goal under the \emph{oracle AI policy}. Doing so preserves the relative ordering of method performances and reduces noise; in \cref{fig: helpless-keep-vs-remove} we give an example of results that include individuals who never reach the goal.

\vskip0.15cm \emph{Baselines.}
Our baselines are ways to learn the AI policy online. Using data $\ai{\mathcal{D}} = \{(\ai{s}, \ai{a}, \ai{s'}, \ai{r})\}$,  the \emph{\textbf{model-free}} approach directly estimates $\ai{Q}^*$ via Q-learning. 
The \emph{\textbf{model-based}} method estimates $\ai{T}$ using the observed transitions and then solves for $\ai{\pi}^*$ with certainty equivalence. 
Both approaches bypass the need for explicitly solving for a human policy. 
The \emph{\textbf{always $\boldsymbol{\gamma}$}} and \emph{\textbf{always $\boldsymbol{B}$}} are ``no personalization,'' in which the AI policy is to always intervene on $\gamma$ and $B$, respectively. Our method, \emph{\textbf{chainworld}}, estimates the parameters $\theta$ from $\ai{\mathcal{D}}$.

\subsection{Results under \emph{no model misspecification}}

\textbf{In perfect conditions, the AI agent can use chainworld to reach oracle-level performance in the fewest episodes.} When the true human matches our inductive bias, i.e. both are chainworlds, we achieve the fastest personalization in \cref{fig: res-perfect-conditions}. In contrast, model-free requires hundreds of episodes before it learns policies that are better than random (which we demonstrate in \cref{fig: model-free} of the appendix).

\begin{figure}[ht]
    \centering
    \includegraphics[width=1\linewidth]{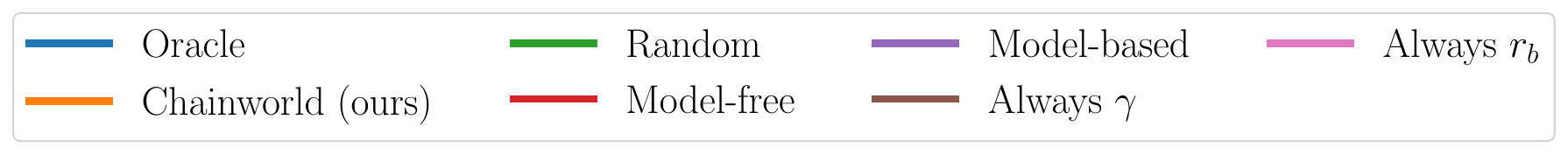}
    \includegraphics[width=0.7\linewidth]{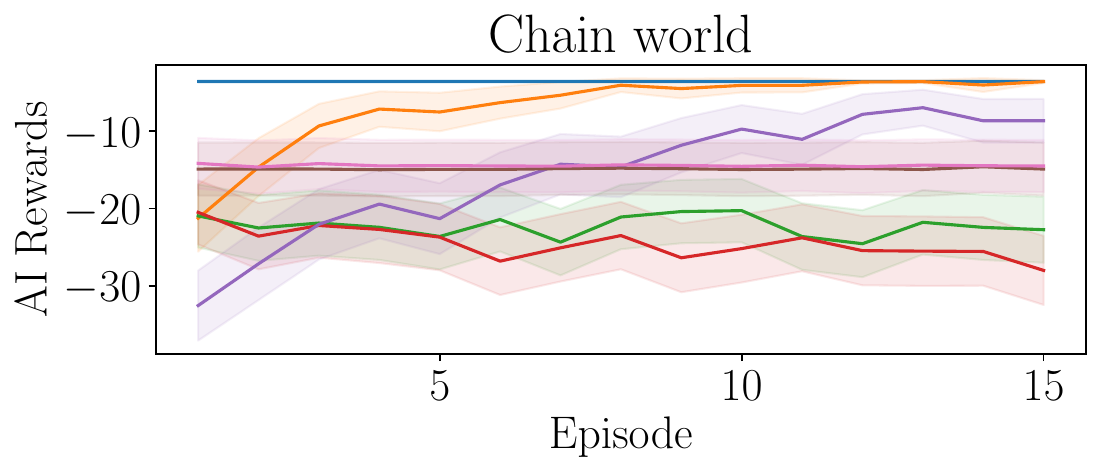}
    \caption{When the true human model is a chainworld, our method rapidly personalizes. \textnormal{Plot is AI rewards (y-axis) over multiple episodes (x-axis). Lines in upper-left personalize quicker.}}
    \label{fig: res-perfect-conditions}
\end{figure} 

\begin{figure*}[ht]
    \centering
    \includegraphics[width=0.7\linewidth]{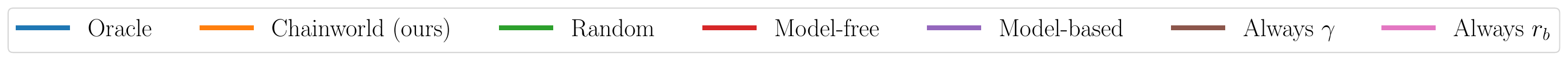}
    \begin{subfigure}{0.15\linewidth}
         \centering              \includegraphics[width=0.95\linewidth]{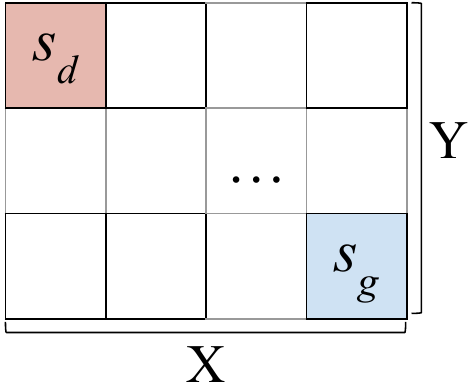}
         \caption{}
         \label{fig: big-small-world}
     \end{subfigure}%
    \begin{subfigure}{0.24\linewidth}
         \centering              \includegraphics[width=0.9\linewidth]{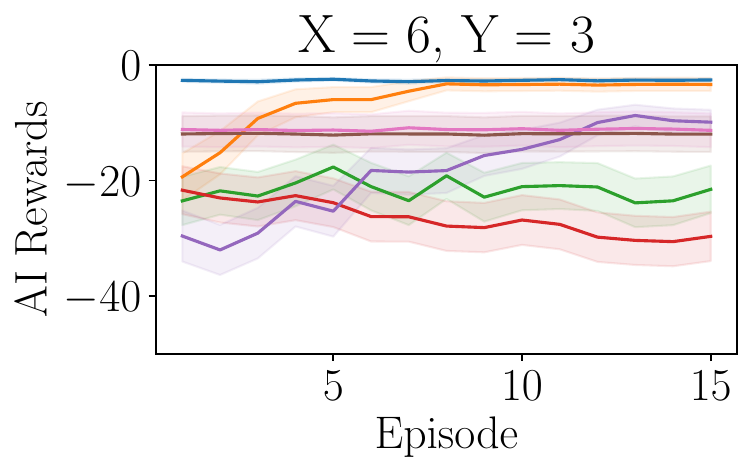}
         \caption{18 total states}
     \end{subfigure}%
     \begin{subfigure}{0.24\linewidth}
         \centering              \includegraphics[width=0.9\linewidth]{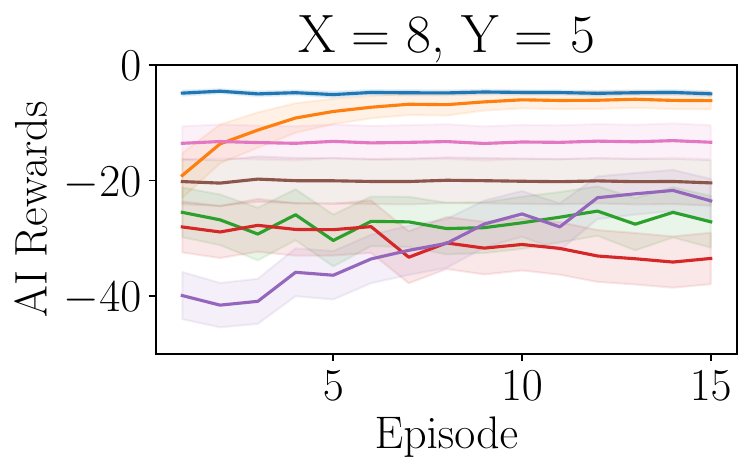}
         \caption{40 total states}
     \end{subfigure}%
    \begin{subfigure}{0.24\linewidth}
         \centering              \includegraphics[width=0.9\linewidth]{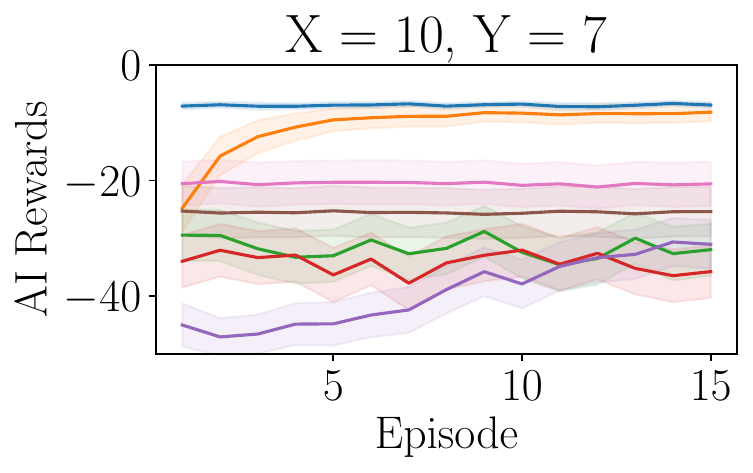}
         \caption{70 total states}
     \end{subfigure}
    \caption{Chainworld scales to large gridworlds. \textnormal{Example gridworld on left. Going right, the grid's width (X) and height (Y) increases.}}
    \label{fig: exact-mapping-invariance}
\end{figure*} 
\begin{table*}[ht]
\centering
\begin{tabular}{l |l| l l | l l}
Assumption & Equiv? & \multicolumn{2}{c}{Low misspecification}  & \multicolumn{2}{c}{High misspecification} \\ \toprule 
& & Chainworld (ours) & Top baseline & Chainworld (ours) & Top baseline\\ \toprule 
Noise in burden $r_b$ 
    & No  
    & $\boldsymbol{-14.47 \pm 3.63}$ 
    & $\boldsymbol{-14.43 \pm 3.63}$ 
    & $\boldsymbol{-35.96 \pm 3.36}$ 
    & $\boldsymbol{-33.43 \pm 3.4}$\\
Noise in utility of goal $r_g$ 
    & No  
    & $\boldsymbol{-5.53 \pm 1.71}$ 
    & $-14.76 \pm 3.38$ 
    & $\boldsymbol{-6.9 \pm 2.22}$ 
    & $-14.66 \pm 3.34$\\
Noise in utility of progress loss $r_\ell$          
    & No  
    & $\boldsymbol{-5.97 \pm 1.94}$ 
    & $-14.78 \pm 3.39$  
    & $\boldsymbol{-11.01 \pm 3.29}$ 
    & $\boldsymbol{-15.43 \pm 3.54}$\\
Noise in utility of disen. $r_d$               
    & No  
    & $\boldsymbol{-8.08 \pm 2.58}$
    & $-15.18 \pm 3.44$ 
    & $\boldsymbol{-13.38 \pm 3.54}$ 
    & $\boldsymbol{-14.63 \pm 3.41}$ \\
Noise in prob. of disen. $p_d$ 
    & No  
    & $\boldsymbol{-5.03 \pm 1.46}$ 
    & $-14.78 \pm 3.39$ 
    & $\boldsymbol{-6.41 \pm 2.45}$ 
    &$\boldsymbol{-12.13 \pm 4.05}$\\
Noise in prob. of disen. at state 0, $p_{d0}$           
    & No  
    & $\boldsymbol{-5.8 \pm 1.86}$ 
    & $-14.81 \pm 3.4$   
    & $\boldsymbol{-5.83 \pm 1.86}$ 
    & $-14.36 \pm 3.3$ \\      
Noise in prob. of losing progress $p_\ell$
    & No  
    & $\boldsymbol{-5.05 \pm 1.51}$ 
    & $-14.78 \pm 3.39$ 
    & $\boldsymbol{-5.19 \pm 1.81}$ 
    & $-13.38 \pm 4.13$ \\
Noise in prob. of making progress $p_g$
    & No  
    & $\boldsymbol{-5.82 \pm 1.77}$ 
    & $-15.24 \pm 3.49$ 
    & $\boldsymbol{-19.38 \pm 4.34}$ 
    & $\boldsymbol{-17.85 \pm 3.72}$ \\
Noise in discount $\human{\gamma}$ 
    & No  
    & $\boldsymbol{-7.75 \pm 2.42}$ 
    & $-15.83 \pm 3.56$ 
    & $\boldsymbol{-20.7 \pm 4.03}$ 
    & $\boldsymbol{-21.19 \pm 3.93}$ \\
Params. fixed across states                              
    & Yes 
    &   ---  
    &  ---    \\
Mapping many dimensions to chainworld                  
    & Yes 
    &  ---    
    & ---   \\
Wrong distance to goal in mapping                           
    & No  
    & $\boldsymbol{-21.18 \pm 3.84}$
    & $\boldsymbol{-15.62 \pm 3.15}$
    & $-35.8 \pm 3.8$ 
    & $\boldsymbol{-24.52 \pm 3.3}$\\
Wrong distance to disengagement in mapping                  
    & No  
    & $\boldsymbol{-10.11 \pm 2.44}$ 
    & $\boldsymbol{-15.62 \pm 2.44}$   
    & $\boldsymbol{-27.27 \pm 3.86}$ 
    & $\boldsymbol{-24.52 \pm 3.3}$ \\
Diseng. from multiple factors                        
    & Yes     
    & ---    
    & ---   \\
Human selects actions non-optimally                  
    & No  
    & $\boldsymbol{-7.23 \pm 2.27}$ 
    & $-16.01 \pm 4.01$ 
    & $\boldsymbol{-24.27 \pm 3.85}$ 
    & $\boldsymbol{-23.39 \pm 3.68}$ \\
AI intervention has negative effect    
    & Yes 
    &  
    &                   
\end{tabular}
\caption{
Reward earned by the AI in episode six. Each row is an assumption violated by the environment. 
Chainworld is better than or within 95\% confidence interval of the top-performing baseline (out of five total baselines) in all but one setting. \textnormal{Conditions marked with ``yes'' in the ``Equivalence?'' column were shown in \cref{sec: theoretically-equivalent-worlds} to preserve theoretical equivalence under misspecification.}}
\label{tab: assumptions}
\end{table*}

\textbf{Our method's performance scales to high-dimensional human models equivalent to the chainworld.}
In the prior theoretical section, we provided examples of human MDPs that reduce to the chainworld. The gridworld in \cref{fig: big-small-world} is one such world since it is a type of distance world. In \cref{fig: exact-mapping-invariance}, our method still personalizes the fastest in increasingly large state spaces, because the number of chainworld parameters is invariant to the size of the gridworld. On the other hand, model-based degrades; it is \emph{worse} than the personalization-free baselines and the same as random baselines, even after $15$ episodes. This is because the transition matrix that model-based must estimate scales with the size of the gridworld. Model-free approaches are even more inefficient in the 2-D setting than in the 1-D chainworld.

\subsection{Robustness results under \emph{model misspecification}}
\label{sec: robustness-experiments}
In true frictionful settings, the AI agent will encounter humans that are more sophisticated than the chainworld. Our remaining experiments in \cref{tab: assumptions} test if AI performance is robust to misspecification when we remove our assumptions about humans. 
In \cref{sec: theoretically-equivalent-worlds}, we theoretically showed that a subset of these assumptions can be removed without affecting the AI. The remaining assumptions we test empirically, and we show our method is more robust to increasing levels of misspecification than baselines. The definition of ``low'' vs. ``high'' misspecification is specific to the experiment.

\begin{figure*}[ht] 
    \centering
    \includegraphics[width=0.8\linewidth]{figures/res-legend-1-column.pdf}
    \begin{subfigure}{0.3\linewidth}
         \centering              \includegraphics[width=0.8\linewidth]{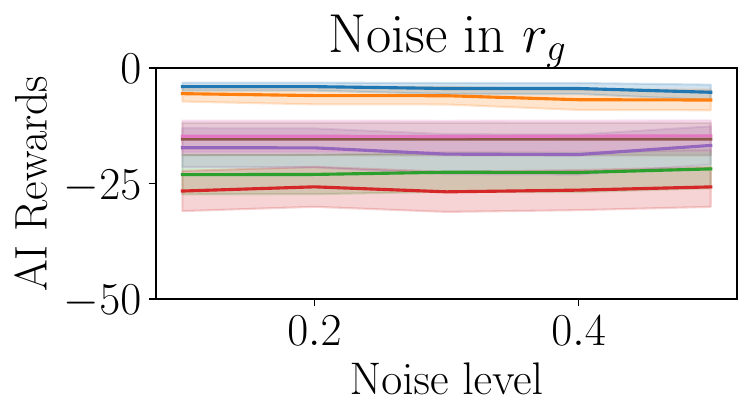}
         \caption{Chainworld robust to low and high mis.}
         \label{fig: noisy-r-g}
     \end{subfigure} \hfill
    \begin{subfigure}{0.3\linewidth}
         \centering              \includegraphics[width=0.8\linewidth]{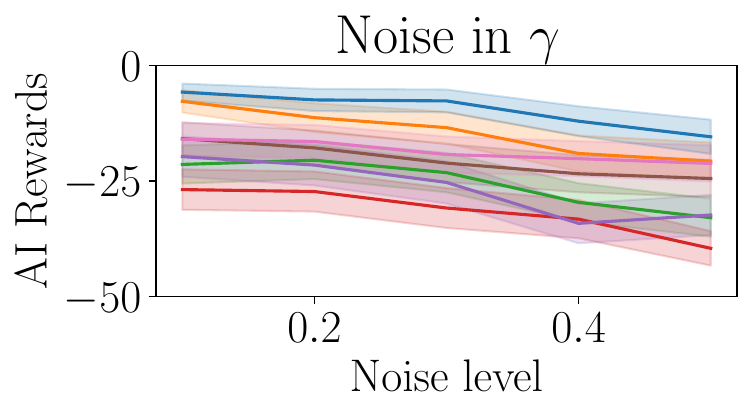}
         \caption{Chainworld is robust to low mis.}
         \label{fig: noisy-gamma}
    \end{subfigure}  \hfill
    \begin{subfigure}{0.3\linewidth}
         \centering              
         \includegraphics[width=0.8\linewidth]{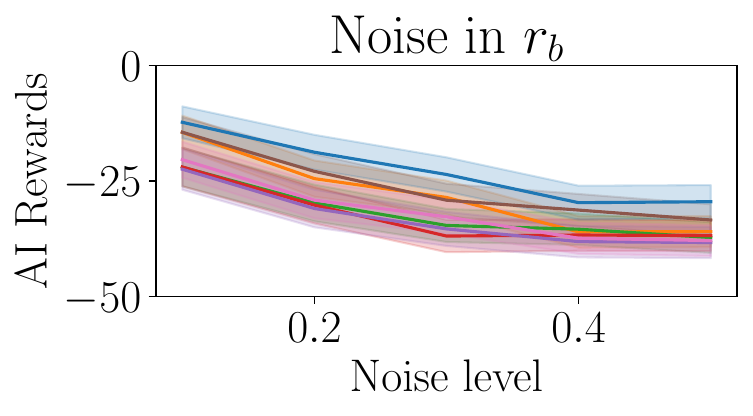}
         \caption{Environment challenges all methods. }
         \label{fig: noisy-r-b}
    \end{subfigure}%
    \caption{Examples of robustness experiments. \textnormal{Chainworld is robust to all levels of misspecification \cref{fig: noisy-r-g}, robust to low levels of misspecification with maintainence at high levels \cref{fig: noisy-gamma}, and all methods, including oracle, struggle to perform well in \cref{fig: noisy-r-b}. Details and plots for all environments in \cref{appendix: environment-descriptions} and  \cref{appendix: full-results}, respectively.}}
    \label{fig: noisy-parameters}
\end{figure*} 

\vskip0.15cm \emph{Experiment on noise in chainworld parameters.}
In this experiment, we test AI performance when the true human model is a chainworld whose parameters vary each timestep due to noise. This mimics situations in which unobservable factors, such as mood, affect parameters, such as burden $r_b$. We vary each parameter in isolation. Our comparison must account for the domains of different parameters, since $\human{\gamma} \in (0, 1)$ while rewards such as $r_b \in \mathbb{R}$. At each timestep, the parameter of interest $x$ is sampled uniformly from $x \sim \text{Uniform}(\bar x -  \epsilon c, \bar x +  \epsilon c)$, where $\bar x$ is the mean parameter value for that individual and the noise level is determined by the parameter range $c$ and the error level $\epsilon \in [0, 1]$. We set parameter range $c = 5$ for reward parameters and to $c = 1$ for transition parameters and $\human{\gamma}$. We define low misspecification as is $\epsilon = 0.1$ and high misspecification as $\epsilon = 0.5$. 

\textit{Experiment on action selection.}
Instead of selecting actions via the optimal policy, humans in this experiment select actions according to softmax policy, $\human{\pi}(a | s) \propto \exp\{\human{Q}(s, a) / \epsilon \}$, where $\epsilon$ is the level of noise. 
We define low misspecification as $\epsilon = 0.05$ and high misspecification as $\epsilon = 0.2$. 

\textit{Experiment on misspecified mapping.}
This experiment tests robustness to differences in \emph{model structure}. The true human is no longer a chainworld, but a gridworld as in \cref{fig: big-small-world}. However, the gridworld in this experiment is no longer equivalent to our chainworld because the goal state $s_g$ is not in the lower-right corner at $[X, 0]$. In fact, the equivalence degrades as $\epsilon$ increases for $[X, \epsilon]$. We set the grid dimensions as $X = 8, Y = 5$ and define low misspecification as $\epsilon = 1$ and high misspecification as $\epsilon = 4$. 

\textbf{We are robust to low levels of misspecification.}
In \cref{tab: assumptions}, our method outperforms baselines in \emph{9 out of 12 robustness experiments} under low levels of misspecification. With high misspecification, when our method is not the best, it falls within two standard errors of the next-best method in all but one condition. 

\textbf{Some humans are difficult to intervene on overall, even for the oracle.}
All methods, including the oracle, earn fewer rewards when the burden parameter $r_b$ is noisy (see \cref{fig: noisy-r-b}). This indicates that it is particularly important to model $r_b$ well in frictionful tasks. For example, we may ensure that features predictive of burden, such as mood, are part of the AI's state space, so that we can estimate $r_b$. 

\textbf{To reduce (non-equivalent) human models to the chainworld, it is important that we capture distance to goal well.}
Since chainworld is one-dimensional, it can only represent worlds whose multi-dimensional states can be mapped to one dimension. 
When such a mapping is not possible, we must choose between capturing progress toward goal (e.g. how far does the patient feel from shoulder recovery?) or distance from disengagement (e.g. how close to giving up does the patient feel?). 
Under the ``wrong distance to goal / disengagement mapping'' condition in \cref{tab: assumptions}, we show that capturing progress toward goal matters more. This implies that chainworlds can still be applied to settings where we cannot model all factors that lead to disengagement, so long as we have an accurate way of measuring the human's progress to the goal. 

\section{Conclusion and Future Work}
In this paper, we introduced Behavior Model Reinforcement Learning (BMRL), a framework for AI agents to intervene on human agents performing frictionful tasks. We proposed a simple model of the human agent-- the chainworld-- that the AI agent can use to rapidly personalize. Using a novel definition of equivalence between human models in BMRL, we defined a theoretical class of human MDPs that chainworld can generalize to and showed that this class contains behaviorally meaningful models of humans. 

Our chainworlds are not psychologically verified human models; in future work, we will formally test the modeling assumptions with user studies. 
To apply BMRL in the real world, we must also consider the ethics of AI intervention. Mainly, we must ensure the AI does not manipulate the human. BMRL should only be used for people who already have a long-term goal, and the AI must not change that goal. Subgroup fairness should also be considered during learning and personalization.

Although we aimed to be comprehensive in testing chainworld's robustness, there were limitations to our approach. First, we 
did not evaluate how multiple misspecifications may compound to affect AI performance. Second, our analyses assumed that the mapping from the true MDP to the chainworld is given. In some applications this is reasonable; in PT, a domain expert is likely to know which factors contribute to a patient's perception of ``progress'' (the mapping from a distance world to a chainworld). In other cases, one will need to learn this mapping in conjunction with the chainworld parameters. 

We made several simplifying assumptions on the human + AI interactions. We avoided a POMDP formulation by assuming that there are no delayed effects of the AI's actions on the human MDP. However, habituation (reduced effectiveness of repeated interventions) is a well-studied phenomenon in digital interventions (e.g. \citep{gotzian2023modelingDecreasingEffect}). Furthermore, we avoided multi-agent RL by assuming that the human is \emph{not learning}, and instead, is solving an (implicitly) known MDP at each time step. We did not consider suboptimality of the human agent's planning, such as (small) fixed-horizon planning.
Finally (and excitingly), BMRL is adaptable to more diverse AI interventions. Our paper focused exclusively on interventions to the human's discount and reward. In many applications, the human's perception of state, actions, and transitions may also be impaired. Similarly, behavioral interventions on perceptions of state, actions, and transitions exist and could be incorporated into our framework. 

\section{Acknowledgements}
This material is based upon work supported by the National Science Foundation under Grant No. IIS-2107391 and the National Institute of Biomedical Imaging and Bioengineering of the National Institutes of Health under OD P41EB028242.  Any opinions, findings, and conclusions or recommendations expressed in this material are those of the author(s) and do not necessarily reflect the views of the National Science Foundation.
ES’s work was supported by a gift fund
from Benshi.ai and the National Science Foundation Graduate Research Fellowship Program under Grant No. DGE2140743.

\bibliographystyle{ACM-Reference-Format} 
\bibliography{aamas}

\newpage
\appendix 
\onecolumn  

\section{Optimal value functions in chainworlds}
\label{appendix: solving}
In this section, we solve for the analytical solution of the optimal value function (and therefore the optimal policy) for chainworlds $\chainworld \in \simplechains$. 

In our setting, once the optimal action is to go right in a given state, the best strategy is to continue going right in subsequent states that are closer to the goal. That is, if $\pi^*(w) = 1$, then $\pi^*(w + 1) = 1$. The opposite is also true; if the optimal action is to stay in place in a given state, then the best strategy in a state that is farther away from the goal is also to stay in place --- if $\pi^*(w) = 0$, then $\pi^*(w - 1) = 0$. 

In other words, the optimal value function maximizes between a policy that goes to the goal state $\pi_g$ and a policy that goes to disengagement $\pi_d$.
Specifically, for MDP $\chainworld \in \simplechains$, the corresponding optimal value function $V^*_\theta$ is 
$V^*_\theta(s) = \max\{\vdis(s), \vgoal(s)\},$
and the optimal policy $\pi_\theta^*$ is
$\pi^*_\theta(s) = \mathbb{I}\{\vgoal(s) > \vdis(s)\},$
for all $s \in \mathcal{S}$. 

\subsection{Derivation of \texorpdfstring{$\vgoal$}{V goal}}
We will start by deriving $\vgoal$ for states close to the goal state $s_g$, and generalize these findings. First, note that $\vgoal(s_N)  = \vgoal(s_g) = r_g$ because $s_g$ is absorbing. 

Next, we will derive the value of a state which is right before the goal state, $s_{N - 1}$. Recall that when $a=1$ the human moves right with probability $p_g$ and stays in place with probability $1 - p_g$. The human always receives a reward of $r_b$ for choosing $a=1$. Using Bellman recursion for the value function results in, 
\begin{align}
\begin{split}
\label{eq: v1}
    & \vgoal(s_{N - 1}) \\
    & = r(s_{N - 1}, a = 1) + \gamma \sum\limits_{s'} P(s' | s = s_{N - 1}, a = 1) \vgoal(s')\\
    & = r_b + \gamma \left [ p_g \vgoal(s_N) + (1 - p_g) \vgoal(s_{N-1}) \right ] \\
    & = r_b + \gamma \left [ p_g r_g + (1 - p_g) \vgoal(s_{N-1}) \right ] \\
    & = r_b + \gamma p_g r_g +  \gamma (1 - p_g) \vgoal(N-1) \\
    & = r_b + \gamma p_g r_g +  \gamma (1 - p_g) [ r_b + \gamma p_g r_g +  \gamma (1 - p_g) \vgoal(s_{N-1})] \\
    & = \underbrace{r_b + \gamma p_g r_g}_{\text{when } s = s_{N - 1} \text{ at time} 0} +  \underbrace{\gamma (1 - p_g) [r_b + \gamma p_g r_g]}_{\text{when } s = s_{N - 1} \text{ at time } 1} + \underbrace{\gamma^2 (1 - p_g)^2 \vgoal(s_{N-1})}_{\text{when } s = s_{N - 1} \text{ at time} 2} + \ldots \\
    & = \sum \limits_{t = 0}^\infty \gamma^t (1 - p_r)^t [r_b + \gamma p_g r_g] \\
    & = \frac{p_g \gamma r_g + r_b}{1 - \gamma (1 - p_g)}\\
    & = \frac{p_g \gamma r_g + r_b}{z},
\end{split}
\end{align}

for $z = 1 - \gamma(1 - p_g)$. Next, using a similar strategy, we derive the value of a state which is \emph{two spaces} away from the goal state, $s_{N - 2}$: 
\begin{align}
\begin{split}
    \label{eq: v2}
    & \vgoal(s_{s_{N - 2}}) \\
    & = r(s = s_{N - 2}, a = 1) + \gamma \sum\limits_{s'} P(s' | s = s_{N - 2}, a = 1) \vgoal(s')\\
    & = r_b + \gamma \left [ p_g \vgoal(s_{N - 1}) + (1 - p_g) \vgoal(s_{N-2}) \right ] \\
    & = r_b + \gamma  p_g \vgoal(s_{N - 1}) + \gamma (1 - p_g) \vgoal(s_{N-2})  \\
    & = \underbrace{ r_b + \gamma  p_g \vgoal(s_{N - 1})}_{\text{ when } s = s_{N - 2} \text{ at time 0}} 
    + \underbrace{\gamma (1 - p_g) [ r_b + \gamma  p_g \vgoal(s_{N - 1})]}_{\text{ when } s = s_{N - 2} \text{ at time 1}} 
    + \underbrace{\gamma^2 (1 - p_g)^2 [ r_b + \gamma  p_g \vgoal(s_{N - 1})]}_{\text{ when } s = s_{N - 2} \text{ at time 2}} + \ldots\\
    & = \sum\limits_{t = 0}^\infty \gamma^t(1-p_g)^t [ r_b + \gamma  p_g \vgoal(s_{N - 1})]\\
    & = \frac{ r_b + \gamma p_g \vgoal(s_{N - 1})}{1 - \gamma(1 - p_g)}\\
    & = \frac{ r_b + \gamma p_g \frac{\gamma p_g r_g + r_b}{z}}{z}\\
    & = \frac{\gamma^2 p_g^2 r_g}{z^2} + \frac{\gamma p_g r_b}{z^2} + \frac{r_b}{z}.\\
\end{split}
\end{align}
 
In general, we can apply the Bellman equation to ``recursively'' expand the form of the value function, so that the value at a given state $s_n$ 
 can be written as an infinite geometric series: 

\begin{align}
\label{eq: recursive_value}
    \vgoal(s_n) = \sum \limits_{t = 0}^\infty \gamma^t (1-p)^t \left [ r_b + \gamma p_g \vgoal(s_{n+1}) \right ]
\end{align}

We will apply \cref{eq: recursive_value} to our final derivation of $\vgoal(s_{N- 3})$:
\begin{align}
\begin{split}
    & \vgoal(s_{N - 3}) \\
    & = \sum\limits_{t = 0}^\infty \gamma^t(1-p_g)^t [r_b + \gamma p_g \vgoal(s_{N-2}) ]\\
    & = \frac{r_b + \gamma p_g \vgoal(s_{N - 2}) }{1 - \gamma(1 - p_g)}\\
    & = \frac{r_b + \gamma p_g \left(\frac{\gamma^2 p^2 r_g}{z^2} + \frac{\gamma p_g r_b}{z^2} + \frac{r_b}{z}\right)}{z}\\
    & = \frac{\gamma^3 p_g^3}{z^3} r_g + \frac{\gamma^2 p_g^2}{z^3}r_b + \frac{\gamma p_g}{z^2}r_b + \frac{1}{z}r_b.\\
\end{split}
\end{align}

In general, for any state $s_{N - \delta}$, the value function is: 
\begin{align}
\begin{split}
    & \vgoal(s_{N - \delta}) \\
    & = r_g \left (\frac{\gamma\ p_g}{z} \right )^\delta+ \frac{r_b}{z} \sum\limits_{n = 0}^{\delta - 1} \left (\frac{\gamma \ p_g}{z} \right )^n \\
    & = r_g \left (\frac{\gamma\ p_g}{z} \right )^\delta+ r_b  \left (\frac{1 - (\gamma \ p_g / z)^\delta}{1 - \gamma}\right),
\end{split}
\end{align} 
where $z = 1 - \gamma(1 - p_g)$. 

\subsection{Derivation of \texorpdfstring{$\vdis$}{V dis}} 
This derivation is similar in nature to the one on $\vgoal$. 
Note that $\vdis(s_0) = r_d \left (\frac{\gamma p_{d0}}{1 - \gamma(1 - p_{d0})}\right )$. 
We will begin by solving for $\vdis(s_1)$: 

\begin{align}
\label{eq: v1_backwards}
\begin{split}
    & \vdis(s_1) = \sum\limits_{s'} P(s' | s = s_1, a = 0) [\ r(s = s_1, a = 0, s') + \gamma \vdis(s')\ ]\\
    & = \underbrace{p_{d}[0 + \gamma \vdis(s_d)]}_{s' = s_d} + \underbrace{p_\ell [r_\ell + \gamma \vdis(s_0)]}_{s' = s_0} + \underbrace{(1 - p_\ell - p_d)[0 + \gamma \vdis(s_1)]}_{s' = s_1}\\
    & = \gamma p_{d} r_d + p_\ell r_\ell + p_\ell \gamma \vdis(s_0) + (1 - p_\ell - p_d) \gamma \vdis(s_1)\\
    & = \sum\limits_{t = 0}^\infty \gamma^t(1 - p_\ell - p_d)^t[\gamma p_{d} r_d + p_\ell r_\ell + p_\ell \gamma \vdis(s_0)]\\
    & = \frac{\gamma p_{d} r_d + p_\ell r_\ell + p_\ell \gamma \vdis(s_0)}{1 - \gamma(1 - p_\ell - p_d)}\\
    & = r_d \left(\frac{\gamma p_{d}}{u}\right) + r_\ell \left(\frac{p_\ell}{u}\right) + r_d \left (\frac{\gamma^2 p_\ell p_{d0}}{u z} \right ),
\end{split}
\end{align} 
where $u = 1 - \gamma(1 - p_\ell - p_d)$ and $z = 1 - \gamma(1 - p_{d0})$.
In the same way, $\vdis(s_2)$ is: 

\begin{align}
\label{eq: v2_backwards}
\begin{split}
    & \vdis(s_2)\\
    & = \sum \limits_{t = 0}^\infty \gamma^t (1 - p_\ell - p_d)^t [\gamma p_d r_d + p_\ell r_\ell + p_\ell \gamma \vdis(s_1)]\\
    &=  \frac{\gamma p_d r_d + p_\ell r_\ell + p_\ell \gamma \vdis(s_1)}{u}\\
    &= r_d \parenfrac{\gamma p_d}{u} + r_\ell \parenfrac{p_\ell}{u} + \frac{\gamma p_\ell}{u} \left( r_d \parenfrac{\gamma p_{d}}{u} + r_\ell \parenfrac{p_\ell}{u} + r_d \parenfrac{\gamma^2 p_\ell p_{d0}}{uz} \right)\\
    &= r_d \parenfrac{\gamma p_d}{u} + r_\ell \parenfrac{p_\ell}{u} + r_d \parenfrac{\gamma^2 p_\ell p_{d}}{u^2} + r_\ell \parenfrac{\gamma p_\ell^2}{u^2} + r_d \parenfrac{\gamma^3 p_\ell^2 p_{d0}}{u^2 z}
\end{split}
\end{align}

This yields to a general form: 
\begin{equation}
    \vdis(s_n) =  r_d \parenfrac{\gamma p_{d0}}{z} \parenfrac{\gamma p_\ell}{u}^n + (\gamma p_d r_d + p_\ell r_\ell) \sum\limits_{t = 0}^n \parenfrac{\gamma p_\ell}{u}^n
    = r_d \parenfrac{\gamma p_{d0}}{z} \parenfrac{\gamma p_\ell}{u}^n + (\gamma p_d r_d + p_\ell r_\ell) \parenfrac{1 - (\gamma p_\ell / u)^n}{1 -\gamma(1 - p_d)}
\end{equation}

\section{AI policies for chainworld humans}
\label{appendix: chainworld-proof}

\begin{figure}[ht] 
    \centering
    \begin{minipage}{0.5\textwidth}
    \centering  
        \begin{subfigure}{0.7\linewidth}
             \centering              \includegraphics[width=1\linewidth]{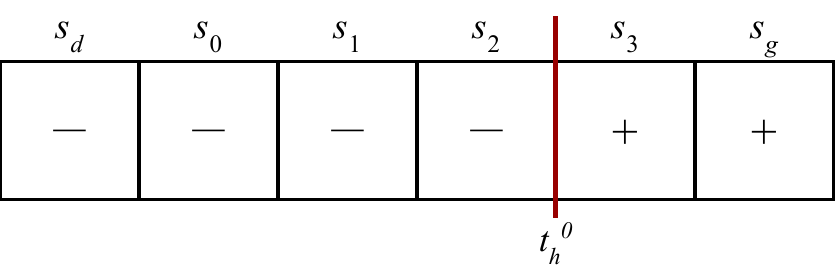}
             \caption{\textbf{Human's optimal policy} when $\ai{a} = 0$. The human is planning in an MDP with default values for $\gamma, r_b$.}
             \label{fig: human-policy-ai-0}
         \end{subfigure}
         
        \begin{subfigure}{0.7\linewidth}
             \centering              
             \includegraphics[width=1\linewidth]{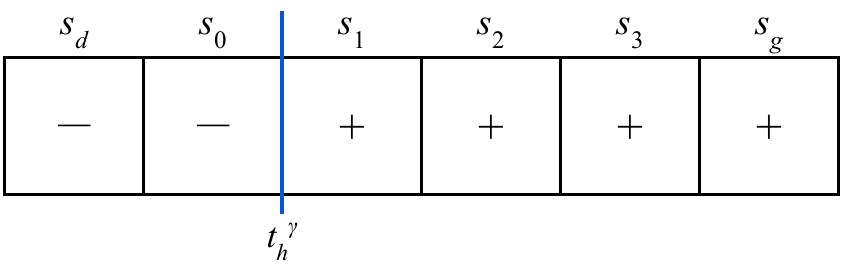}
             \caption{\textbf{Human's optimal policy}  when $\ai{a} = a_\gamma$. The human is planning in an MDP with default $r_b$ and increased $\gamma' = \gamma + \Delta_\gamma$.}
             \label{fig: human-policy-ai-1}
         \end{subfigure}
 
        \begin{subfigure}{0.7\linewidth}
             \centering              
             \includegraphics[width=1\linewidth]{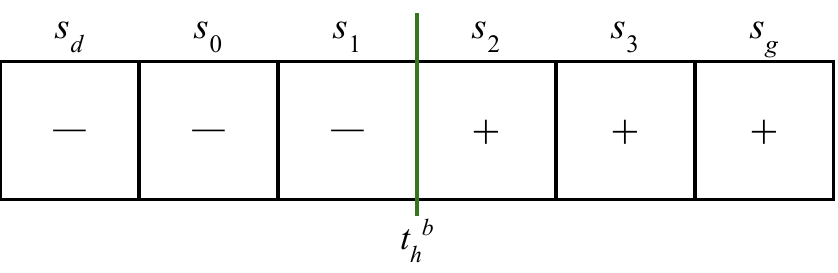}
             \caption{\textbf{Human's optimal policy}  when $\ai{a} = a_b$. The human is planning in an MDP with default $\gamma$ and decreased $r_b - \Delta_b$.}
             \label{fig: human-policy-ai-2}             
         \end{subfigure}
    \end{minipage}%
    \begin{minipage}{0.5\textwidth}
    \centering  
        \begin{subfigure}{0.7\linewidth}
             \centering              
             \includegraphics[width=1\linewidth]{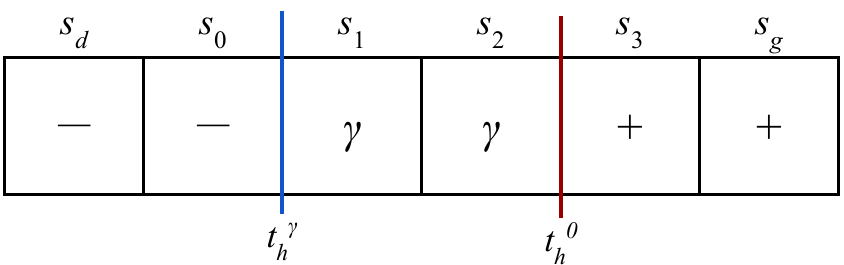}
             \caption{\textbf{Example of AI's optimal policy}, given the example thresholds for $t_h^0, t_h^\gamma, t_h^b$ from \cref{fig: human-policy-ai-0}, \cref{fig: human-policy-ai-1}, \cref{fig: human-policy-ai-2}}
             \label{fig: ai-policy-example-thresholds}      
         \end{subfigure}
    \end{minipage}
    \caption{Example of how the human's optimal policies for chainworld following AI actions $\ai{a} = 0$ (\cref{fig: human-policy-ai-0}), $\ai{a} = a_\gamma$ (\cref{fig: human-policy-ai-1}), and $\ai{a} = a_b$ (\cref{fig: human-policy-ai-2}) result in the AI optimal policy in \cref{fig: ai-policy-example-thresholds}.}
    \label{fig: threshold-ai-policy-overview}
\end{figure}

\subsection{Proof that chainworld AI has a 3-window policy}
\label{appendix: proof-ai-policy-chainworld}
The following is the proof for \cref{thm: chainworld-policies}. 

\begin{proof}
\setlength{\parindent}{0cm}
We will prove this on a case-by-case basis. 

\textbf{The optimal AI policy is takes action $\boldsymbol{0}$ for states $\boldsymbol{s_n}$ 
where $\boldsymbol{n \le \tmin}$.}
We will prove by negation. 

Let $\pi_1$ be defined as in \cref{eq: chainworld-ai-policy}. 
Suppose $\pi_1$ is not optimal. 
This implies that there must exist some optimal policy, $\pi_2$, whose actions are non-zero for a subset of $\{s_m\}$ states, where $m < \tmin$. 

Note that, $$V^{\pi_1}(s_m) = \aidis\parenfrac{\gamma p_d}{1 - \gamma(1 - p_d)},$$ because the human will not take action in states before the threshold $\tmin$, so disengagement is inevitable. 
Similarly, $$V^{\pi_2}(s_m) =\aiint + \aidis \parenfrac{\gamma p_d}{1 - \gamma(1 - p_d)},$$ where the human outcome remains the same, but the AI agent receives additional penalty $\aiint$ for sending an intervention. 

Since $\pi_2$ is an optimal policy, $V^{\pi_2}(s_m) \ge V^{\pi_1}(s_m) \implies \aiint + \aidis \parenfrac{\gamma p_d}{1 - \gamma(1 - p_d)} > \aidis \parenfrac{\gamma p_d}{1 - \gamma(1 - p_d)} \implies \aiint > 0$. This cannot be true, since $\aiint < 0$ by construction. 
\\ \\

\textbf{The optimal AI policy takes action $\boldsymbol{0}$ for states $\boldsymbol{s_n}$ 
where  $\boldsymbol{n > \human{t}^0}$.}
We will prove by negation. 

Let $\pi_1$ be defined as in \cref{eq: chainworld-ai-policy}. 
Suppose $\pi_1$ is not optimal. 
This implies that there must exist some optimal policy, $\pi_2$, whose actions are non-zero for a subset of $\{s_m\}$ states, where $m > \human{t}^0$. 

Note that, $$V^{\pi_1}(s_m) =   \aigoal \parenfrac{\ai{\gamma}\ p_g}{1 - \ai{\gamma}(1 - p_g)}^{N - m},$$ because the human will always take action in states after the threshold $\human{t}^0$, so they will always reach the goal state. 
Similarly, $$V^{\pi_2}(s_m) \le \aiint +\aigoal \parenfrac{\ai{\gamma}\ p_g}{1 - \ai{\gamma}(1 - p_g)}^{N - m},$$ where the human outcome remains the same, but the AI agent receives additional penalty $\aiint$ for sending an intervention in this state, and (possibly) subsequent states. 

Since $\pi_2$ is an optimal policy, $V^{\pi_2}(s_m) \ge V^{\pi_1}(s_m) \implies \aiint +\aigoal \parenfrac{\ai{\gamma}\ p_g}{1 - \ai{\gamma}(1 - p_g)}^{N - m} > \aigoal \parenfrac{\ai{\gamma}\ p_g}{1 - \ai{\gamma}(1 - p_g)}^{N - m}$. This cannot be true, since $\aiint < 0$ by construction. 
\\ \\

\textbf{The optimal AI policy takes action $\boldsymbol{0}$ when $\boldsymbol{\tmin < n \le \ai{t}}$ and takes action $\boldsymbol{a_\gamma}$ when $\boldsymbol{ \ai{t} < n \le \human{t}^0}$.}

Without loss of generality, assume $\human{t}^\gamma < \human{t}^b$.


By construction, an episode in the chainworld is finite with two absorbing states, so the optimal AI policy must choose weather to influence the human toward the goal or disengagement state. 

Let $\ai{\pi}^g$ denote the \emph{highest value} policy to the goal state. Such a policy has two behaviors. First, the policy will always take $\ai{\pi}^g(s_n) = a_\gamma$ in states where the AI agent can move the threshold before the current state; this corresponds to states $s_n$ where $\tmin \le < n < \human{t}^0$. This is because \emph{witholding intervention} on some states by taking action $\ai{\pi}^g(s_n) = 0$ means that the human will not pursue the goal, since $n < \human{t}^0$, and the AI agent will receive a disengagement penalty. Second, the goal policy will always take action $0$ in states $s_n$ where $n \ge \human{t}^0$, as we showed earlier in the proof. 
The value of $\ai{\pi}^g$ is, $$V^{\ai{\pi}^g}(s_n) =  \aigoal \parenfrac{\ai{\gamma}\ p_g}{1 - \ai{\gamma}(1 - p_g)}^{N - n} + \aiint\parenfrac{1 - (\ai{\gamma}\ p_g / (1 - \ai{\gamma}(1 - p_g)))^{N - n}}{1 - (\ai{\gamma}\ p_g / (1 - \ai{\gamma}(1 - p_g)))},$$ which is the discounted reward of the goal reduced by the cost of the interventions to reach the goal.

Let $\ai{\pi}^d$ denote the \emph{highest value} policy to disengagement, which takes action $\ai{\pi}^d(s_n) = 0$ in states $s_n$ where $\tmin < n \le \human{t}^0$. This is for the same reason as we showed earlier in the proof; when disengagement is inevitable, it is better to withold intervention to avoid the additional cost. 
The value of $\ai{\pi}^d$ is, $$V^{\ai{\pi}^d}(s_n) = \aidis \parenfrac{\gamma p_d}{1 - \gamma(1 - p_d)}.$$

When $V^{\ai{\pi}^d}(s_n) \ge V^{\ai{\pi}^g}(s_n)$, the value of disengagement outweighs the cost of reaching the goal, and the optimal AI policy will take action $0$. By definition, $V^{\ai{\pi}^d}(s_n) \ge V^{\ai{\pi}^g}(s_n)$ when $n \le \ai{t}$. So, the optimal AI policy will take action $0$ when $n \le \ai{t}$.

Similarly, when $V^{\ai{\pi}^d}(s_n) < V^{\ai{\pi}^g}(s_n)$, the optimal AI policy will take action $a_\gamma$. By definition, $V^{\ai{\pi}^d}(s_n) < V^{\ai{\pi}^g}(s_n)$ when $n > \ai{t}$. So, the optimal AI policy will take action $a_\gamma$ when $n > \ai{t}$.
\end{proof}

\subsection{Proof that chainworlds can cover the entire space of 3-window policies}
The following is the proof for \cref{thm: window-based-equivalence}. 
\label{appendix: proof-chainworld-equivalence-class}
\begin{proof}
    Note that the optimal chainworld AI policy in \cref{eq: chainworld-ai-policy} depends on four quantities: the human thresholds $\human{t}^0, \human{t}^\gamma, \human{t}^b$ and the AI policy threshold $\ai{t}$. Since the AI policy threshold is a direct result of the human thresholds and the AI MDP, it suffices to show in this proof that there exists a $\theta$ that can produce \emph{all possible values of } $\human{t}^0, \human{t}^\gamma, \human{t}^b$. 

    We will prove this in three parts, and each part will be similar in structure: 
    \begin{enumerate}
        \item First, we will show that there exist chainworld parameters without considering AI intervention effects ($\theta \setminus \{\Delta_\gamma, \Delta_b\}$) that can define any $\human{t}^0$.
        \item Then, we will show that there exists a $\Delta_b$ that can define any $\human{t}^b$, given the chainworld parameters from step (1). 
        \item Finally, we will show that there exists a $\Delta_\gamma$ that can define any $\human{t}^\gamma$ given the chainworld parameters from step (1). 
    \end{enumerate}

    \textbf{There exists $\boldsymbol{\theta \setminus \{\Delta_\gamma, \Delta_b\}}$ that can define any $\human{t}^0$.} We will refer to $\human{t}^0$ as $t_0$ for brevity. 
    By \cref{def: threshold}, any $t_0 \in \{0, \ldots, N-1\}$ must satisfy two constraints: 
    $$\vgoal(s_{t_0}) < \vdis(s_{t_0}) \And \vgoal(s_{t_0 + 1}) > \vdis(s_{t_0+1}).$$
    We will prove that there exists $\theta \setminus \{\Delta_\gamma, \Delta_b\}$ so that these constraints are always satisfied. 

    Let $r_d = 0, r_\ell = 0, p_g = 1, p_d = 0, p_{d0} = 0, p_ell = 0$. The constraints become: 
    \begin{align}
    \label{eq: t-0-conditions}
        \begin{split}
            & \vgoal(s_{t_0}) < \vdis(s_{t_0}) \And \vgoal(s_{t_0 + 1}) > \vdis(s_{t_0+1}) \\
            & \implies r_g \gamma^{N - t_0} - r_b \parenfrac{1 - \gamma^{N - t_0}}{1 - \gamma} > 0 \And r_g \gamma^{N - t_0 - 1} - r_b \parenfrac{1 - \gamma^{N - t_0 - 1}}{1 - \gamma} < 0 \\
            & \implies r_g  > r_b \parenfrac{1 - \gamma^{N - t_0}}{\gamma^{N - t_0} (1 - \gamma)} \And r_g < r_b \parenfrac{1 - \gamma^{N - t_0 - 1}}{\gamma^{N - t_0 - 1}(1 - \gamma)}\\
            & \implies r_b \parenfrac{1 - \gamma^{N - t_0}}{\gamma^{N - t_0} (1 - \gamma)} < r_g < r_b \parenfrac{1 - \gamma^{N - t_0 - 1}}{\gamma^{N - t_0 - 1}(1 - \gamma)}.
        \end{split}
    \end{align}

    These constraints are satisfied so long as there exists valid chainworld parameters so that the following inequality holds: 
    \begin{align}
        \begin{split}
            & \implies r_b \parenfrac{1 - \gamma^{N - t_0}}{\gamma^{N - t_0} (1 - \gamma)} < r_b \parenfrac{1 - \gamma^{N - t_0 - 1}}{\gamma^{N - t_0 - 1}(1 - \gamma)} \\
            & \implies \parenfrac{1 - \gamma^{N - t_0}}{\gamma^{N - t_0}} < \parenfrac{1 - \gamma^{N - t_0 - 1}}{\gamma^{N - t_0 - 1}}\\
            & \implies \gamma(1 - \gamma^{N - t_0 -1}) < 1 - \gamma^{N - t_0}\\
            & \implies \gamma < 1.
        \end{split}
    \end{align}

    So, there exists 
    \begin{align}
    \label{eq: theta-t0}
    \begin{split}
        & \theta \setminus \{\Delta_\gamma, \Delta_b\} = \{p_g = 1, p_d = 0, p_{d0} = 0, p_\ell = 0, r_d = 0, r_\ell = 0, \\
        & \quad \quad \quad \quad \quad\ \  \quad  r_g\text{ (that satisfies condition in \cref{eq: t-0-conditions})}, r_b \text{ (that satisfies condition in \cref{eq: t-0-conditions})}, \gamma < 1\},
    \end{split}
    \end{align} 
    which defines any $t_0 \in \{0, \ldots, N - 1\}$. 

    \textbf{There exists an AI effect on burden $\boldsymbol{\Delta_b}$ that can define any human threshold $\boldsymbol{\human{t}^b}$.} We will refer to the human threshold following a burden intervention $\human{t}^b$ as $t_b$ for brevity. By \cref{def: threshold}, the threshold $t_b$ must satisfy the following constraints: 
    $$\vgoal(s_{t_b}; r_b + \Delta_b) < \vdis(s_{t_b}) \And \vgoal(s_{t_b + 1}; r_b + \Delta_b) < \vdis(s_{t_b}),$$
    where $\vgoal(\cdot ; r_b + \Delta_b)$ represents the human's value of goal pursuit under the burden $r_b + \Delta_b$. 
      
    Suppose $\theta \setminus \{\Delta_\gamma, \Delta_b\}$ is defined as in \cref{eq: theta-t0}. Then, 
    \begin{align}
        \begin{split}
            & \vgoal(s_{t_b}; r_b + \Delta_b) < \vdis(s_{t_b}) \And \vgoal(s_{t_b + 1}; r_b + \Delta_b) > \vdis(s_{t_b+1}) \\
            & \implies r_g \gamma^{N - t_b} - (r_b + \Delta_b) \parenfrac{1 - \gamma^{N - t_b}}{1 - \gamma} > 0 \And r_g \gamma^{N - t_b - 1} - (r_b + \Delta_b) \parenfrac{1 - \gamma^{N - t_b - 1}}{1 - \gamma} < 0 \\
            & \frac{r_g (1 - \gamma) \gamma^{N - t_b}}{1 - \gamma^{N - t_b}} < \Delta < \frac{r_g (1 - \gamma) \gamma^{N - t_b - 1}}{1 - \gamma^{N - t_b - 1}}.
        \end{split}
    \end{align}

    These constraints are satisfied so long as there exists parameters so that the following inequality holds: 
    \begin{align}
        \begin{split}
            & \frac{r_g (1 - \gamma) \gamma^{N - t_b}}{1 - \gamma^{N - t_b}} < \frac{r_g (1 - \gamma) \gamma^{N - t_b - 1}}{1 - \gamma^{N - t_b - 1}} \\
            & \implies \frac{\gamma^{N - t_b}}{1 - \gamma^{N - t_b}} < \frac{\gamma^{N - t_b - 1}}{1 - \gamma^{N - t_b - 1}} \\
            & \implies \gamma^{N - t_b} < \gamma^{N - t_b - 1} \\
            & \implies \gamma < 1,
        \end{split}
    \end{align}

    which is true by definition. So, there exists $\Delta_b$ that defines any $t_b \in {0, \ldots, N - 1}$. 

    \textbf{There exists an AI effect on discounting $\boldsymbol{\Delta_\gamma}$ that can define any human threshold $\boldsymbol{\human{t}^\gamma}$.} We will refer to the human threshold following a discount intervention $\human{t}^\gamma$ as $t_\gamma$ for brevity. By definition \cref{def: threshold}, the threshold $t_\gamma$ must satisfy the constraints: 
    $$\vgoal(s_{t_\gamma}; \gamma + \Delta_\gamma) < \vdis(s_{t_\gamma}; \gamma + \Delta_\gamma) \And \vgoal(s_{t_\gamma + 1}; \gamma + \Delta_\gamma) < \vdis(s_{t_\gamma + 1}; \gamma + \Delta_\gamma),$$  where $\vgoal(\cdot ; \gamma + \Delta_\gamma)$ represents the human's value of goal under the discount rate $\gamma + \Delta_\gamma$. The same applies to $\vdis(\cdot ; \gamma + \Delta+\gamma)$. 

    Suppose $\theta \setminus \{\Delta_\gamma, \Delta_b\}$ is defined as in \cref{eq: theta-t0}. Then, 
    \begin{align}
        \begin{split}
            & \vgoal(s_{t_\gamma}; \gamma + \Delta_\gamma) < \vdis(s_{t_\gamma}; \gamma + \Delta_\gamma) \And \vgoal(s_{t_\gamma + 1}; \gamma + \Delta_\gamma) < \vdis(s_{t_\gamma + 1}; \gamma + \Delta_\gamma) \\
            & \implies r_g (\gamma + \Delta_\gamma)^{N - t_\gamma} - r_b \parenfrac{1 - (\gamma + \Delta_\gamma)^{N - t_\gamma}}{1 - (\gamma + \Delta_\gamma)} > 0 \And r_g (\gamma + \Delta_\gamma)^{N - t_\gamma - 1} - r_b \parenfrac{1 - (\gamma + \Delta_\gamma)^{N - t_\gamma - 1}}{1 - (\gamma + \Delta_\gamma)} < 0 \\
            & \implies r_g \parenfrac{(\gamma + \Delta_\gamma)^{N - t_\gamma} (1 - \gamma - \Delta_\gamma)}{1 - (\gamma + \Delta)^{N - t_\gamma}} < r_b < r_g \parenfrac{(\gamma + \Delta_\gamma)^{N - t_\gamma - 1} (1 - \gamma - \Delta_\gamma)}{1 - (\gamma + \Delta)^{N - t_\gamma - 1}}
        \end{split}
    \end{align}

    These constraints are satisfied so long as there exists parameters so that the following inequality holds: 
    \begin{align}
        \begin{split}
            & r_g \parenfrac{(\gamma + \Delta_\gamma)^{N - t_\gamma} (1 - \gamma - \Delta_\gamma)}{1 - (\gamma + \Delta)^{N - t_\gamma}} < r_g \parenfrac{(\gamma + \Delta_\gamma)^{N - t_\gamma - 1} (1 - \gamma - \Delta_\gamma)}{1 - (\gamma + \Delta)^{N - t_\gamma - 1}} \\
            & \implies \frac{(\gamma + \Delta_\gamma)^{N - t_\gamma}}{1 - (\gamma + \Delta)^{N - t_\gamma}} < \frac{(\gamma + \Delta_\gamma)^{N - t_\gamma - 1}}{1 - (\gamma + \Delta)^{N - t_\gamma - 1}} \\
            & \implies \frac{\gamma + \Delta_\gamma}{1 - (\gamma + \Delta_\gamma)^{N - t_\gamma}} < \frac{1}{1 - (\gamma + \Delta_\gamma)^{N - t_\gamma - 1}}\\
            & \implies \gamma + \Delta_\gamma < 1,
        \end{split}
    \end{align}

    which is true by definition. So, there exists $\Delta_\gamma$ that defines any $t_b \in {0, \ldots, N - 1}$. 

    Furthermore, there exists chainworld parameters $\theta$ that define any $\human{t}^0, \human{t}^\gamma, \human{t}^b$, and therefore, any AI policy $\ai{\pi} \in \policyclass$. 
\end{proof}

\section{Equivalence Proofs}
\label{appendix: equivalence}
Throughout this section, we will distinguish chainworld parameters from parameters in other worlds with a\ \ $\widehat \cdot$. For example, we will refer to the human's goal utility $r_g$ with $\widehat r_g$. 

\subsection{Proof of equivalence with \emph{monotonic chainworlds}}
\label{appendix: monotonic-proof}

\begin{theorem}[Chainworld and monotonic chainworld equivalence]
If $\mdp \in \monotonicchains$, then there exists $\widehat \mdp \in \simplechains$ such that $\mdp \equiv \widehat \mdp$ with identity mapping $f(s) = s$ and $g_s(a) = a$. 
\end{theorem}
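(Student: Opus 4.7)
The plan is to reduce the statement to Theorem~\ref{thm: window-based-equivalence}, which already establishes that any human MDP whose corresponding optimal AI policy lies in the three-window class $\policyclass$ is equivalent to some chainworld. So I need only verify that every $\mdp \in \monotonicchains$ induces an AI policy in $\policyclass$, after which the identity mappings $f(s)=s$, $g_s(a)=a$ are automatic because both MDPs share the same state and action spaces.

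First I would argue that the monotonicity condition $\vgoal(s_n) - \vdis(s_n) \le \vgoal(s_{n+1}) - \vdis(s_{n+1})$ yields a well-defined human threshold $\human{t}^0$ in the sense of Definition~\ref{def: threshold}: once the difference turns nonnegative at some state, it stays nonnegative at all later states, so the human's optimal action switches from $\human{a}=0$ to $\human{a}=1$ at most once along the chain. This gives the threshold form of the human policy under the null AI action required by Theorem~\ref{thm: chainworld-policies}.

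Next I would extend the threshold structure to the two intervened MDPs. Intervening on discount (setting $\gamma' = \gamma + \Delta_\gamma$) or on burden (setting $r_b' = r_b + \Delta_b$) produces new human MDPs, and I would argue that monotonicity of $\vgoal - \vdis$ in $n$ is preserved for these perturbations, giving thresholds $\human{t}^\gamma$ and $\human{t}^b$. With all three human thresholds in hand, the case analysis used in Appendix~\ref{appendix: proof-ai-policy-chainworld} to prove Theorem~\ref{thm: chainworld-policies} carries over essentially verbatim: that proof relies only on the existence of human thresholds, the absorbing structure of $s_g$ and $s_d$, and the sign of the intervention cost $\aiint < 0$, none of which depend on the chainworld's specific rewards or transitions. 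Thus $\ai{\pi}^* \in \policyclass$, and Theorem~\ref{thm: window-based-equivalence} produces a chainworld $\widehat{\mdp} \in \simplechains$ with $\mdp \equiv \widehat{\mdp}$.

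The main obstacle, I expect, is justifying the preservation of monotonicity under intervention. Definition~\ref{def: monotonic-chains} constrains only the unperturbed value functions $\vgoal, \vdis$, so a priori the intervened MDPs might fail to lie in $\monotonicchains$ and might not admit clean thresholds $\human{t}^\gamma, \human{t}^b$. Handling this cleanly likely requires either (i) strengthening the class definition to be closed under the admissible parameter shifts $\Delta_\gamma \ge 0$ and $\Delta_b \ge 0$, or (ii) a monotone-comparative-statics argument on the Bellman recursion showing that making goal pursuit more attractive (raising $\gamma$, or reducing the magnitude of $r_b$) shifts both $\vgoal$ and $\vdis$ in a way that keeps $\vgoal(s_n) - \vdis(s_n)$ nondecreasing in $n$. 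Everything else in the proof is bookkeeping on thresholds and an appeal to the already-proved equivalence theorem.
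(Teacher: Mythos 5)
Your proposal matches the paper's proof: the paper likewise argues that monotonic chainworlds admit thresholds $\human{t}^0, \human{t}^\gamma, \human{t}^b$ under each AI action and that the proof of \cref{thm: chainworld-policies} then carries over, placing $\ai{\pi}^*$ in $\policyclass$ and yielding an equivalent chainworld via \cref{thm: window-based-equivalence}. The gap you flag---that \cref{def: monotonic-chains} constrains only the unperturbed value functions, so the threshold structure of the intervened MDPs needs separate justification---is genuine, but the paper's own one-paragraph proof simply asserts the existence of these thresholds without addressing it, so your treatment is, if anything, more careful than the original.
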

\begin{proof}
    Like our chainworlds, optimal human policies in monotonic chainworlds are defined by a threshold. 
    Specifically, under each AI intervention, monotonic chainworlds result in the thresholds $\human{t}^0, \human{t}^\gamma, \human{t}^b$ under AI actions $0, a_\gamma, a_b$, respectively. As a result, the proof from \cref{thm: chainworld-policies} holds exactly for monotonically increasing chainworlds. 
\end{proof}

\subsection{Proof of equivalence with \emph{negative effect of AI intervention}}
\label{appendix: negative-ai-effect-proof}

\begin{theorem}[Chainworld equivalence under negative effect of AI intervention]    
If $\mdp$ has the same states, actions, rewards, transitions, and discount as a chainworld except that $\Delta_b > 0$ or $\Delta_\gamma < 0$, then there exists a chainworld MDP $\chainworld \in \simplechains$ such that $\chainworld \equiv \mdp$.
\end{theorem}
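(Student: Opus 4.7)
The idea is that any harmful intervention is strictly dominated by the no-op $\ai{a} = 0$: it either leaves the human's action unchanged or pushes the human toward disengagement, while still costing the AI $|\aiint|$. Hence an optimal AI in $\mdp$ never uses a harmful intervention, and its behavior is governed entirely by the non-harmful interventions (if any). We build a chainworld $\widehat\mdp \in \simplechains$ in which harmful interventions are replaced by no-effect interventions, and then check that \cref{thm: chainworld-policies} produces the same optimal AI policy in both cases.

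\textbf{Construction.} Let $\widehat\mdp$ be the chainworld obtained from $\mdp$ by the replacements $\widehat\Delta_\gamma \leftarrow 0$ whenever $\Delta_\gamma < 0$, and $\widehat\Delta_b \leftarrow 0$ whenever $\Delta_b > 0$; all other parameters (including $\human{\mathcal{S}}, \human{\mathcal{A}}, \human{T}, \human{R}, \human{\gamma}$ and any non-harmful $\Delta$) are copied verbatim from $\mdp$.

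\textbf{Key steps.} (i) Establish a monotonicity lemma for the human threshold in \cref{def: threshold}: decreasing $\gamma$ cannot decrease $\human{t}^0$, and making $r_b$ more negative cannot decrease $\human{t}^0$. For $r_b$ this is immediate from the Appendix~A closed form: only $\vgoal_\theta$ depends on $r_b$ and its coefficient is positive in $(1-\gamma)^{-1}$, so $\vgoal$ decreases and the set of states where $\vgoal > \vdis$ shrinks. For $\gamma$ one verifies $\partial(\vgoal - \vdis)/\partial\gamma \ge 0$ from the closed forms, using $r_g, r_d > 0$, $r_b, r_\ell < 0$, and the fact that the horizon to the goal is at least as long as the horizon to disengagement from any interior state. (ii) The lemma implies $\human{t}^\gamma \ge \human{t}^0$ whenever $\Delta_\gamma < 0$ and $\human{t}^b \ge \human{t}^0$ whenever $\Delta_b > 0$, so the corresponding intervention windows in \cref{thm: chainworld-policies} are empty for $\mdp$; the only nontrivial windows are those of non-harmful interventions. (iii) For the constructed $\widehat\mdp$, each zeroed $\widehat\Delta$ yields $\widehat{\human{t}^\gamma} = \widehat{\human{t}^0}$ (respectively $\widehat{\human{t}^b} = \widehat{\human{t}^0}$), again emptying the window, while each unchanged $\widehat\Delta$ inherits the same value and thus the same window as in $\mdp$. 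Applying \cref{thm: chainworld-policies} to $\widehat\mdp$ and comparing state-by-state under the identity maps $f(s) = s$, $g_s(a) = a$ gives $\ai{\pi}^*_{\widehat\mdp} = \ai{\pi}^*_\mdp$, hence $\widehat\mdp \equiv \mdp$.

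\textbf{Main obstacle.} The technical crux is the $\gamma$ case of the monotonicity lemma, since both $\vgoal$ and $\vdis$ shrink when $\gamma$ decreases and the sign of their difference is not obvious from a surface reading. One has to carefully factor the Appendix~A expressions to extract $\partial(\vgoal - \vdis)/\partial\gamma \ge 0$; a cleaner alternative is a coupling argument comparing the expected discounted returns of the fixed goal-pursuing and disengaging policies at two values of $\gamma$, exploiting that the goal's horizon is at least that of disengagement. Once monotonicity is in hand, the rest of the proof reduces to matching windows in \cref{thm: chainworld-policies}.
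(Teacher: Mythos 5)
Your proposal follows essentially the same route as the paper's proof: both arguments reduce to showing that a harmful intervention yields a human threshold no smaller than the baseline threshold ($\human{t}^\gamma \ge \human{t}^0$ when $\Delta_\gamma < 0$, $\human{t}^b \ge \human{t}^0$ when $\Delta_b > 0$), so that the corresponding intervention window in \cref{thm: chainworld-policies} is empty and the optimal AI policy coincides with that of a chainworld whose intervention effect is zero. The packaging differs slightly --- the paper concludes abstractly that the resulting policy lies in $\policyclass$ and invokes \cref{thm: window-based-equivalence}, whereas you exhibit the witness chainworld explicitly by zeroing the harmful $\Delta$'s and apply \cref{thm: chainworld-policies} to both sides --- but the substance is the same.

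Two points where your write-up is actually more careful than the paper's. First, the theorem hypothesis is a disjunction ($\Delta_b > 0$ \emph{or} $\Delta_\gamma < 0$), and the paper's proof jumps from ``each harmful intervention has $\human{t}^\cdot > \human{t}^0$'' to ``$\tmin = \human{t}^0$, hence $\ai{\pi}^*(s)=0$ everywhere,'' which only holds when \emph{both} interventions are harmful; your construction, which copies any non-harmful $\Delta$ verbatim and preserves its window, correctly covers the mixed case. Second, the paper simply asserts the threshold shift, while you at least prove the $r_b$ half from the closed form and identify the $\gamma$ half as the genuine technical content. That $\gamma$-monotonicity step remains a sketch in your proposal (``one verifies $\partial(\vgoal - \vdis)/\partial\gamma \ge 0$''), and it is not as immediate as the paper's bullet point suggests, since both value functions move when $\gamma$ changes and $\vdis$ mixes positive ($r_d$) and negative ($r_\ell$) terms; but since the paper offers no argument for it either, this is a shared gap rather than a defect of your approach relative to the paper's.
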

\begin{proof}
\ 
\begin{itemize}
    \item  As it is defined in \cref{eq: chainworld-ai-policy}, the AI's optimal policy \emph{only depends on the minimum human threshold}, $\tmin = \min\left\{\human{t}^0, \human{t}^0, \human{t}^0\right\}$
    \item If the AI agent's intervention on $\gamma$ has the negative intended effect, then $\human{t}^\gamma > \human{t}^0$.
    \item Similarly, if the AI intervention on $r_b$ has the negative intended effect, then $\human{t}^b > \human{t}^0$. 
    \item As a result, $\tmin = \human{t}^0$.
    \item As shown in \cref{thm: chainworld-policies}, if $\tmin = \human{t}^0$, then this results in an optimal AI policy where $\ai{\pi}^*(s) = 0$ for all states $s \in \ai{\mathcal{S}}$. 
    \item $\ai{\pi}^* \in \policyclass$, because this is a ``three-window'' AI policy where the intervention window is size $0$. 
    \item Since $\ai{\pi}^* \in \policyclass$, it is in the equivalence class of chainworlds. 
\end{itemize}
\end{proof}

\subsection{Proof that AI equivalence is achieved if human MDPs are equivalent}
\label{appendix: human-equiv-ai-equiv}
In the remaining proofs, we will first equate the rewards and transitions of two human MDPs and show that this carries into AI equivalence. In this section, we prove that two human-level MDPs with the same rewards and transitions result in two AI MDPs with the same optimal policy.

\begin{theorem}
\label{thm: human_to_ai_equivalence}
\setlength{\parindent}{0cm}    
    Suppose we are given two different \textbf{human MDPs} with matching discount factors, 
    $\human{\mdp} = \langle \human{\mathcal{S}}, \human{\mathcal{A}}, \human{T}, \human{R}, \human{\gamma} \rangle \text{ and } \human{\widehat \mdp} = \langle \human{\widehat{\mathcal{S}}}, \human{\widehat{\mathcal{A}}}, \human{\widehat T}, \human{\widehat R},  \human{\gamma} \rangle,$
    whose rewards and transitions are equivalent under some mapping between the state and action spaces. Specifically, under a state mapping $f: \human{\mathcal{S}} \rightarrow \human{\widehat{\mathcal{S}}}$ and (state-specific) action mapping $g_{\human{s}}: \human{\mathcal{A}} \rightarrow \human{\widehat{\mathcal{A}}}$,  
    $$\human{T}(\human{s}, \human{a}, \human{s'}) = \human{\widehat T}(\ f(\human{s}), g_{\human{s}}(\human{a}), f(\human{s'})\ ) \text{ and } \human{R}(\human{s}, \human{a}) = \human{\widehat R}(\ f(\human{s}), g_{\human{s}}(\human{a})\ ), \quad \quad  \forall \human{s} \in \human{\mathcal{S}}, \human{A} \in \human{\mathcal{A}}.$$
    
    Assume that both human agents follow the same action selection strategy. For example, both agents select actions according to the \emph{optimal policy} for $\mdp$ and $\widehat \mdp$. 

    Suppose we are also given two \textbf{AI MDPs} that correspond to each of the respective human MDPs,
    $$\ai{\mdp} = \langle \ai{\mathcal{S}}, \ai{\mathcal{A}}, \ai{T}, \ai{R}, \ai{\gamma} \rangle \text{ and } \ai{\widehat \mdp}= \langle \ai{\widehat{\mathcal{S}}}, \ai{{\mathcal{A}}}, \ai{\widehat T}, \ai{\widehat R}, \ai{\gamma} \rangle,$$
    with matching discount functions and action spaces. The AI rewards are also mapping under the mappings so that  
    $$\ai{R}([\human{s}, \human{a}], \ai{a}) = \ai{\widehat R}(\ [f(\human{s}), g_{\human{s}}(\human{a})], \ai{a}\ ), \quad \quad  \forall \human{s} \in \human{\mathcal{S}}, \human{A} \in \human{\mathcal{A}}.$$


    Then, the \textbf{optimal AI policies} are equal, where 
    $$\ai{\pi}^*(\ [\human{s}, \human{a}]\ ) = \ai{\widehat \pi}^*(\ [\ f(\human{s})\ ,\ g_{\human{s}}(\human{a})\ ]\ ), \quad \quad \forall \human{s} \in \human{\mathcal{S}}, \human{A} \in \human{\mathcal{A}}.$$
\end{theorem}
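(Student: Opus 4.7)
The plan is to prove AI equivalence in three layers, working from the human level up to the AI level. First I would establish that the two human agents, given any fixed AI intervention, select actions according to policies that are equivalent under the mappings $f, g_{\human{s}}$. Second I would use the factorization $\ai{T}(\ai{s}, \ai{a}, \ai{s}') = \human{T}(\human{s}, \human{a}, \human{s}')\,\human{\pi}(\human{a}' \mid \human{s}, \ai{a})$ to show that the AI transition kernels agree under the induced state mapping $F([\human{s}, \human{a}]) = [f(\human{s}), g_{\human{s}}(\human{a})]$. Third I would invoke the standard Bellman-fixed-point argument at the AI level to conclude that the AI value functions and hence optimal AI policies coincide under $F$.

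For the first step, since $\human{\mdp}$ and $\human{\widehat{\mdp}}$ share a discount factor and their rewards and transitions match under $(f, g_{\human{s}})$, the Bellman optimality operator on $\human{\widehat{\mdp}}$ is the pushforward under $(f, g_{\human{s}})$ of the Bellman optimality operator on $\human{\mdp}$. By uniqueness of the fixed point (equivalently, by induction on value iteration), the optimal values satisfy $\human{V}^*(\human{s}) = \human{\widehat{V}}^*(f(\human{s}))$ and the greedy action sets correspond under $g_{\human{s}}$. The assumption that both agents use the \emph{same} action selection strategy then gives $\human{\pi}(\human{a} \mid \human{s}, \ai{a}) = \human{\widehat{\pi}}(g_{\human{s}}(\human{a}) \mid f(\human{s}), \ai{a})$ for every intervention $\ai{a}$. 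Plugging this and the hypothesized equality $\human{T} = \human{\widehat{T}} \circ (f, g_{\human{s}}, f)$ into the factorization of $\ai{T}$ yields $\ai{T}(\ai{s}, \ai{a}, \ai{s}') = \ai{\widehat{T}}(F(\ai{s}), \ai{a}, F(\ai{s}'))$. Combined with the assumed reward equality $\ai{R} = \ai{\widehat{R}} \circ (F, \mathrm{id})$ and shared $\ai{\gamma}$, one more application of the Bellman fixed-point argument, this time at the AI level under $F$, delivers $\ai{\pi}^*([\human{s}, \human{a}]) = \ai{\widehat{\pi}}^*(F([\human{s}, \human{a}]))$, which is the claim.

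The main obstacle will be making step one fully rigorous when the AI action $\ai{a}$ reshapes the human MDP (e.g., perturbing $\human{\gamma}$ by $\Delta_\gamma$ or $r_b$ by $\Delta_b$). One needs the perturbation to act coherently under the mappings, so that the perturbed human MDPs are still related by $(f, g_{\human{s}})$ and step one can be applied to the \emph{intervened} human MDPs rather than only the nominal ones. This is implicit in the theorem's setup (the same $\ai{a}$ is applied in both AI MDPs, and the hypothesis equates rewards and transitions across all states, presumably including post-intervention ones). I would formalize this with a brief lemma stating that if the pre-intervention MDPs match under $(f, g_{\human{s}})$ and the AI action's effect is specified on the shared parameters $(\human{R}, \human{\gamma})$ in a state- and action-mapping-respecting way, then the post-intervention MDPs also match under $(f, g_{\human{s}})$; everything else is routine application of the contraction-mapping theorem to the Bellman operator.
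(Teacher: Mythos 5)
Your proposal is correct and follows essentially the same route as the paper's proof: establish that the matched human rewards and transitions force matched human optimal value functions and hence matched human policies under $(f, g_{\human{s}})$, push this through the factorization $\ai{T} = \human{T}\,\human{\pi}$ to equate the AI transition kernels, and conclude via the AI-level Bellman equation that the optimal AI $Q$-functions and policies coincide. Your additional observation---that the AI intervention's perturbation of $(\human{R}, \human{\gamma})$ must respect the mappings so the \emph{intervened} human MDPs remain matched---is a genuine subtlety that the paper's proof leaves implicit, and making it a standalone lemma as you suggest would tighten the argument.
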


\begin{proof} 
\setlength{\parindent}{0cm}
    \begin{align}
        \begin{split}
            & \ai{\pi^*}(\ [\human{s}, \human{a}]\ )\\
            & = \argmax\limits_{\ai{a}} \ai{Q}^*(\ [\human{s}, \human{a}]\ , \ai{a})\\
            &  = \argmax\limits_{\ai{a}}\ai{R}([\human{s}, \human{a}], \ai{a}) + \ai{\gamma} \sum\limits_{\human{s}', \human{a}'} \ai{T}([\human{s}, \human{a}], \ai{a},[\human{s}', \human{a}'])  \argmax\limits_{\ai{a}'} \ai{Q}^*(\ [\human{s}', \human{a}']\ , \ai{a}) \\
            &  = \argmax\limits_{\ai{a}}\ai{R}([\human{s}, \human{a}], \ai{a}) + \ai{\gamma} \sum\limits_{\human{s}', \human{a}'} \human{T}(\human{s}, \human{a'}, \human{s}') \human{\pi}(\human{a}\ |\ \human{s}, \ai{a}) \argmax\limits_{\ai{a}'} \ai{Q}^*(\ [\human{s}', \human{a}']\ , \ai{a}).
        \end{split}
    \end{align}
    
    Since we are given $\ai{R} \equiv \ai{\widehat R}, \human{T} \equiv \human{\widehat T}$ under mappings $f, g$, 
    \begin{align}
        \begin{split}
            & = \argmax\limits_{\ai{a}} 
              \ai{\widehat R}(\ [f(\human{s}), g_{\human{s}}(\human{a})], \ai{a}\ ) 
              + \ai{\gamma} \sum\limits_{\human{s}', \human{a}'} 
                \human{\widehat T}(\ f(\human{s}), g_{\human{s}}(\human{a}), f(\human{s'})\ ) 
                \human{\pi}(\human{a}\ |\ \human{s}, \ai{a}) 
                \argmax\limits_{\ai{a}'} \ai{Q}^*(\ [\human{s}', \human{a}']\ , \ai{a}).
        \end{split}
    \end{align}

    Note that $\human{T} \equiv \human{\widehat T} \text{ and } \human{R} \equiv \human{\widehat R} \text{ under } f, g \implies \human{Q}^*(\human{s}, \human{a}) = \human{\widehat Q}^* (\ f(\human{s}), g_{\human{s}}(\human{a})\ ) \forall \human{s}, \human{a} \implies \human{\pi}(\human{s}, \human{a}) = \human{\widehat \pi}(\ f(\human{s}), g_{\human{s}}(\human{a})\ ) \forall \human{s},\human{a}$. So, 

    \begin{align}
        \begin{split}
                & = \argmax\limits_{\ai{a}} 
              \ai{\widehat R}(\ [f(\human{s}), g_{\human{s}}(\human{a})], \ai{a}\ ) 
              + \ai{\gamma} \sum\limits_{\human{s}', \human{a}'} 
                \human{\widehat T}(\ f(\human{s}), g_{\human{s}}(\human{a}), f(\human{s'})\ ) 
                \human{\widehat \pi}(\ g_{\human{s}}(\human{a})\ |\  f(\human{s}), \ai{a} \ )
                \argmax\limits_{\ai{a}'} \ai{Q}^*(\ [\human{s}', \human{a}']\ , \ai{a}) \\
                & = \argmax \limits_{\ai{a}} \ai{\widehat Q}^*(\ [f(\human{s}), g_{\human{s}}(\human{s})], g_{\human{s}} \ )\\
                & = \ai{\widehat \pi}^*(\ [f(\human{s}), g_{\human{s}}(\human{s})] \ ).
        \end{split}
    \end{align}

\end{proof}

\newpage
\subsection{Proof of equivalence for \emph{progress worlds}}
\label{appendix: progress-proof}

\begin{figure}[ht]
    \centering
    \begin{subfigure}{0.45\linewidth}
    \centering
        \includegraphics[width=0.3\linewidth]{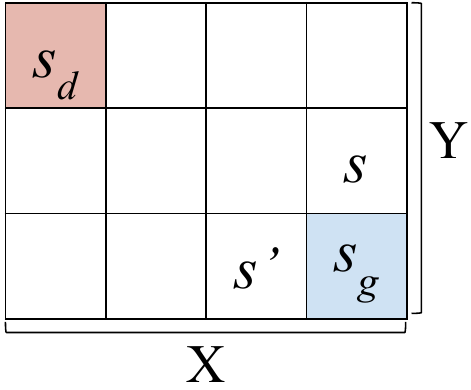}
        \caption{Example of a progress world, because $s$ and $s'$ are the same distance from $s_g$ and are the same distance from $s_d$.}
    \end{subfigure}\hfill
    \begin{subfigure}{0.45\linewidth}
    \centering
        \includegraphics[width=0.3\linewidth]{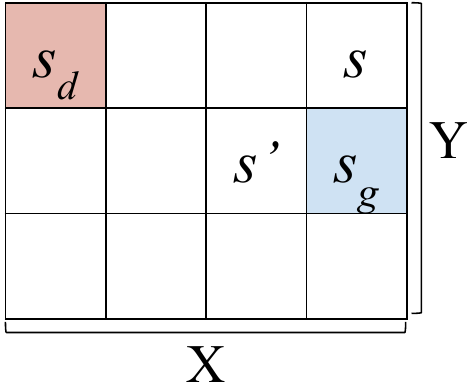}        
        \caption{Example of a graph that is \emph{not} a progress world, because $s$ and $s'$ are the same distance from $s_g$ but different distances from $s_d$.}
        \label{fig: big-small-bad}
    \end{subfigure}
    \label{fig: progress-world-example}
\end{figure}

\begin{definition}[Progress worlds]
\label{def: progress-worlds-extended}
Let $\mdp = \langle \mathcal{S}, \mathcal{A}, T, R, \gamma, \Delta_\gamma, \Delta_b \rangle$ denote a graph MDP, defined as follows:

\begin{itemize}
    \item $\mathcal{S} = \mathcal{S}_1 \times \mathcal{S}_2 \times \ldots \times \mathcal{S}_D$.     
    The states are $D$ dimensional and discrete, where $\mathcal{S}_d$ refers to the set of discrete states in the $d$-th dimension. There is an absorbing goal state $s_g \in \mathcal{S}$, and an absorbing disengagement state $s_d \in \mathcal{S}$. 
    \item $\mathcal{A} = \mathcal{A}_g \cup \mathcal{A}_d$. Actions allow movement bewteen states on the graph. The set of action $\mathcal{A}_g$ are actions that lead closer to the goal state. The set of actions $\mathcal{A}_d$ are actions that lead closer to the disengagement state. 
    \item The graph must be path-connected. The transitions are parametrized by $p$; the agent moves in the intended direction with probability $p$ and stays in place with probability $1-p$. The states $s_d$ and $s_g$ are absorbing. 
    \item $\twoabs{R}(s, a) = \begin{cases}
        r_d > 0, & s = s_d \\ 
        r_g > 0, & s = s_g \\
        r_b < 0, & a \in \mathcal{A}_g
    \end{cases}$. 
    \item $\gamma \in (0, 1)$
    \item $\Delta_\gamma, \Delta_b$. AI action $a = a_\gamma$ increases $\gamma$ by $\Delta_\gamma$ and AI action $a = a_b$ reduces $r_b$ by $\Delta_b$. 
\end{itemize}

\end{definition}

In summary, a progress world $\mdp_\theta \in \twoabs{\mdpclass}$ is parameterized by $\theta = \{\underbrace{p}_\text{transitions}, \underbrace{r_g, r_d, r_b}_\text{rewards}, \underbrace{\gamma}_\text{discount}, \underbrace{\Delta_\gamma, \Delta_b}_\text{app effect} \}$.

AI equivalence with chainworld holds for a subset of progress worlds. This is the subset of worlds for which no two states on the graph are the same distance from the goal state $s_g$, but different distances from the disengagement state $s_d$ (see example in \cref{fig: big-small-bad}). At a high level, this means that all shortest paths between the goal and disengagement state are the same length, which means that a single chainworld can represent all paths (and therefore, the entire world). We prove this in \cref{thm: progress-world-equivalence}.

\begin{theorem}[Chainworld and progress world equivalence]
\label{thm: progress-world-equivalence}
Suppose $\mdp_\theta \in \twoabs{\mdpclass}$. Let $d(s, s')$ denote the shortest graph distance from $s$ to $s'$. Then, there exists $\widehat{\mdp}_{\widehat \theta} \in \simplechains$ such that $\mdp_\theta \equiv \widehat \mdp_{\widehat \theta}$ under state mapping, 
$$f(s) = \begin{cases}
    \widehat s_{d(s, s_d) - 1}, & d(s, s_d) > 0 \\
    \widehat s_d, & d(s, s_d) = 0,
\end{cases}$$

and action mapping $g_s(a) = \indicator{a \in \mathcal{A}_g}$,
where actions in the progress-world that move the human toward the goal correspond to chainworld actions $\widehat a = 1$. 
 
\end{theorem}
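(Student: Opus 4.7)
The plan is to reduce this statement to \cref{thm: human_to_ai_equivalence} by exhibiting an explicit chainworld whose human-level rewards and transitions coincide with those of the progress world once we apply the mappings $f$ and $g_s$. Concretely, I would set $N = d(s_g, s_d) - 1$ and choose $\widehat \theta$ as $\widehat r_g = r_g$, $\widehat r_d = r_d$, $\widehat r_b = r_b$, $\widehat r_\ell = 0$, $\widehat \gamma = \gamma$, $\widehat \Delta_\gamma = \Delta_\gamma$, $\widehat \Delta_b = \Delta_b$, $\widehat p_g = \widehat p_\ell = \widehat p_{d0} = p$, and $\widehat p_d = 0$. Since $p \in [0,1]$ and the sign conditions on the progress-world rewards carry over directly, this is a valid member of $\simplechains$.

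The substantive step is checking transition equivalence: for every $s \in \mathcal{S}$ and $a \in \mathcal{A}$, $T(s, a, s') = \widehat T(f(s), g_s(a), f(s'))$. For $a \in \mathcal{A}_g$, i.e.\ $g_s(a) = 1$, the chainworld sends $\widehat s_n$ to $\widehat s_{n+1}$ with probability $p$ and keeps it in place otherwise; in the progress world, an action in $\mathcal{A}_g$ moves $s$ (at distance $n+1$ from $s_d$) to a neighbor one step closer to $s_g$ with probability $p$. Here I would use the defining property of progress worlds (\cref{def: progress-worlds}) to conclude that any such neighbor $s'$ satisfies $d(s', s_d) = n+2$, so $f(s') = \widehat s_{n+1}$, matching the chainworld transition. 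Symmetrically, for $a \in \mathcal{A}_d$, the neighbor one step closer to $s_d$ has $d(s', s_d) = n$, so $f(s') = \widehat s_{n-1}$ for $n \ge 1$ and $f(s') = \widehat s_d$ when $n = 0$; this matches $\widehat p_\ell$ off-boundary and $\widehat p_{d0}$ at $\widehat s_0$ (with $\widehat p_d = 0$ consistent with the progress world never disengaging directly from interior states).

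Reward equivalence is more direct: $\widehat r_b$ and $r_b$ both fire exactly when the human takes a goal-directed action because $g_s(a) = \indicator{a \in \mathcal{A}_g}$; $\widehat r_g$ and $\widehat r_d$ only apply at the absorbing states, which $f$ sends to $\widehat s_N = \widehat s_g$ and $\widehat s_d$; and $\widehat r_\ell = 0$ matches the absence of a progress-loss term in $\twoabs{R}$. With rewards and transitions matched under $f$ and $g_s$, and the AI reward depending only on the human state-action pair (which is likewise mapped), \cref{thm: human_to_ai_equivalence} delivers $\ai{\pi}^*([\human{s}, \human{a}]) = \ai{\widehat \pi}^*([f(\human{s}), g_{\human{s}}(\human{a})])$, i.e.\ $\mdp_\theta \equiv \widehat \mdp_{\widehat \theta}$.

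The main obstacle, and the only place the progress-world structure is really used, is the transition-matching step. Because $\mathcal{S}$ is $D$-dimensional, many states can share the same value of $d(\cdot, s_d)$, and for the one-dimensional image under $f$ to carry a well-defined transition kernel we must know that any move toward $s_g$ increases $d(\cdot, s_d)$ by exactly one and any move toward $s_d$ decreases it by exactly one; this is precisely what \cref{def: progress-worlds} buys us, and it is also where the informal restriction in the main text (``no two states at the same distance from $s_g$ with different distances from $s_d$'') becomes essential. Everything else in the proof is bookkeeping.
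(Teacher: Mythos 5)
Your proposal is correct and follows essentially the same route as the paper's proof: it instantiates the same chainworld parameters ($\widehat r_\ell = 0$, $\widehat p_g = \widehat p_\ell = \widehat p_{d0} = p$, $\widehat p_d = 0$, with rewards, discount, and intervention effects copied over), verifies reward and transition equivalence under $f$ and $g_s$ using the equal-shortest-path property of progress worlds, and then invokes \cref{thm: human_to_ai_equivalence}. The only (cosmetic) divergence is the chain length --- you take $N = d(s_g, s_d) - 1$ while the paper writes $\widehat N = d(s_g, s_d)$ --- and your choice is in fact the one consistent with the index offset in the stated mapping $f$.
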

\begin{proof}
Consider the following chainworld parameters $\widehat \theta$: 
\begin{itemize}
    \item Length of chain $\widehat N = d(s_g, s_d)$
    \item Goal reward $\widehat r_g = r_g$
    \item Disengagement reward $\widehat r_d = r_d$
    \item Progress loss reward $\widehat r_\ell = 0$
    \item Burden reward $\widehat r_b = r_b$
    \item Probability of moving toward goal $\widehat p_g = p$
    \item Probability of losing progress $\widehat p_\ell = p$
    \item Probability of disengagement $\widehat p_d = 0$
    \item Probability of disengagement at state $0$ $\widehat p_{d0} = p$
    \item Discount factor $\widehat \gamma = \gamma$
    \item Effect of AI intervention on discount $\widehat{\Delta}_\gamma = \Delta_\gamma$
    \item Effect of AI intervention on burden $\widehat{\Delta}_b = \Delta_b$
\end{itemize}

\begin{table}[ht]
    \centering
    \begin{tabular}{p{1in} | c c c | c c c | c c}
         Notes & $d(s, s_d)$ & $a$ & $d(s', s_d)$ & $f(s)$ & $g_s(a)$ & $f(s')$ & $T(s, a, s')$ & $\widehat T(\ f(s), g_s(a), f(s')\ )$\\ \toprule
         $d(s, s_d) > 0$ & $d(s, s_d)$ & $a \in \mathcal{A}_g$ & $d(s, s_d) + 1$ & $\widehat s_{d(s, s_d) - 1}$ & $1$ & $\widehat s_{d(s, s_d)}$ & $p$ & $\widehat p_g$\\
         $d(s, s_d) > 0$ & $d(s, s_d)$ & $a \in \mathcal{A}_g$ & $d(s, s_d)$ & $\widehat s_{d(s, s_d) - 1}$ & $1$ & $\widehat s_{d(s, s_d)}$ & $1 - p$ & $1 - \widehat p_g$ \\ \midrule
         $d(s, s_d) > 1$ & $d(s, s_d)$ & $a \in \mathcal{A}_d$ & $d(s, s_d) - 1$ & $\widehat s_{d(s, s_d) - 1}$ & $0$ & $\widehat s_{d(s, s_d)}$ & $p$ & $\widehat p_\ell$\\
         $d(s, s_d) > 1$ & $d(s, s_d)$ & $a \in \mathcal{A}_d$ & $d(s, s_d)$ & $\widehat s_{d(s, s_d) - 1}$ & $0$ & $\widehat s_{d(s, s_d) - 1}$ & $1 - p$ & $1 - \widehat p_\ell$\\ \midrule
         & $d(s, s_d) = 1$ & $a \in \mathcal{A}_d$ & $d(s, s_d) = 0$ & $\widehat s_{0}$ & $0$ & $\widehat s_{d}$ & $p$ & $\widehat p_{d0}$\\ 
         $d(s, s_d) = 1$ & $1$ & $a \in \mathcal{A}_d$ & $0$ & $\widehat s_{0}$ & $0$ & $\widehat s_{d}$ & $1 - p$ & $1 - \widehat p_{d0}$\\ \midrule
    \end{tabular}
    \caption{Equivalence of progress world transitions. \textnormal{All possible \emph{progress world} transitions in $T$ are equivalent to the \emph{chainworld} transitions in $\widehat T$ under mappings $f$ and $g$. Transitions $T(s, a, s')$ with probability $0$ are not shown; these are clearly still $0$ probability under $\widehat T(\ f(s), g_s(a), f(s')\ )$, since the grouped rows sum to $1$ for both $T$ and $\widehat T$.}}
    \label{tab: progress-transitions}
\end{table}

\begin{table}[ht]
    \centering
    \begin{tabular}{p{1.5in} | c c c | c c c | c c}
         Notes & $s$ & $a$ & $s'$ & $f(s)$ & $g_s(a)$ & $f(s')$ & $R(s, a)$ & $\widehat R(\ f(s), g_s(a)\ )$\\ \toprule
         & $s_d$ & --- & --- & $\widehat s_d$ & --- & --- & $r_d$ & $\widehat r_d$ \\
         & $s_g$ & --- & --- & $\widehat s_g$ & --- & --- & $r_g$ & $\widehat r_g$ \\
         $d(s', s_d) < d(s, s_d)$ where $s'$ results from action $a$ & --- & $a$ & --- & --- & 1 & --- & $r_b$ & $\widehat r_b$ \\
    \end{tabular}
    \caption{Equivalence of progress world rewards. \textnormal{All possible \emph{progress world} rewards in $R$ are equivalent to the \emph{chainworld} rewards in $\widehat R$ under mappings $f$ and $g$. We use ``---'' to represent any action or state. For all other $s, a, s'$ combinations not shown, $R(s, a, s') = \widehat R(\ f(s), g_s(a), f(s')\ ) = 0$.} }
    \label{tab: progress-rewards}
\end{table}

In \cref{tab: progress-transitions}, we show that $T(s, a, s') = \widehat T_{\widehat \theta} (\ f(s), g_{s}(a), f(s) \ )$ for all $s \in \mathcal{S}, a \in \mathcal{A}, s' \in \mathcal{S}$. In \cref{tab: progress-rewards}, we show that $R(s, a, s') = \widehat R_{\widehat \theta} (\ f(s), g_{s}(a), f(s) \ )$ for all $s \in \mathcal{S}, a \in \mathcal{A}, s' \in \mathcal{S}$. As a result, we can invoke \cref{thm: human_to_ai_equivalence}.

\end{proof}

\subsection{Proof of equivalence with \emph{multi-chain disengagement worlds}} 
\label{appendix: multi-chain-proof}

\begin{definition}[Multi-chain disengagement worlds]
Let $\multi{\mdpclass}$ denote the class of multi-chain MDPs, so that an MDP $\mdp \in \multi{\mdpclass} = \langle \mathcal{S}, \mathcal{A}, T, R, \gamma, \Delta_\gamma, \Delta_b \rangle$ is defined as follows: 

\begin{itemize}
    \item $s = [s_0, s_2, \ldots, s_C]$, where $s_c \in \{0, \ldots, N_c\}$ denotes the current placement along chain $c$ of length $N_c$. 
    \begin{itemize}
        \item The first chain, $c = 0$, represents the \emph{goal chain}; when the human reaches the end of this chain, they have reached the goal. The set of goal states is $\mathcal{S}_g = \{s \ | \ s_0 = N_0\}$. 
        \item The remaining chains, $s_1, \ldots, s_C$, represent \emph{disengagement chains}. When the human reaches the end of any of these chains, they disengage. The set of disengagement states is $\mathcal{S}_d = \{s \ | \ \exists s_c = N_c \forall c \in \{1, \ldots, C\} \}$ 
    \end{itemize}    
    \item $\mathcal{A} = \{0, 1, 2\}$. The action $a = 1$ allows the human to move along goal chain $c =0$. The action $a = 0$ allows the human to move along the disengagement chain $c > 0$. The action $a = 2$ allows the human recover, by moving backwards on all the disengagement chains. 
    \item $T$. Each chain $c$ is associated with a probability of movement, $p_c$, conditioned on actions as described below: 
    \begin{itemize}
        \item When $a = 0$, the human loses progress in the goal chain with probability $p_\ell$ and independently moves along each disengagement chain $c>0$ with probability $p_c^0$. 
        \item When $a = 1$, the human moves along each chain with probability $p_c^1$. 
        \item When $a = 2$, the human moves backwards on each disengagement chain $c>0$ with probability $1$. The human stays still in the goal chain. 
    \end{itemize}
    \item $R(s, a, s') = \begin{cases}
        r_d & s_c = N_c \text{ for } 0 < 1 \le C \text{ (reached end of a disengagement chain)} \\
        r_g & s_0 = N_0 \text{ (reached end of goal chain)} \\
        r_\ell & s_0' < s_0 \\
        r_b, & a = 1  \text{ or } a = 2
    \end{cases}$, 
    
    where $r_d > 0$, $r_g > 0$, $r_b < 0$, $r_l < 0$. 
    \item $\gamma > 0$
    \item $\Delta = [\Delta_\gamma, \Delta_b]$. AI action $\ai{a} = a_\gamma$ increases $\gamma$ by $\Delta_\gamma$ and AI action $\ai{a} = a_b$ reduces $r_b$ by $\Delta_b$. 
\end{itemize}
\end{definition}

In summary, a multi-chain world $\mdp_\theta \in \multi{\mdpclass}$ is parameterized by $\theta = \{\underbrace{p_\ell, p_1^0, p_2^0, \ldots, p_C^0, p_0^1, p_2^1, \ldots, p_C^1 }_\text{transitions}, \underbrace{r_g, r_d, r_\ell, r_b}_\text{rewards}, \underbrace{\gamma}_\text{discount}, \underbrace{\Delta_\gamma, \Delta_b}_\text{app effect} \}$. 
We prove equivalence for two subsets of multi-chain disengagement worlds, shown in \cref{fig: multi-chain-cases}. 

\begin{figure}[ht]
    \centering
    \begin{subfigure}{0.3\linewidth}
        \centering
        \includegraphics[width=0.95\linewidth]{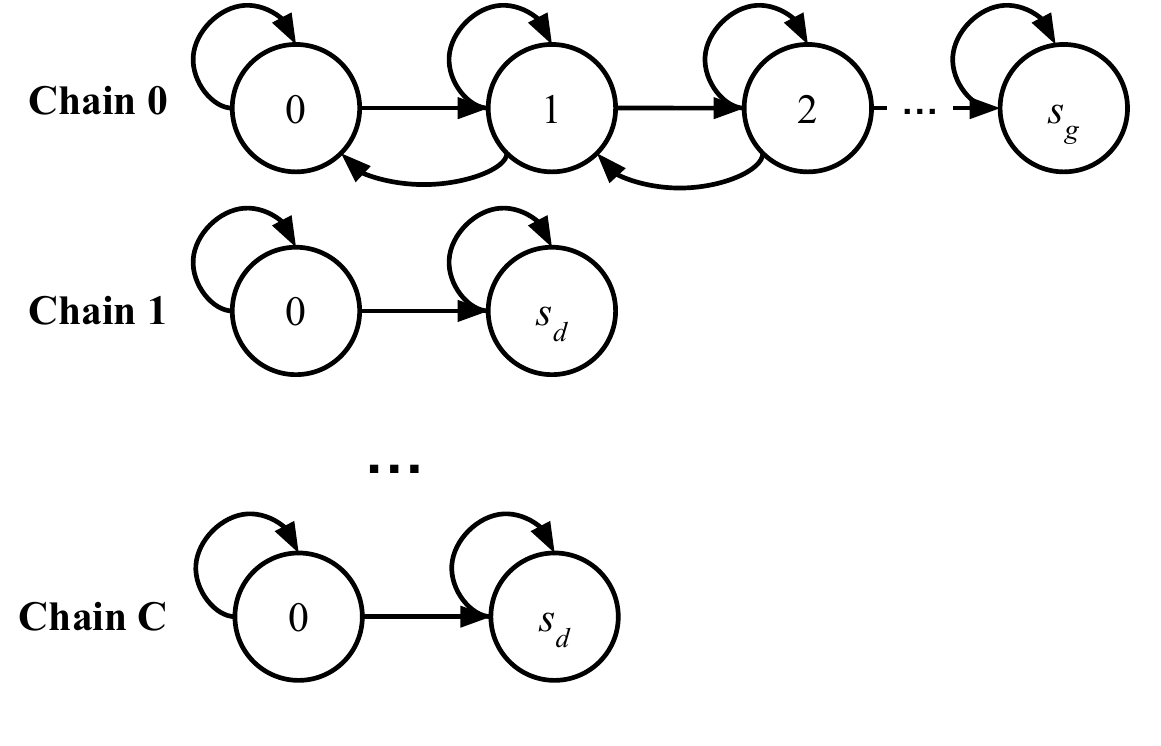}
        \caption{Case A}
        \label{fig: multi-chain-case-A}
    \end{subfigure}%
    \begin{subfigure}{0.3\linewidth}
        \centering
        \includegraphics[width=0.95\linewidth]{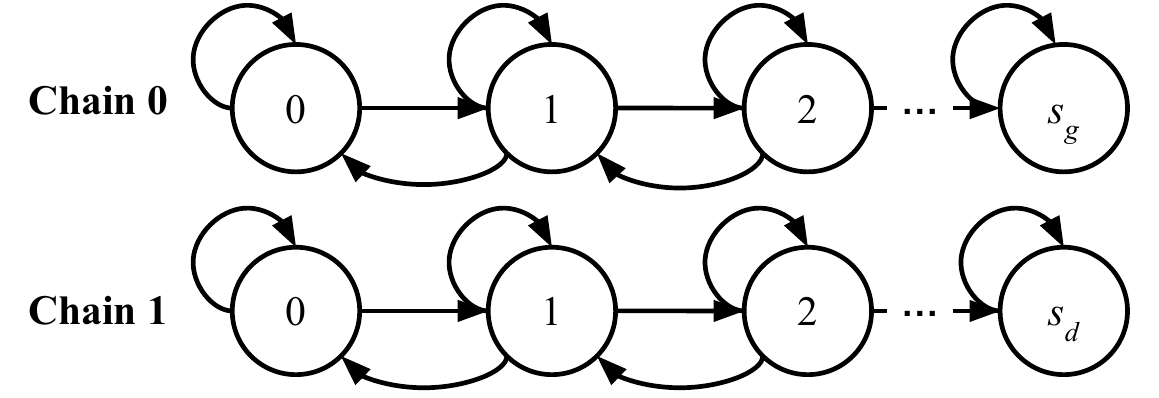}
        \caption{Case B}
        \label{fig: multi-chain-case-B}
    \end{subfigure}%
    \caption{Two types of multi-chain disengagement worlds }
    \label{fig: multi-chain-cases}
\end{figure}

\newpage
\subsubsection{Multi-chain Disengagement Case A}
\label{appendix: multi-chain-case-A}
\begin{theorem}[Chainworld equivalence to case A]
Suppose $\mdp_\theta \in \multi{\mdpclass}$ is a multi-chain MDP like the one shown in \cref{fig: multi-chain-case-A}, where:
\begin{itemize}
    \item There are $C$ chains and the disengagement chains are all of length $2$ ($N_c = 2$ for $c = 1, \ldots, C - 1$). Note that the recovery action $a = 2$ is no longer available in this setting, because the disengagement state is absorbing and the disengagement chains are of length $2$, so that once the human moves along any disengagement chain, they have disengaged and cannnot recover. 
    \item When the agent takes action to move along the goal chain, it stays still in all other disengagement chains. Specifically, movement along all disengagement chains is impossible when $a =1$ so that $p_c^1 = 0$ for all $c = 1, \ldots, N_c$. Movement along the goal chain is still possible, $p_0^1 > 0$. 
    \item When $a=0$, the agent either loses progress or disengages, but not both.
\end{itemize}

Then, there exists a chainworld $\widehat \mdp_{\widehat \theta} \in \simplechains$ such that $\mdp_\theta \equiv \widehat \mdp_{\widehat \theta}$ under state mapping, 

$$f(s = [s_0, s_1, \ldots, s_{C - 1}]) =
\begin{cases}
    \widehat s_{s_0}, & s_0 < N_0 \text{ and } s_c = 0 \text{ for all } c = 1, \ldots, C-1\\
    \widehat s_d, & s_c = 1 \text{ for any } c \ge 1\\
    \widehat s_g & s_0 = N_0
\end{cases}
$$
and action mapping $g_s(a) = \indicator{a > 0}.$
        
\end{theorem}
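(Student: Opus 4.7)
The plan is to exhibit explicit chainworld parameters $\widehat{\theta}$ so that the transition and reward functions of $\widehat{\mdp}_{\widehat{\theta}}$ match those of $\mdp_\theta$ under the given mappings $f$ and $g_s$, and then invoke \cref{thm: human_to_ai_equivalence} to conclude AI equivalence. The essential idea is that under Case~A the $C-1$ length-2 disengagement chains collapse into a single scalar disengagement probability in the chainworld, because each such chain is absorbing the instant it is entered and because the ``lose progress or disengage, but not both'' assumption makes the disengagement and progress-loss events disjoint.

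First I would set $\widehat{N} = N_0$, inherit rewards and discount via $\widehat{r}_g = r_g$, $\widehat{r}_d = r_d$, $\widehat{r}_\ell = r_\ell$, $\widehat{r}_b = r_b$, $\widehat{\gamma} = \gamma$, $\widehat{\Delta}_\gamma = \Delta_\gamma$, $\widehat{\Delta}_b = \Delta_b$, and set the transition parameters to $\widehat{p}_g = p_0^1$, $\widehat{p}_\ell = p_\ell$, and $\widehat{p}_d = \widehat{p}_{d0} = \sum_{c=1}^{C-1} p_c^0$. The equality $\widehat{p}_d = \widehat{p}_{d0}$ is forced because at $[0, 0, \ldots, 0]$ progress cannot be lost but disengagement under $a = 0$ still occurs with the same total probability as at any interior goal-chain state.

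Next I would verify transition and reward equivalence in a case-by-case table, in the style of \cref{tab: progress-transitions} and \cref{tab: progress-rewards}. The cases are: (i) action $a = 1$ from $[s_0, 0, \ldots, 0]$ with $s_0 < N_0$ moves to $[s_0 + 1, 0, \ldots, 0]$ with probability $p_0^1 = \widehat{p}_g$ and otherwise stays, because $p_c^1 = 0$ for $c \ge 1$; (ii) $a = 0$ from an interior state $[s_0, 0, \ldots, 0]$ with $s_0 > 0$ loses progress with probability $p_\ell = \widehat{p}_\ell$, disengages along some chain with total probability $\sum_c p_c^0 = \widehat{p}_d$, and otherwise stays; (iii) $a = 0$ from $[0, 0, \ldots, 0]$ only disengages, with probability $\sum_c p_c^0 = \widehat{p}_{d0}$; (iv) goal and disengagement states are absorbing in both worlds. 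Reward equivalence is then immediate: $f$ sends every disengaged state to $\widehat{s}_d$ with reward $\widehat{r}_d$, the goal state to $\widehat{s}_g$ with reward $\widehat{r}_g$, and $\widehat{r}_b$ is charged precisely when $g_s(a) = 1$.

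The main obstacle I anticipate is verifying that $f$ is well-defined on the entire reachable state space. A priori the multi-chain state space contains tuples with several simultaneously nonzero disengagement coordinates, which are not in the domain of $f$. However, starting from any $[s_0, 0, \ldots, 0]$, action $a = 1$ preserves the zero disengagement coordinates (by $p_c^1 = 0$), and action $a = 0$ either leaves them at zero or sets exactly one $s_c$ to $1$, after which the state lies in $\mathcal{S}_d$ and is absorbing. Hence only states of the form $[s_0, 0, \ldots, 0]$ together with absorbing disengagement and goal states are ever visited, transition and reward equivalence holds on this reachable subgraph, and \cref{thm: human_to_ai_equivalence} delivers $\mdp_\theta \equiv \widehat{\mdp}_{\widehat{\theta}}$.
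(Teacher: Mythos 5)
Your overall strategy is exactly the paper's: fix $\widehat N = N_0$, inherit the rewards, discount, and intervention effects, set $\widehat p_g = p_0^1$ and $\widehat p_\ell = p_\ell$, verify transition and reward equivalence case by case, and invoke \cref{thm: human_to_ai_equivalence}. Your observation that $f$ only needs to be defined on the reachable subgraph (states of the form $[s_0,0,\ldots,0]$ plus the absorbing goal and disengagement states) is a point the paper leaves implicit, and it is worth making explicit.

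There is, however, one concrete error: you set $\widehat p_d = \widehat p_{d0} = \sum_{c\ge 1} p_c^0$, but the multi-chain definition stipulates that under $a=0$ the human moves along each disengagement chain \emph{independently} with probability $p_c^0$. Since every length-$2$ disengagement chain absorbs the instant it is entered, the event ``disengage'' is ``at least one chain moves,'' whose probability is $1 - \prod_{c}(1 - p_c^0)$, not $\sum_c p_c^0$; the two agree only to first order (and your sum can exceed $1$). The ``loses progress or disengages, but not both'' hypothesis makes progress loss disjoint from disengagement, but it does not make the individual chains' movements mutually exclusive. With your choice, the stay-still probability in the chainworld, $1 - \widehat p_\ell - \widehat p_d = 1 - p_\ell - \sum_c p_c^0$, would not match the true multi-chain stay-still probability $\prod_c(1-p_c^0) - p_\ell$, so the transition table you propose to build would fail to close. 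Replacing your formula with $\widehat p_d = \widehat p_{d0} = 1 - \prod_{c}(1 - p_c^0)$ (as the paper does) repairs the argument with no other changes.
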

\begin{proof}
Consider the following chainworld parameters $\widehat \theta$: 
\begin{itemize}
    \item Length of chain $\widehat N = N_0$
    \item Goal reward $\widehat r_g = r_g$
    \item Disengagement reward $\widehat r_d = r_d$
    \item Progress loss reward $\widehat r_\ell = r_\ell$
    \item Burden reward $\widehat r_b = r_b$
    \item Probability of moving toward goal $\widehat p_g = p_0^1$
    \item Probability of losing progress $\widehat p_\ell = p_\ell$
    \item Probability of disengagement $\widehat p_d = 1 - \prod \limits_{c = 1}^C (1 - p_c^0)$
    \item Probability of disengagement at state $0$ $\widehat p_{d0} = 1 - \prod \limits_{c = 1}^C (1 - p_c^0)$
    \item Discount factor $\widehat \gamma = \gamma$
    \item Effect of AI intervention on discount $\widehat{\Delta}_\gamma = \Delta_\gamma$
    \item Effect of AI intervention on burden $\widehat{\Delta}_b = \Delta_b$
\end{itemize}

\begin{table}[ht]
    \centering
    \begin{tabular}{p{1in} | c c c | c c c | c c}
         Notes & $s$ & $a$ & $s'$ & $f(s)$ & $g_s(a)$ & $f(s')$ & $T(s, a, s')$ & $\widehat T(\ f(s), g_s(a), f(s')\ )$\\ \toprule
         $n  = \{0, \ldots, N_0 - 1\}$ & $[n, 0, 0, \ldots, 0]$ & $1$ & $[n, 0, 0, \ldots, 0]$ & $\widehat s_n$ & $1$ & $\widehat s_{n}$ & $1 - p_0^1$ & $1 - \widehat p_g$\\
         $n  = \{0, \ldots, N_0 - 1\}$ & $[n, 0, 0, \ldots, 0]$ & $1$ & $[n+1, 0, 0, \ldots, 0]$ & $\widehat s_n$ & $1$ & $\widehat s_{n+1}$ & $p_0^1$ & $\widehat p_g$\\ \midrule
         $n  = \{1, \ldots, N_0 - 1\}$ & $[n, 0, 0, \ldots, 0]$ & $0$ & $[n, 0, 0, \ldots, 0]$ & $\widehat s_n$ & $0$ & $\widehat s_{n}$ & $ \prod \limits_{c = 1}^C (1 - p_c^0) - p_\ell $ & $1 - \widehat p_\ell - \widehat p_d$\\
         $n  = \{1, \ldots, N_0 - 1\}$ & $[n, 0, 0, \ldots, 0]$ & $0$ & $[n, \ldots, s_c, \ldots], \exists s_c = 1$ & $\widehat s_n$ & $0$ & $\widehat s_{d}$ & $1 - \prod \limits_{c = 1}^C (1 - p_c^0)$ & $\widehat p_d$\\
         $n  = \{1, \ldots, N_0 - 1\}$ & $[n, 0, 0, \ldots, 0]$ & $0$ & $[n - 1, 0, 0, \ldots, 0]$ & $\widehat s_n$ & $0$ & $\widehat s_{n - 1}$ & $p_\ell$ & $\widehat p_\ell$\\ \midrule
         & $[0, 0, \ldots, 0]$ & $0$ & $[0, 0, \ldots, 0]$ & $\widehat s_0$ & $0$ & $\widehat s_0$ & $\prod \limits_{c = 1}^C (1 - p_c^0)$ & $1 - \widehat p_d$\\
         & $[0, 0, \ldots, 0]$ & $0$ & $[n, \ldots, s_c, \ldots], \exists s_c = 1$& $\widehat s_0$ & $0$ & $\widehat s_d$ & $1 - \prod \limits_{c = 1}^C (1 - p_c^0)$ & $\widehat p_d$\\
    \end{tabular}
    \caption{Equivalence of multi-chain case A transitions. \textnormal{All possible \emph{multi-chain world} transitions in $T$ are equivalent to the \emph{chainworld} transitions in $\widehat T$ under mappings $f$ and $g$. Transitions $T(s, a, s')$ with probability $0$ are not shown; these are clearly still $0$ probability under $\widehat T(\ f(s), g_s(a), f(s')\ )$, since the grouped rows sum to $1$ for both $T$ and $\widehat T$.}}
    \label{tab: multi-chain-A-transitions}
\end{table}

\begin{table}[ht]
    \centering
    \begin{tabular}{p{1in} | c c c | c c c | c c}
         Notes & $s$ & $a$ & $s'$ & $f(s)$ & $g_s(a)$ & $f(s')$ & $R(s, a)$ & $\widehat R(\ f(s), g_s(a)\ )$\\ \toprule
         & $[N_0, \ldots]$ & --- & --- & $\widehat s_g$ & --- & --- & $r_g$& $\widehat r_g$\\
         & $[s_0, \ldots, s_c, \ldots], \exists s_c = 1$ & --- & --- & $\widehat s_d$ & --- & --- & $r_d$& $\widehat r_d$\\
         & $[n, 0, 0, \ldots, 0]$ & --- & $[n-1, 0, 0, \ldots, 0]$ & $\widehat s_n$ & 0 & $\widehat s_{n - 1}$ & $r_\ell$& $\widehat r_\ell$\\
         & --- & $1$ & --- & --- & $1$ & --- & $r_b$ & $\widehat r_b$\\
    \end{tabular}
    \caption{Equivalence of multi-chain case A rewards. \textnormal{All possible \emph{multi-chain world} rewards in $R$ are equivalent to the \emph{chainworld} rewards in $\widehat R$ under mappings $f$ and $g$. We use ``---'' to represent any action or state. For all other $s, a, s'$ combinations not shown, $R(s, a, s') = \widehat R(\ f(s), g_s(a), f(s')\ ) = 0$. }}
    \label{tab: multi-chain-A-rewards}
\end{table}

In \cref{tab: multi-chain-A-transitions}, we show that $T(s, a, s') = \widehat T_{\widehat \theta} (\ f(s), g_{s}(a), f(s) \ )$ for all $s \in \mathcal{S}, a \in \mathcal{A}, s' \in \mathcal{S}$. In \cref{tab: multi-chain-A-rewards}, we show that $R(s, a, s') = \widehat R_{\widehat \theta} (\ f(s), g_{s}(a), f(s) \ )$ for all $s \in \mathcal{S}, a \in \mathcal{A}, s' \in \mathcal{S}$. As a result, we can invoke \cref{thm: human_to_ai_equivalence}. 
\end{proof}

\newpage
\subsubsection{Multi-chain Disengagement Case B}
This world represents situations in which the human makes progress toward a goal (e.g. a rehabilitated shoulder), but must also manage an additional factor that may cause them to disengage (e.g. exercise fatigue). To manage this, the human has an additional action, $a = 2$, which allows them to recover from fatigue in exchange for not making progress toward the goal (e.g. taking a rest day).

\begin{theorem}[Chainworld equivalence to case B]
If $\mdp_\theta \in \multi{\mdpclass}$ is a multi-chain MDP with $C = 2$, a goal chain ($c = 0$) and a disengagement chain ($c = 1$), rewards $r_d = 0, r_\ell = 0$, and transitions: 
\begin{itemize}
    \item When $a=0$, the agent stays still in the goal chain ($p_\ell = 0$) and always moves along the disengagement chain ($p_1^0 = 1$)
    \item When $a = 1$, the agent deterministically makes progress along both chains, $p_0^1 = p_1^1 = 1$. 
    \item When $a=2$, the agent deterministically moves backward on the disengagement chain. 
\end{itemize}

then there exists $\widehat \mdp_{\widehat \theta} \in \simplechains$ such that $\mdp_\theta \equiv \widehat \mdp_{\widehat \theta}$ under state mapping, 

$$f(s = [s_0, s_1]) = 2 N_0 - N_0 + s_0 - \indicator{N_0 - s_0 > N_1 - s_1}(N_0 - s_0 - N_1 + s_1)$$, 

and action mapping $g_s(a) = \indicator{a > 0}.$
\end{theorem}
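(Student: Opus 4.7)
The plan is to construct an explicit chainworld $\widehat{\mdp}_{\widehat{\theta}} \in \simplechains$ and verify equivalence through Theorem~\ref{thm: human_to_ai_equivalence}, which reduces the problem to checking that rewards and transitions match under the given state mapping $f$ and action mapping $g_s$. The central idea is that in Case B, the two-dimensional state $[s_0, s_1]$ admits a natural one-dimensional notion of ``effective progress'': when the goal is closer than disengagement ($\delta_0 \le \delta_1$, with $\delta_i = N_i - s_i$), position on the chain tracks $s_0$; otherwise the effective position is pulled back by the deficit $\delta_0 - \delta_1$. This is precisely what the formula for $f$ encodes.

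First I would fix the chainworld parameters: chain length $\widehat{N} = 2N_0$, so that the goal state maps to $\widehat{s}_{2N_0} = \widehat{s}_g$ and disengagement states map into the bottom of the chain; deterministic transitions $\widehat{p}_g = 1$, $\widehat{p}_\ell = 1$, $\widehat{p}_d = 0$ except at the first state where $\widehat{p}_{d0} = 1$; reward parameters $\widehat{r}_g = r_g$, $\widehat{r}_b = r_b$, $\widehat{r}_\ell = 0$, $\widehat{r}_d = 0$; and inherited $\widehat{\gamma} = \gamma$, $\widehat{\Delta}_\gamma = \Delta_\gamma$, $\widehat{\Delta}_b = \Delta_b$. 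This choice reflects the deterministic dynamics of Case B and the fact that $r_\ell = r_d = 0$.

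Next I would perform a case-by-case verification of transitions, partitioning states into the ``safe zone'' ($\delta_0 \le \delta_1$) and the ``risk zone'' ($\delta_0 > \delta_1$). A direct computation shows that under $a = 1$ the next state is $[s_0+1, s_1+1]$, preserving the sign of $\delta_0 - \delta_1$, so $f$ always advances by exactly one position, matching $\widehat{p}_g = 1$ under $g_s(a) = 1$. Under $a = 2$, the disengagement chain moves back while the goal chain stays fixed; in the safe zone $f$ is unchanged (since the indicator stays zero), and in the risk zone $f$ still increases by one, again consistent with the chainworld progress transition under $g_s(a) = 1$. Under $a = 0$, $f$ is unchanged in the safe zone and decreases by one in the risk zone, which I would match by a careful choice of what ``losing progress'' means in the chainworld. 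The reward mapping is essentially immediate: burden $r_b$ is incurred exactly when $g_s(a) = 1$, the goal reward only at $\widehat{s}_g$, and zero elsewhere by construction. Once all transitions and rewards are shown to agree under the mappings, Theorem~\ref{thm: human_to_ai_equivalence} yields the equal AI optimal policies required by Definition~\ref{def: ai-equivalence}.

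The principal obstacle is the collapse of three multi-chain actions $\{0,1,2\}$ into two chainworld actions via $g_s$. This is only internally consistent because $a = 2$ and $a = 1$ produce the same change in $f$ (namely $+1$) in the risk zone and both incur the same burden $r_b$, so the chainworld cannot distinguish them and has no need to; in the safe zone, $a = 2$ has no effect on $f$ but the optimal human policy never prefers $a = 2$ there over $a = 1$, since both cost $r_b$ and only $a = 1$ advances goal progress. The subtle part of the proof is formalising this ``action collapse'' argument: one must verify that the human's optimal policy under the multi-chain MDP, when composed with $g_s$, coincides with the optimal chainworld human policy at $f(s)$, so that the AI transitions $\ai{T} = \human{T}\,\human{\pi}$ agree on both sides. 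Once that is established, the remainder is bookkeeping and the conclusion follows.
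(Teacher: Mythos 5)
Your construction of $\widehat\theta$ and your overall route --- fix the chainworld with $\widehat N = 2N_0$, $\widehat p_g=\widehat p_\ell=\widehat p_{d0}=1$, $\widehat p_d=0$, $\widehat r_\ell=\widehat r_d=0$, match transitions and rewards under $f$ and $g_s$, then invoke \cref{thm: human_to_ai_equivalence} --- are exactly the paper's. The problem is in the middle step. Writing $\delta_i = N_i - s_i$, the mapping simplifies to $f(s) = 2s_0 + \min(\delta_0,\delta_1)$, and your case analysis is essentially right: under $a=0$ the image drops by one only when $\delta_1 \le \delta_0$ and is \emph{unchanged} when $\delta_0 < \delta_1$; under $a=2$ the image rises by one only when $\delta_1 < \delta_0$ and is unchanged otherwise. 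But the chainworld you have already committed to has $\widehat p_\ell = 1$ and $\widehat p_g = 1$ (forced by the deterministic risk-zone dynamics), so under $g_s(0)=0$ the image state must move back with probability one, and under $g_s(2)=1$ it must move forward with probability one. For any state with $\delta_0 < \delta_1$ the identity $T(s,a,s') = \widehat T(f(s), g_s(a), f(s'))$ therefore fails, and the hypothesis of \cref{thm: human_to_ai_equivalence} is not met. Deferring this to ``a careful choice of what losing progress means'' is not a proof step: with $\widehat\theta$ as stated there is no such choice. To close the gap you must either restrict to the regime where every reachable state satisfies $\delta_1 \le \delta_0$ (and prove invariance of that region, noting $a=1$ preserves $\delta_0-\delta_1$, $a=0$ increases it, and $a=2$ decreases it, so the crossing via repeated $a=2$ must be ruled out), or modify $f$ or $\widehat\theta$. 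For reference, the paper's own transition table asserts $f([s_0,s_1+1])=f(s)-1$ and $f([s_0,s_1-1])=f(s)+1$ unconditionally, which likewise holds only when $\delta_1\le\delta_0$; your analysis is the more honest one, but as written it leaves the claim unproved on the remaining states rather than resolving them.

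Your second concern --- the collapse of $a=1$ and $a=2$ onto $\widehat a = 1$ --- is well placed and is also left implicit in the paper. Since \cref{thm: human_to_ai_equivalence} demands the transition and reward identities for \emph{every} preimage action, both $a=1$ and $a=2$ must induce the identical image transition (they do in the risk zone: both send $f(s)$ to $f(s)+1$ deterministically and both incur $r_b$), and one must additionally check that the pushforward of the human's optimal multi-chain policy through $g_s$ coincides with the optimal chainworld policy at $f(s)$, so that $\ai{T} = \human{T}\,\human{\pi}$ agrees on both sides. Your sketch of why the human never strictly prefers $a=2$ where it behaves differently from $a=1$ is the right idea, but it needs to appear as an actual argument in the proof rather than as a remark; as it stands, neither of the two subtleties you correctly identified is discharged.
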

\begin{proof}
Consider the following chainworld parameters $\widehat \theta$: 
\begin{itemize}
    \item Length of chain $\widehat N = 2 N_0$
    \item Goal reward $\widehat r_g = r_g$
    \item Disengagement reward $\widehat r_d = r_d$
    \item Progress loss reward $\widehat r_\ell = 0$
    \item Burden reward $\widehat r_b = r_b$
    \item Probability of moving toward goal $\widehat p_g = 1$
    \item Probability of losing progress $\widehat p_\ell = 1$
    \item Probability of disengagement $\widehat p_d = 0$
    \item Probability of disengagement at state $0$, $\widehat p_{d0} = 1$
    \item Discount factor $\widehat \gamma = \gamma$
    \item Effect of AI intervention on discount $\widehat{\Delta}_\gamma = \Delta_\gamma$
    \item Effect of AI intervention on burden $\widehat{\Delta}_b = \Delta_b$
\end{itemize}

\begin{table}[ht]
    \centering
    \begin{tabular}{p{1.9in} | c c c | c c c | c c}
         Notes & $s$ & $a$ & $s'$ & $f(s)$ & $g_s(a)$ & $f(s')$ & $T(s, a, s')$ & $\widehat T(\ f(s), g_s(a), f(s')\ )$\\ \toprule
         Movement along chain $0$ and chain $1$ & $[s_0, s_1]$ & $1$ & $[s_0 + 1, s_1 + 1]$ & $\widehat s_{f(s)}$ & $1$ & $\widehat s_{f(s) + 1}$ & $1$ & $\widehat p_g = 1$\\
         Movement along only chain $1$ & $[s_0, s_1]$ & $0$ & $[s_0, s_1 + 1]$ & $\widehat s_{f(s)}$ & $0$ & $\widehat s_{f(s) - 1}$ & $1$ & $\widehat p_\ell = 1$\\
         Reaching end of chain $1$ & $[s_0, N_1 - 1]$ & $0$ & $[s_0, N_1]$ & $\widehat s_{f(s)}$ & $0$ & $\widehat s_{f(s) - 1}$ & $1$ & $\widehat p_{d0} = 1$\\
         Backwards along chain $1$, for $s_1 > 0$ & $[s_0, s_1]$ & $2$ & $[s_0, s_1 - 1]$ & $\widehat s_{f(s)}$ & $1$ & $\widehat s_{f(s) + 1}$ & $1$ & $\widehat p_g = 1$\\
    \end{tabular}
    \caption{Equivalence of multi-chain case B transitions. \textnormal{All possible \emph{multi-chain} transitions in $T$ are equivalent to the \emph{chainworld} transitions in $\widehat T$ under mappings $f$ and $g$. Transitions $T(s, a, s')$ with probability $0$ are not shown; these are clearly still $0$ probability under $\widehat T(\ f(s), g_s(a), f(s')\ )$, since all rows are $1$ for both $T$ and $\widehat T$}.}
    \label{tab: multi-chain-B-transitions}
\end{table}

\begin{table}[ht]
    \centering
    \begin{tabular}{p{1in} | c c c | c c c | c c}
         Notes & $s$ & $a$ & $s'$ & $f(s)$ & $g_s(a)$ & $f(s')$ & $R(s, a)$ & $\widehat R(\ f(s), g_s(a)\ )$\\ \toprule
         & --- & --- & $[N_0, s_1]$ & $\widehat s_{g}$ & --- & --- & $r_g$ & $\widehat r_g$\\
         & --- & --- & $[s_0, N_1]$ & $\widehat s_{d}$ & --- & --- & $r_d$ & $\widehat r_d$\\
         & --- & $2$ & --- & --- & $1$ & --- & $r_b$ & $\widehat r_b$\\
         & --- & $1$ & --- & --- & $1$ & --- & $r_b$ & $\widehat r_b$\\
    \end{tabular}
    \caption{Equivalence of multi-chain case B rewards. \textnormal{All possible \emph{multi-chain} rewards in $R$ are equivalent to the \emph{chainworld} rewards in $\widehat R$ under mappings $f$ and $g$. We use ``---'' to represent any action or state. For all other $s, a, s'$ combinations not shown, $R(s, a, s') = \widehat R(\ f(s), g_s(a), f(s')\ ) = 0$.} }
    \label{tab: multi-chain-B-rewards}
\end{table}

In \cref{tab: multi-chain-B-transitions}, we show that $T(s, a, s') = \widehat T_{\widehat \theta}(\ f(s), g_{s}(a), f(s) \ )$ for all $s \in \mathcal{S}, a \in \mathcal{A}, s' \in \mathcal{S}$. In \cref{tab: multi-chain-B-rewards}, we show that $R(s, a, s') = R_{\widehat \theta} (\ f(s), g_{s}(a), f(s) \ )$ for all $s \in \mathcal{S}, a \in \mathcal{A}, s' \in \mathcal{S}$. As a result, we can invoke \cref{thm: human_to_ai_equivalence}. 

\end{proof}

\section{Experimental Details}
\label{appendix: experimental-details}

\subsection{Environment descriptions}
\label{appendix: environment-descriptions}
Throughout the experiments, we fix (do not sample) the following parameters per individual, to make the methods easier to compare: $p_g = 1, \Delta_\gamma = 0.3, \Delta_b = -0.4$. All methods (excluding the oracle) do not have access to any of these parameters and must infer them. 

\vskip0.15cm \emph{Chainworld environment}
The chainworld environment is described in the main body of the text. Every individual's parameters are sampled uniformly from the following ranges: 
\begin{itemize}
    \item $r_b: [-1, -0.2]$
    \item $r_d: [0, 1]$
    \item $r_\ell: [-1, 0]$
    \item $r_g: [5, 15]$
    \item $\human{\gamma}: [0.01, 0.99]$
    \item $p_d: [0.1, 0.5]$
    \item $p_{d0}: [p_d, 0.5]$
    \item $p_\ell: [0, 0.4]$
\end{itemize}

\vskip0.15cm \emph{Noisy parameters experiments.}
The \emph{mean} parameter value for each chainworld human is sampled as in the standard chainworld environment above. Then, every timestep, the parameter of interest is sampled uniformly from a range surrounding this mean, as described in the main body of the text. For example, if the environment is testing sensitivity to noise in burden $r_b$, then every timestep, $r_b \sim \text{Uniform}(\bar r_b - c, \bar r_b + c)$, where $\bar r_b$ is the mean burden and $c = \epsilon 5$ is the range such that $\epsilon \in [0, 1]$ is the error level and $5$ is the range assigned to the reward parameters. 

\vskip0.15cm \emph{Distance mapping (gridworld) experiments.}
The gridworld environment is described in the main body of the text. The gridworld has width $X$ and height $Y$. Every individual's parameters are sampled uniformly from the following ranges: 
\begin{itemize}
    \item $r_b: [-1, -0.2]$
    \item $\human{\gamma}: [0.01, 0.99]$
    \item $p: [0.5, 1.0]$
    \item $r_g: [5 \frac{X}{8}, 10 \frac{X}{8}]$
    \item $r_d: [0, \frac{Y}{5}]$
\end{itemize}

We scale the values of the rewards to the size of the gridworld. 

\subsection{Definition of AI agent}
The AI actions are always $\{0, a_\gamma, a_b\}$ for all experiments and the transition is computed directly off of human agent transitions. 

\vskip0.15cm \emph{AI states.}
The AI states $\ai{s} = [\human{s}, \human{a}]$ are composed of the human's current state and previous reward. An AI that uses a chainworld has state space of size $N \times 2$. An AI that plans directly in the gridworld has state space of size $X \times Y \times 4$.

\vskip0.15cm \emph{AI rewards.}
In all experiments, the rewards are as follows: 
\begin{equation}
    \ai{R}(\ai{a}, \ai{a}) = 
    \begin{cases}
        1, & \human{s} = s_g \\
        -50, & \human{s} = s_d \\
        -1, & \ai{a} \ne 0 \\
        -0.5, & \text{otherwise}
    \end{cases}
\end{equation}

\vskip0.15cm \emph{AI discount}
We use an AI agent discount of $\ai{\gamma} = 0.99$.

\subsection{Optimizing chainworld parameters}
\label{appendix: optimizing-chainworld-parameters}
The AI agent must infer the chainworld parameters $\theta$ from the data $\ai{\mathcal{D}} = \{(\ai{s}, \ai{a}, \ai{r}, \ai{s'})\}$. Maximizing the likelihood of the data corresponds to maximizing the likelihood of the observed transitions, since the chainworld parameters are all contained within the AI's transition function: 
$$P(\ai{\mathcal{D}}\ |\ \theta ) = \ai{T}^\theta(\ai{s}, \ai{a}, \ai{s'}).$$

We follow a simple maximization scheme in which we randomly sample possible values of $\theta$ and select the one with the highest likelihood. The candidate $\theta$'s are sampled uniformly from the following ranges: 
\begin{itemize}
    \item $r_b: [-1, 0]$
    \item $r_d: [0, 5]$
    \item $r_\ell: [-5, 0]$
    \item $r_g: [5, 50]$
    \item $\human{\gamma}: [0.01, 0.99]$
    \item $p_g: [0, 1]$
    \item $p_\ell$: see below description
    \item $p_d$: see below description
    \item $p_{d0}: [0, 1]$
    \item $\tau: [0.01, 0.3]$
    \item $\Delta_\gamma: [0, 1]$
    \item $\Delta_b: [-1, 0]$
\end{itemize}

The parameters $p_d$ and $p_\ell$ are constrainted so that $p_d + p_\ell \le 1$. To sample them so that they respect this constraint, we sample them uniformly from a triangle whose vertices are at $[0,0], [0,1], [1,0]$. The parameter $\tau$ refers to the noise level in the softmax action selection policy.

\section{Additional experiments / results}
\label{appendix: supplemental-results}
\subsection{Effect of learning rate for model-free baseline}
In our experiments, the model-free baseline is given a learning rate of $0.9$. Throughout the results, the model-free baseline performs poorly-- equal to using a random AI policy. This is because it requires \emph{much more data} to estimate the optimal value function $\ai{Q}^*$ well. As we show in \cref{fig: model-free}, the model-free baseline requires at least $100$ episodes to learn a policy that outperforms random.

\begin{figure}[ht]
    \centering
    \includegraphics[width=0.5\linewidth]{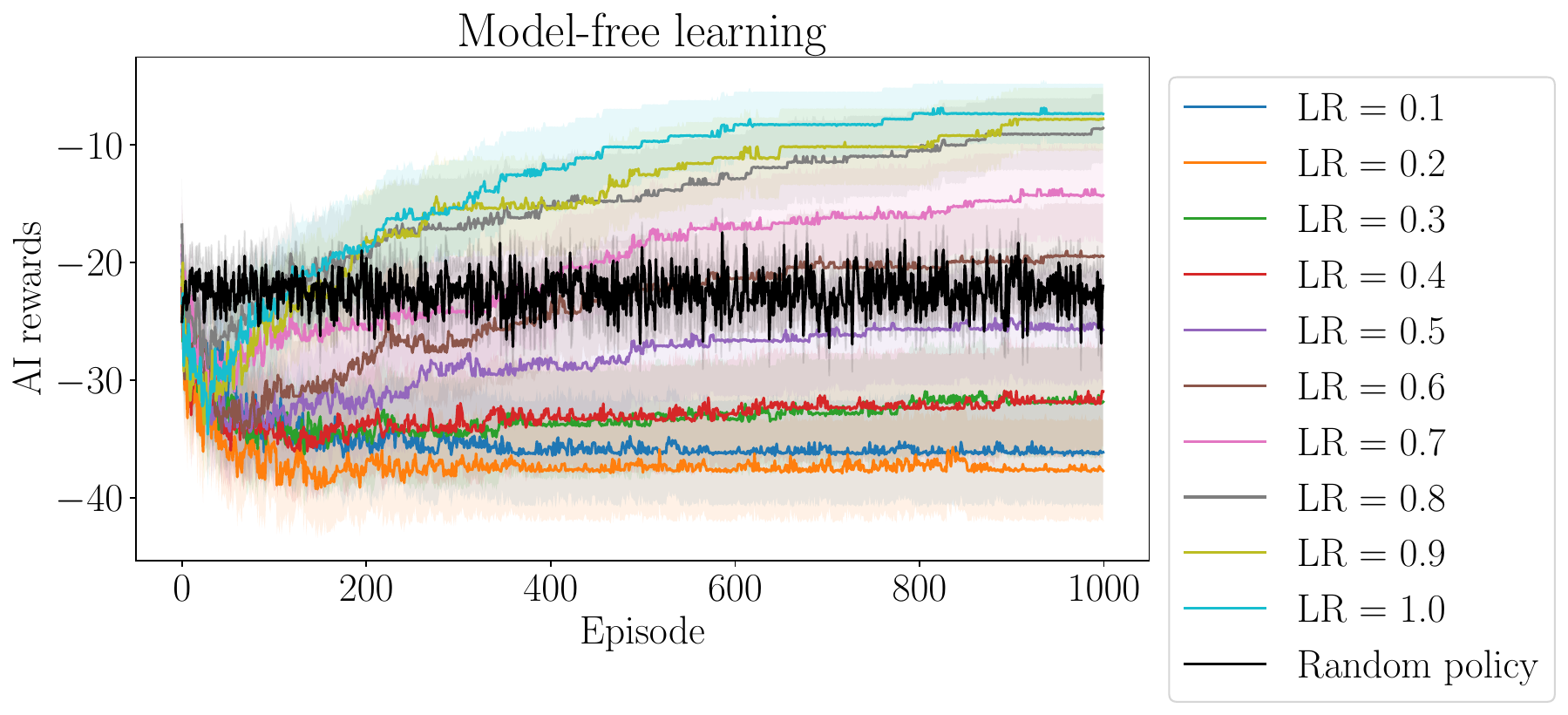}
    \caption{Model-free baseline performance under different learning rates (LR) in the chainworld environment.}
    \label{fig: model-free}
\end{figure}

\subsection{Results with and without filtering for individuals that cannot reach goal}
\label{appendix: helpless-removed}
\begin{figure}[ht]
    \centering
    \begin{subfigure}{0.5\linewidth}
        \includegraphics[width=1\linewidth]{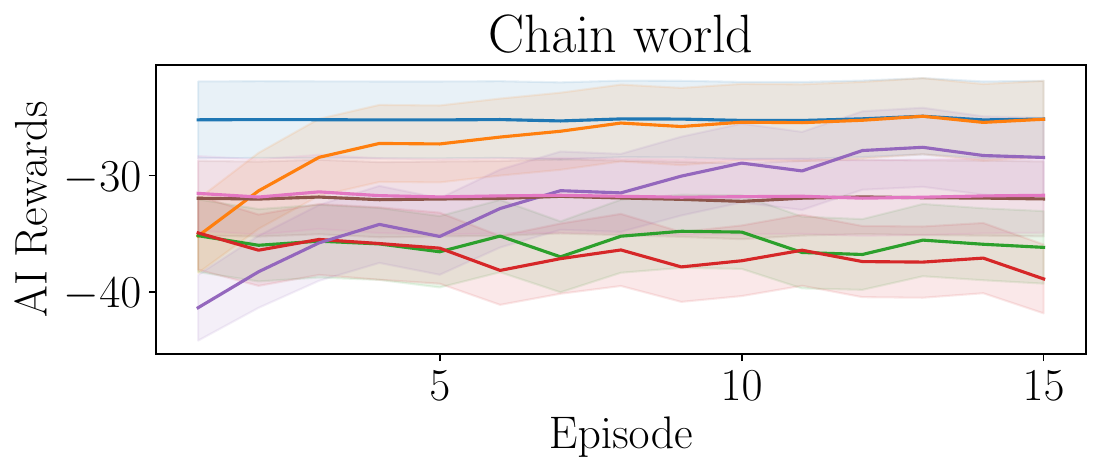}
        \caption{Included humans that cannot reach goal state}
    \end{subfigure}%
    \begin{subfigure}{0.5\linewidth}
        \includegraphics[width=1\linewidth]{figures/res-chain.pdf}
        \caption{Filtered out humans that cannot reach goal state}
    \end{subfigure}%
    \caption{Results with (left) and without (right) individuals that will not reach the goal state under the oracle AI policy.}
    \label{fig: helpless-keep-vs-remove}
\end{figure}

\subsection{Full plots from robustness experiments}
\label{appendix: full-results}
\begin{figure}[ht]
    \centering
    \begin{subfigure}{0.199\linewidth}
        \includegraphics[width=1\linewidth]{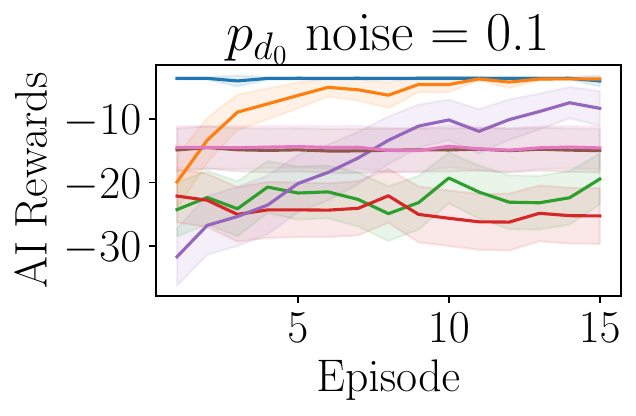}
    \end{subfigure}%
    \begin{subfigure}{0.199\linewidth}
        \includegraphics[width=1\linewidth]{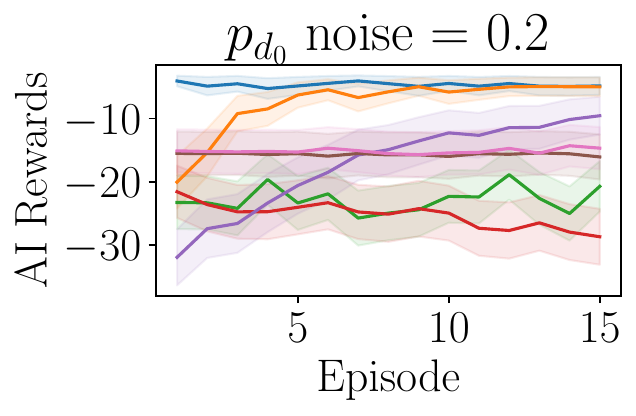}
    \end{subfigure}%
    \begin{subfigure}{0.199\linewidth}
        \includegraphics[width=1\linewidth]{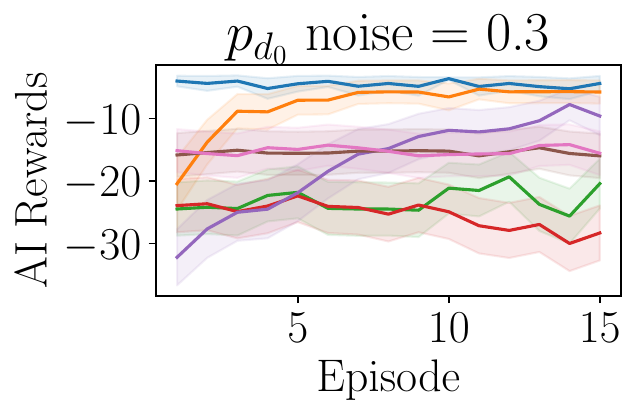}
    \end{subfigure}%
    \begin{subfigure}{0.199\linewidth}
        \includegraphics[width=1\linewidth]{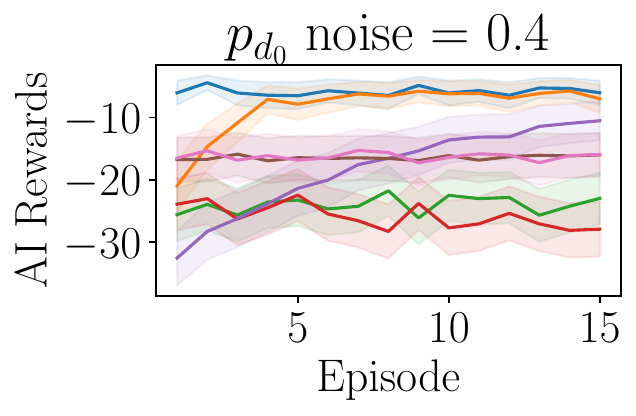}
    \end{subfigure}%
    \begin{subfigure}{0.199\linewidth}
        \includegraphics[width=1\linewidth]{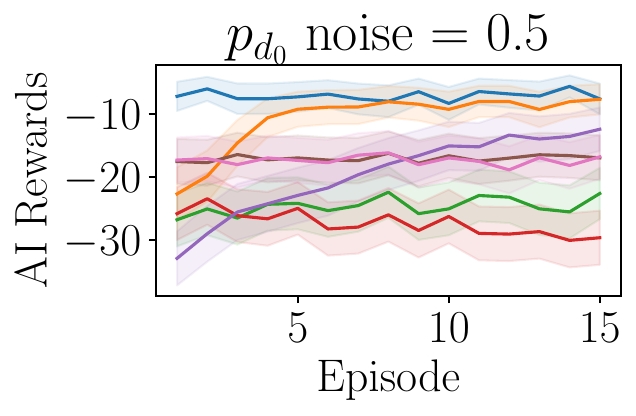}
    \end{subfigure}

    \begin{subfigure}{0.199\linewidth}
        \includegraphics[width=1\linewidth]{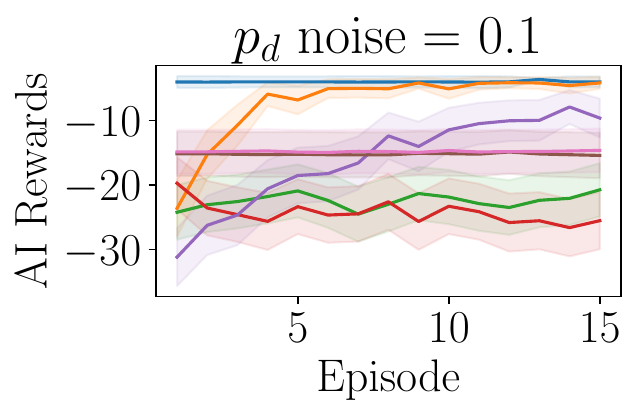}
    \end{subfigure}%
    \begin{subfigure}{0.199\linewidth}
        \includegraphics[width=1\linewidth]{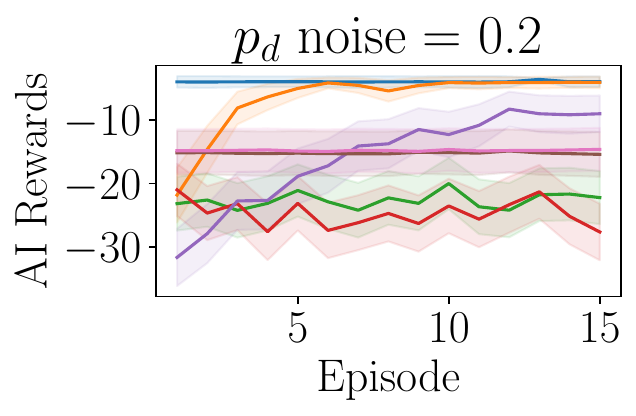}
    \end{subfigure}%
    \begin{subfigure}{0.199\linewidth}
        \includegraphics[width=1\linewidth]{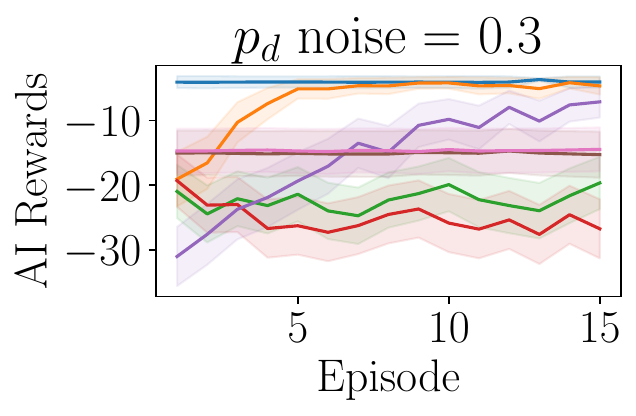}
    \end{subfigure}%
    \begin{subfigure}{0.199\linewidth}
        \includegraphics[width=1\linewidth]{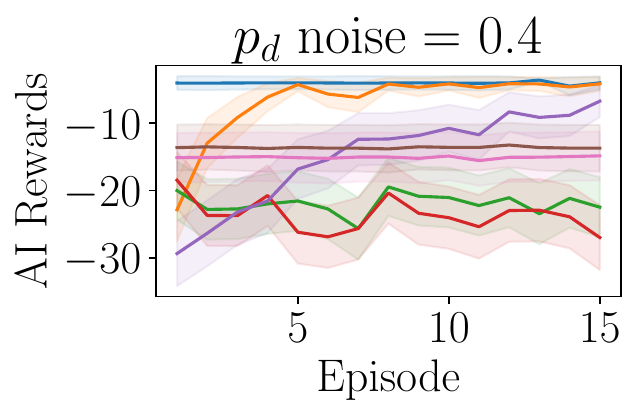}
    \end{subfigure}%
    \begin{subfigure}{0.199\linewidth}
        \includegraphics[width=1\linewidth]{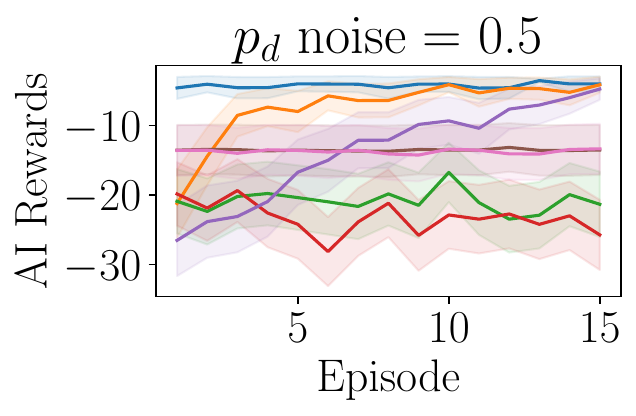}
    \end{subfigure}

    \begin{subfigure}{0.199\linewidth}
        \includegraphics[width=1\linewidth]{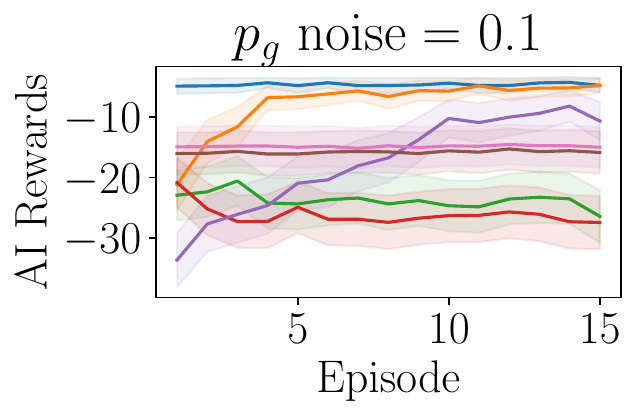}
    \end{subfigure}%
    \begin{subfigure}{0.199\linewidth}
        \includegraphics[width=1\linewidth]{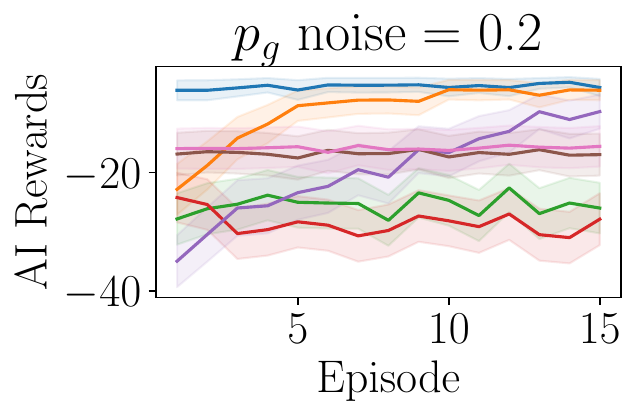}
    \end{subfigure}%
    \begin{subfigure}{0.199\linewidth}
        \includegraphics[width=1\linewidth]{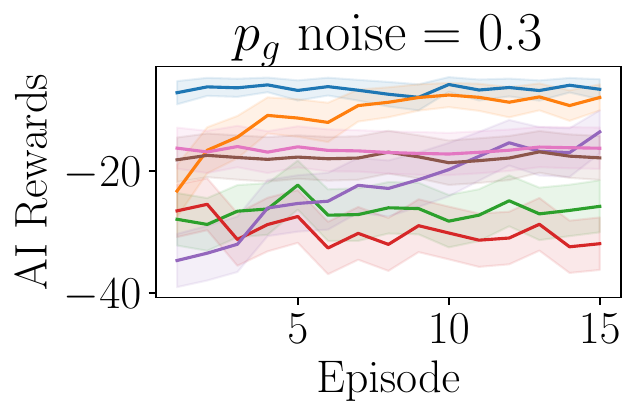}
    \end{subfigure}%
    \begin{subfigure}{0.199\linewidth}
        \includegraphics[width=1\linewidth]{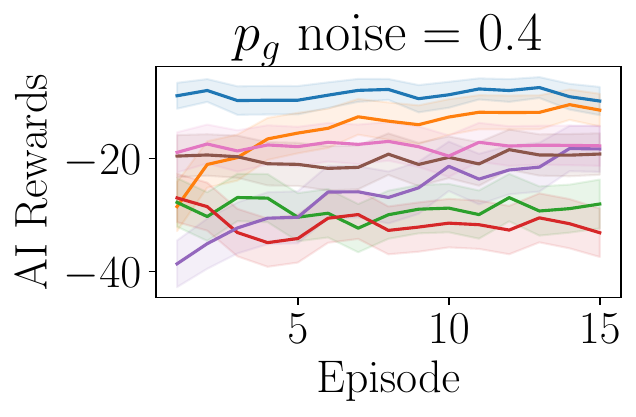}
    \end{subfigure}%
    \begin{subfigure}{0.199\linewidth}
        \includegraphics[width=1\linewidth]{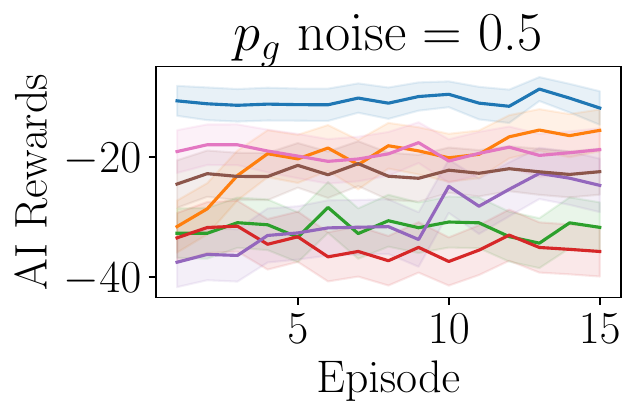}
    \end{subfigure}

    \begin{subfigure}{0.199\linewidth}
        \includegraphics[width=1\linewidth]{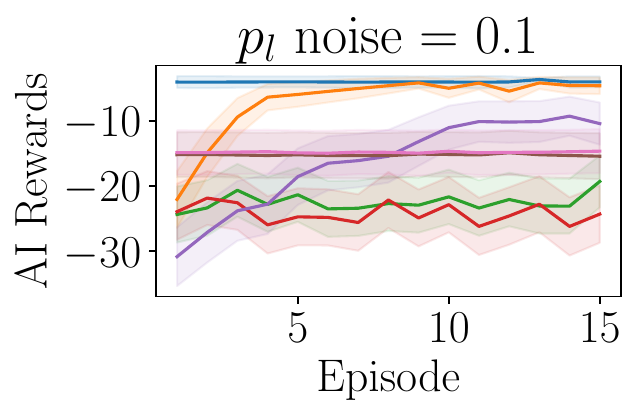}
    \end{subfigure}%
    \begin{subfigure}{0.199\linewidth}
        \includegraphics[width=1\linewidth]{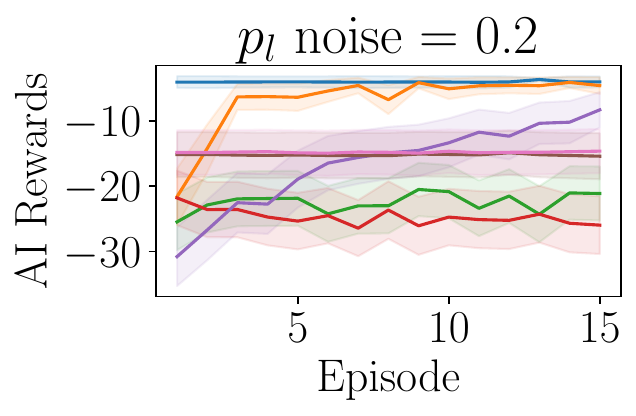}
    \end{subfigure}%
    \begin{subfigure}{0.199\linewidth}
        \includegraphics[width=1\linewidth]{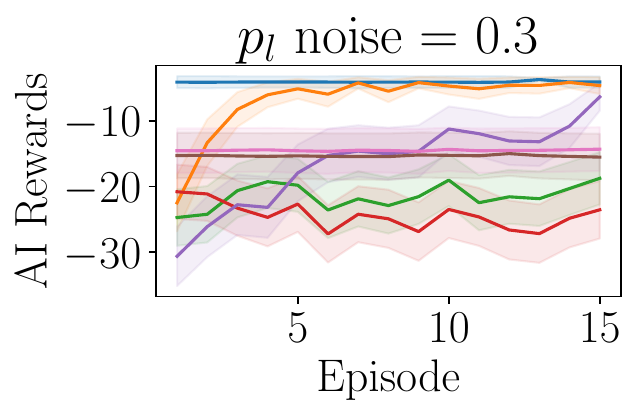}
    \end{subfigure}%
    \begin{subfigure}{0.199\linewidth}
        \includegraphics[width=1\linewidth]{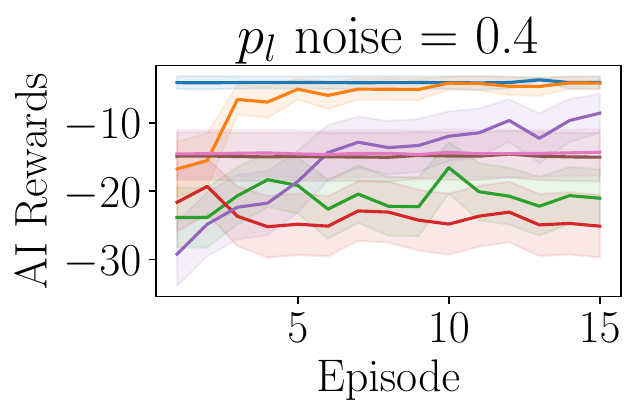}
    \end{subfigure}%
    \begin{subfigure}{0.199\linewidth}
        \includegraphics[width=1\linewidth]{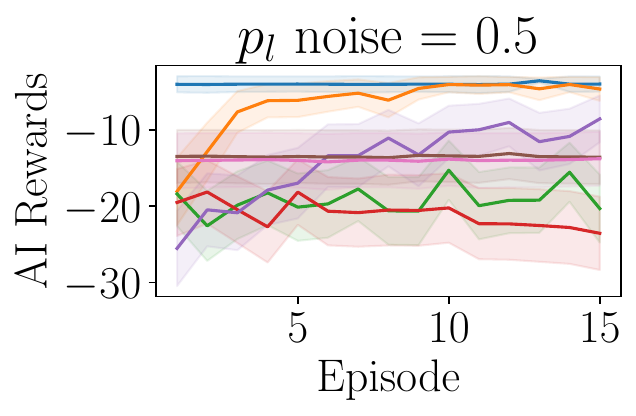}
    \end{subfigure}

    \begin{subfigure}{0.199\linewidth}
        \includegraphics[width=1\linewidth]{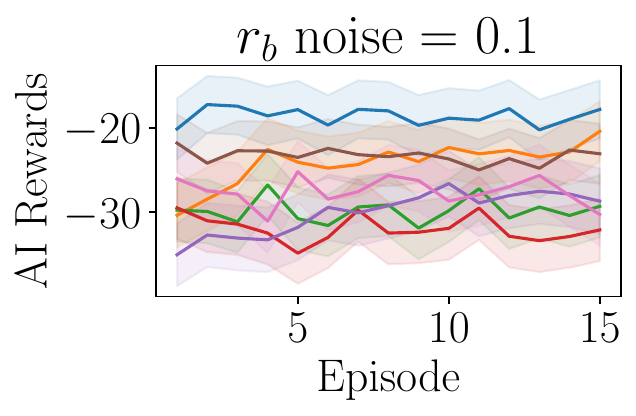}
    \end{subfigure}%
    \begin{subfigure}{0.199\linewidth}
        \includegraphics[width=1\linewidth]{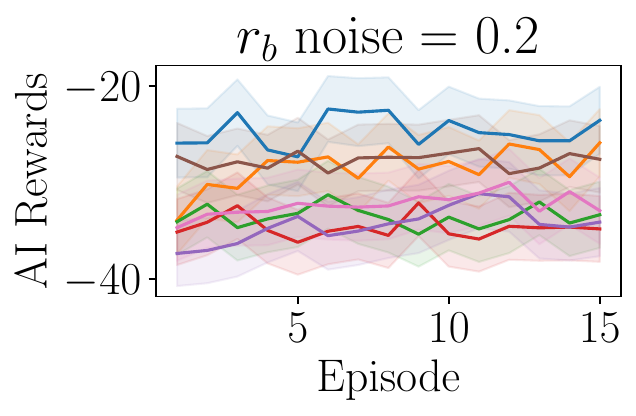}
    \end{subfigure}%
    \begin{subfigure}{0.199\linewidth}
        \includegraphics[width=1\linewidth]{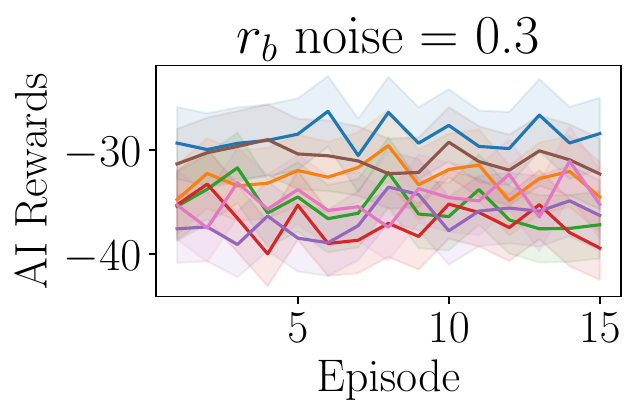}
    \end{subfigure}%
    \begin{subfigure}{0.199\linewidth}
        \includegraphics[width=1\linewidth]{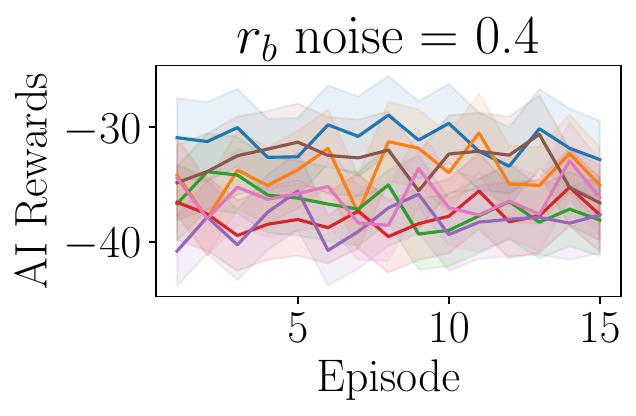}
    \end{subfigure}%
    \begin{subfigure}{0.199\linewidth}
        \includegraphics[width=1\linewidth]{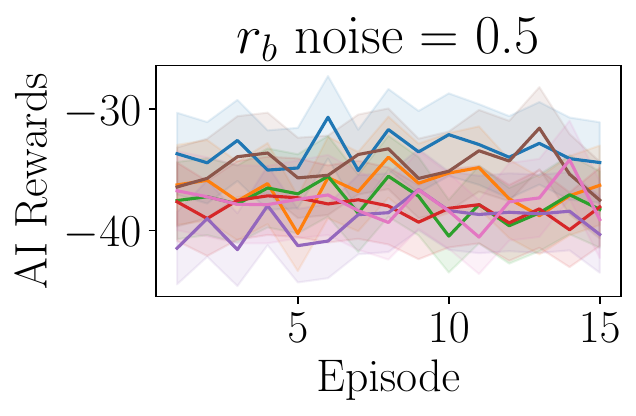}
    \end{subfigure}
    
    \begin{subfigure}{0.199\linewidth}
        \includegraphics[width=1\linewidth]{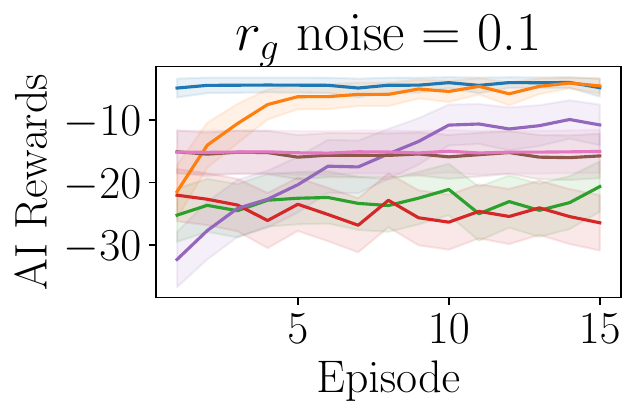}
    \end{subfigure}%
    \begin{subfigure}{0.199\linewidth}
        \includegraphics[width=1\linewidth]{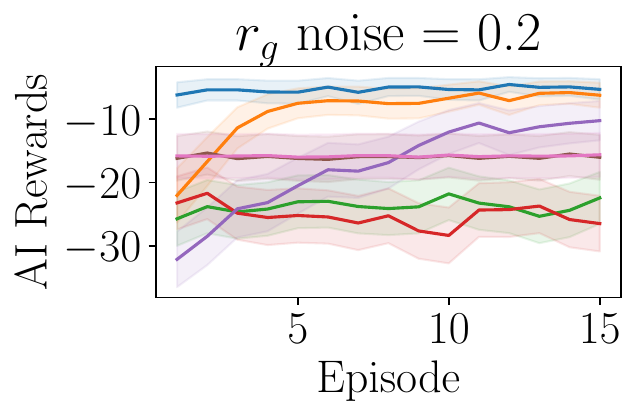}
    \end{subfigure}%
    \begin{subfigure}{0.199\linewidth}
        \includegraphics[width=1\linewidth]{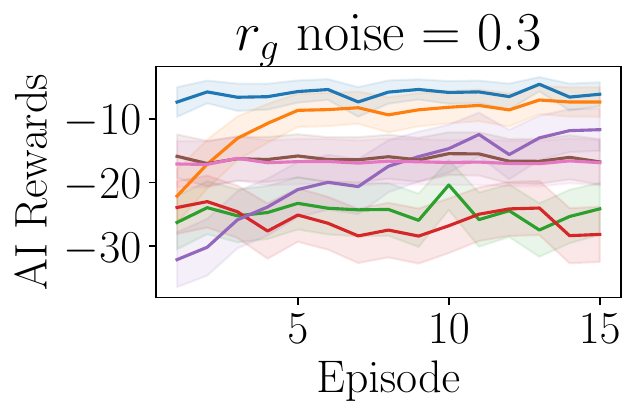}
    \end{subfigure}%
    \begin{subfigure}{0.199\linewidth}
        \includegraphics[width=1\linewidth]{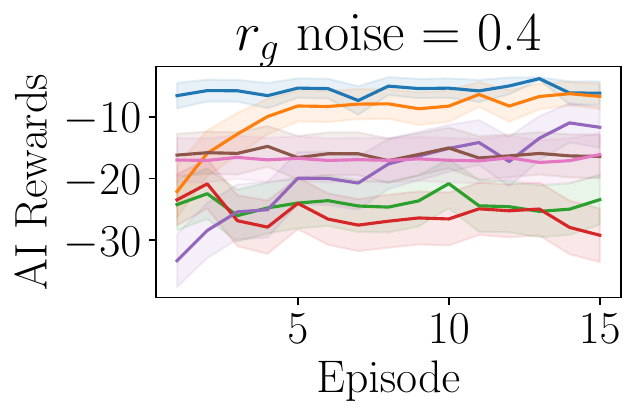}
    \end{subfigure}%
    \begin{subfigure}{0.199\linewidth}
        \includegraphics[width=1\linewidth]{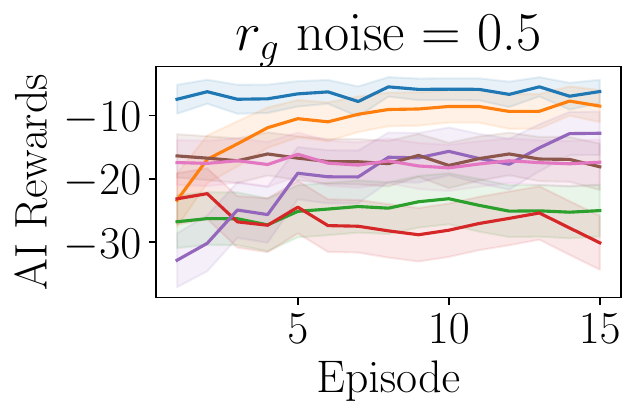}
    \end{subfigure}

    \begin{subfigure}{0.199\linewidth}
        \includegraphics[width=1\linewidth]{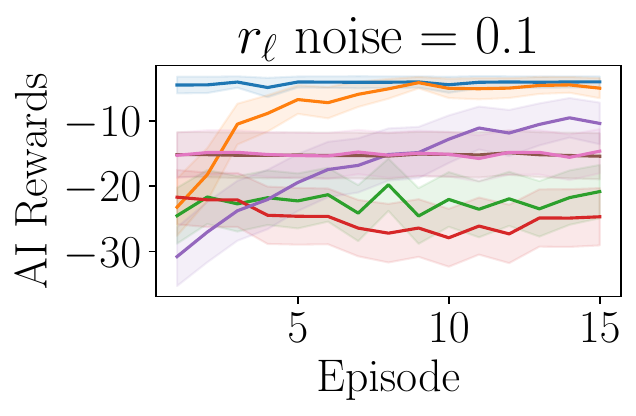}
    \end{subfigure}%
    \begin{subfigure}{0.199\linewidth}
        \includegraphics[width=1\linewidth]{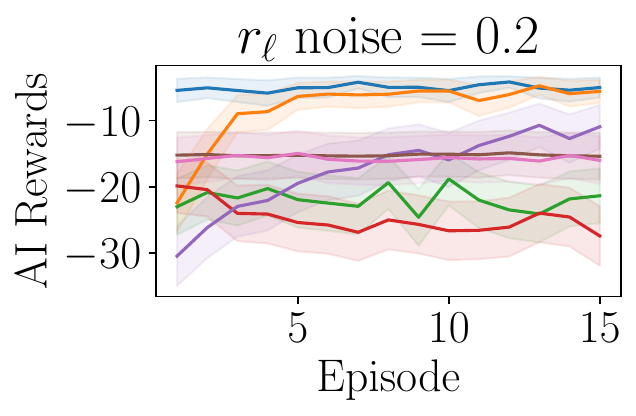}
    \end{subfigure}%
    \begin{subfigure}{0.199\linewidth}
        \includegraphics[width=1\linewidth]{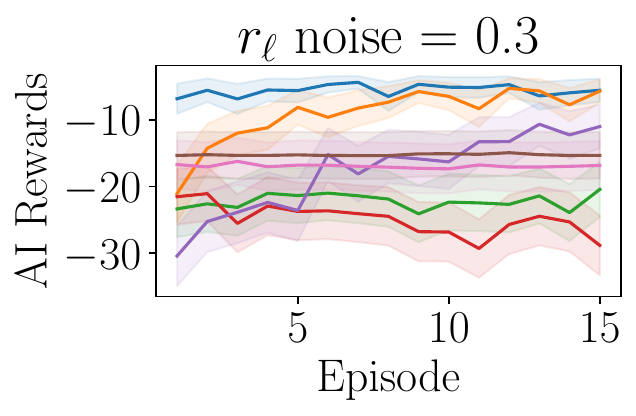}
    \end{subfigure}%
    \begin{subfigure}{0.199\linewidth}
        \includegraphics[width=1\linewidth]{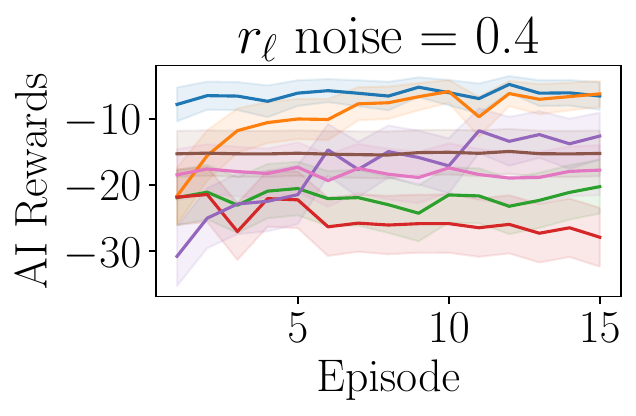}
    \end{subfigure}%
    \begin{subfigure}{0.199\linewidth}
        \includegraphics[width=1\linewidth]{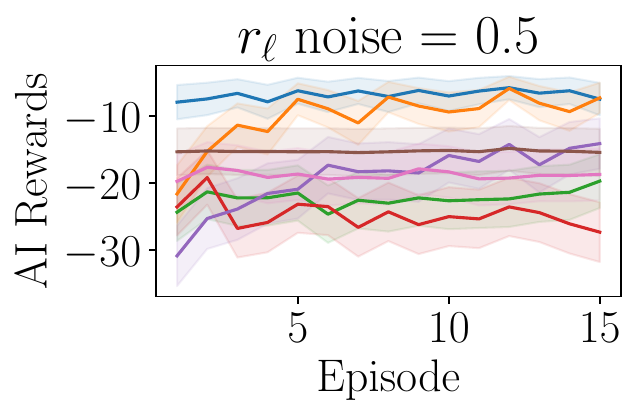}
    \end{subfigure}

    \begin{subfigure}{0.199\linewidth}
        \includegraphics[width=1\linewidth]{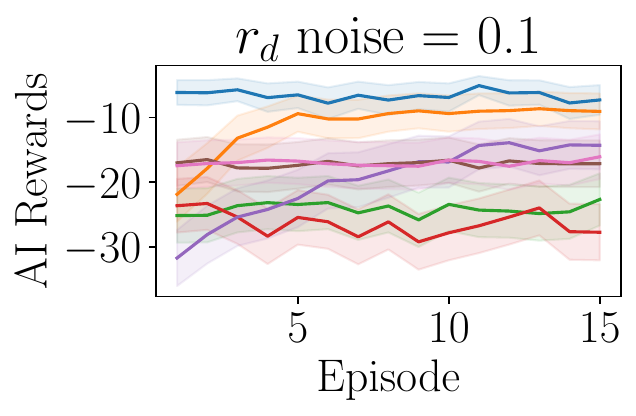}
    \end{subfigure}%
    \begin{subfigure}{0.199\linewidth}
        \includegraphics[width=1\linewidth]{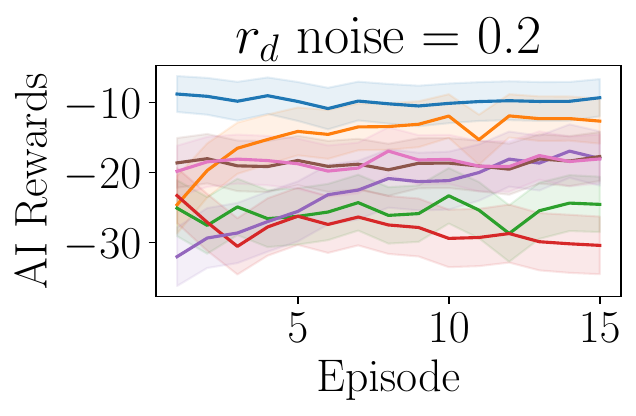}
    \end{subfigure}%
    \begin{subfigure}{0.199\linewidth}
        \includegraphics[width=1\linewidth]{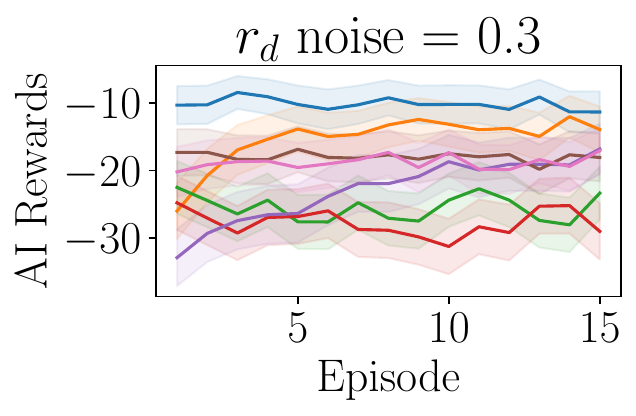}
    \end{subfigure}%
    \begin{subfigure}{0.199\linewidth}
        \includegraphics[width=1\linewidth]{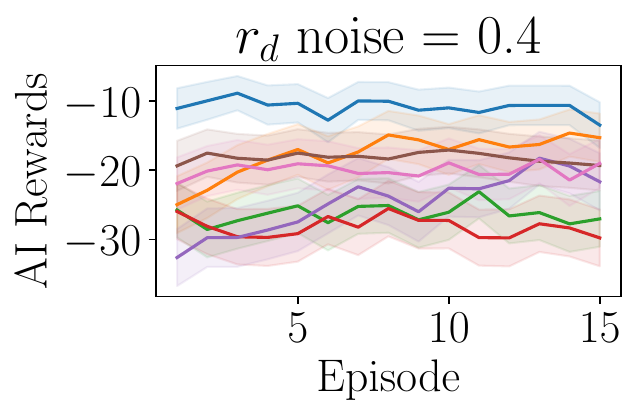}
    \end{subfigure}%
    \begin{subfigure}{0.199\linewidth}
        \includegraphics[width=1\linewidth]{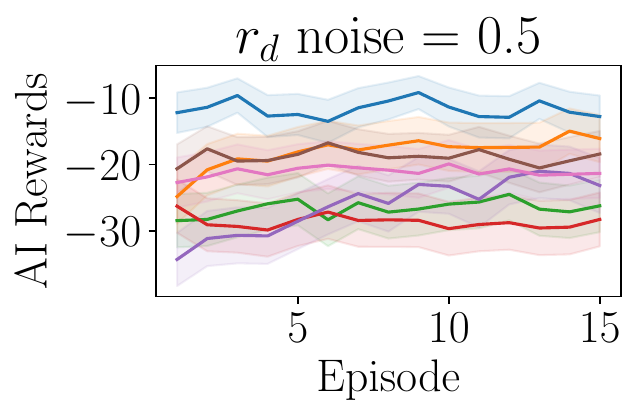}
    \end{subfigure}

    \begin{subfigure}{0.199\linewidth}
        \includegraphics[width=1\linewidth]{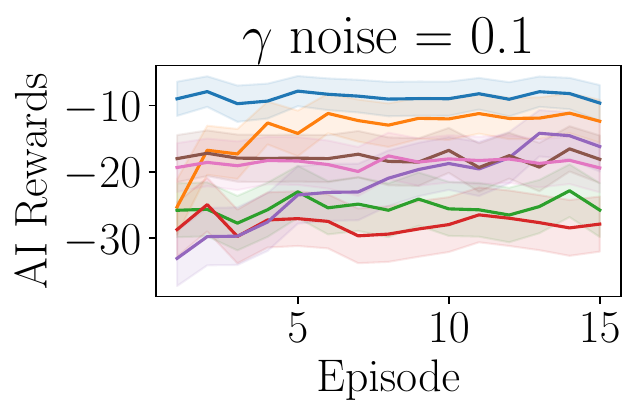}
    \end{subfigure}%
    \begin{subfigure}{0.199\linewidth}
        \includegraphics[width=1\linewidth]{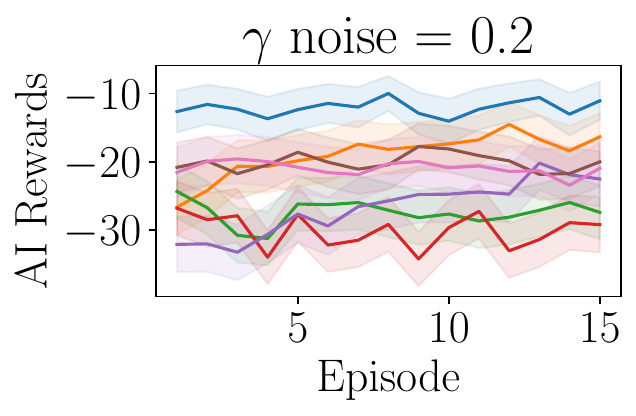}
    \end{subfigure}%
    \begin{subfigure}{0.199\linewidth}
        \includegraphics[width=1\linewidth]{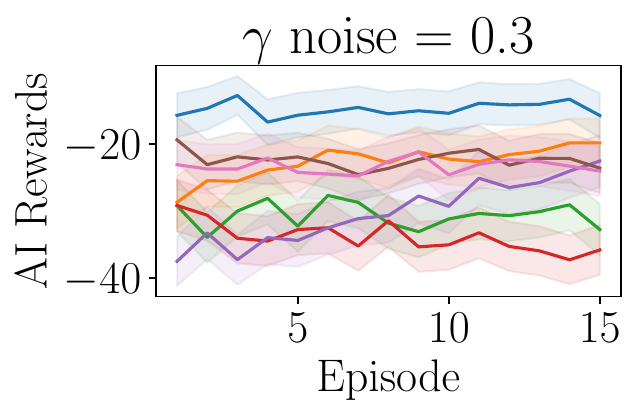}
    \end{subfigure}%
    \begin{subfigure}{0.199\linewidth}
        \includegraphics[width=1\linewidth]{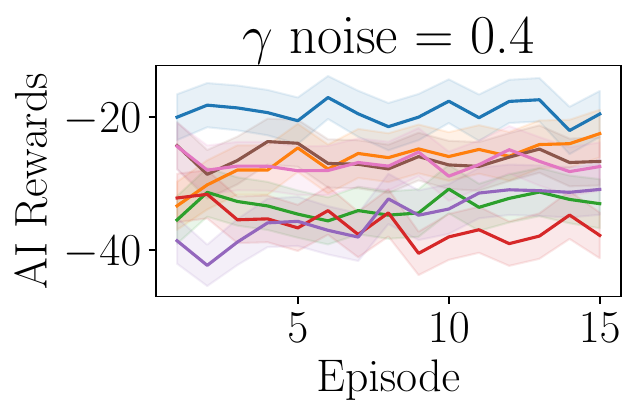}
    \end{subfigure}%
    \begin{subfigure}{0.199\linewidth}
        \includegraphics[width=1\linewidth]{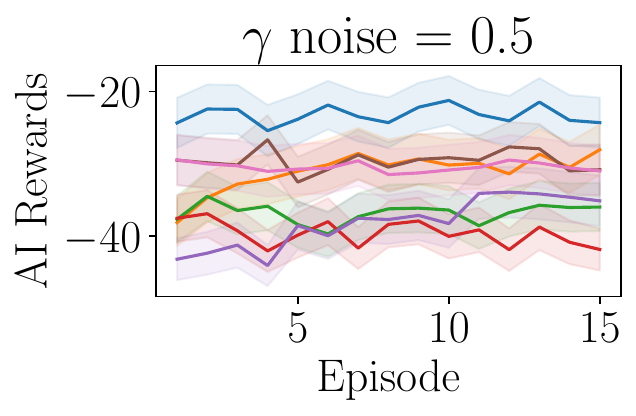}
    \end{subfigure}

    \caption{Full results for environments where parameter varies every timestep.}
\end{figure}

\begin{figure}[ht]
    \centering
    \begin{subfigure}{0.25\linewidth}
        \includegraphics[width=1\linewidth]{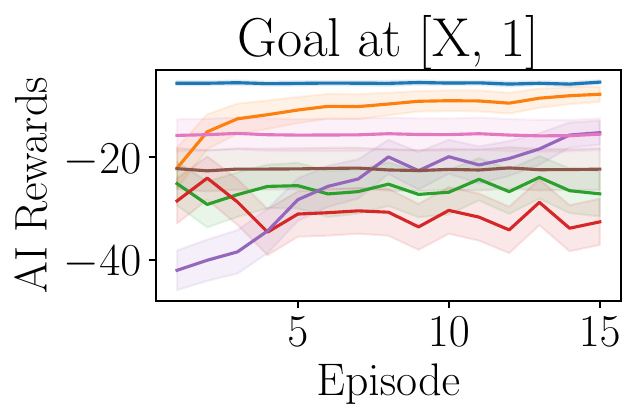}
    \end{subfigure}%
    \begin{subfigure}{0.25\linewidth}
        \includegraphics[width=1\linewidth]{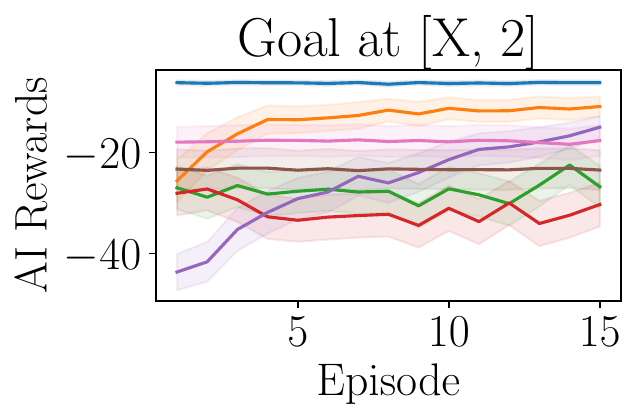}
    \end{subfigure}%
    \begin{subfigure}{0.25\linewidth}
        \includegraphics[width=1\linewidth]{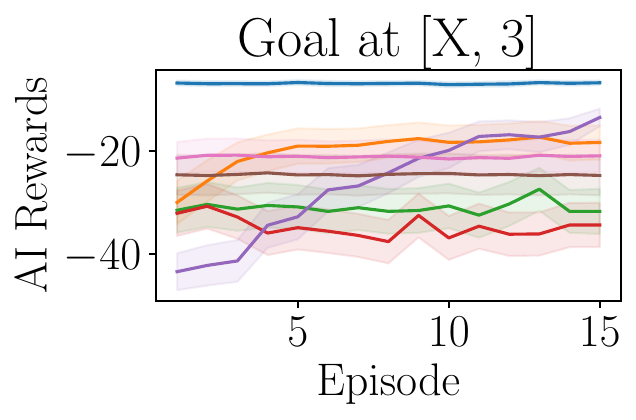}
    \end{subfigure}%
    \begin{subfigure}{0.25\linewidth}
        \includegraphics[width=1\linewidth]{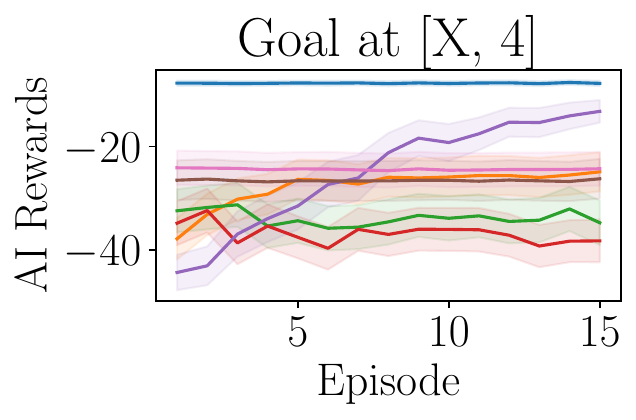}
    \end{subfigure}
    
    \begin{subfigure}{0.25\linewidth}
        \includegraphics[width=1\linewidth]{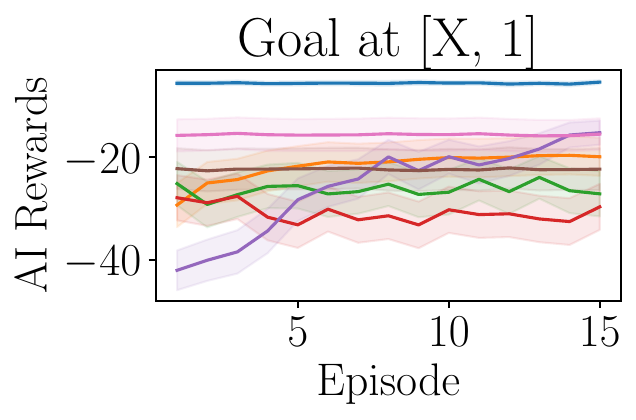}
    \end{subfigure}%
    \begin{subfigure}{0.25\linewidth}
        \includegraphics[width=1\linewidth]{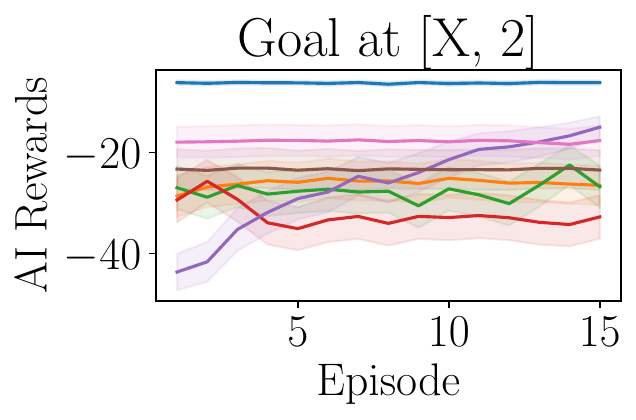}
    \end{subfigure}%
    \begin{subfigure}{0.25\linewidth}
        \includegraphics[width=1\linewidth]{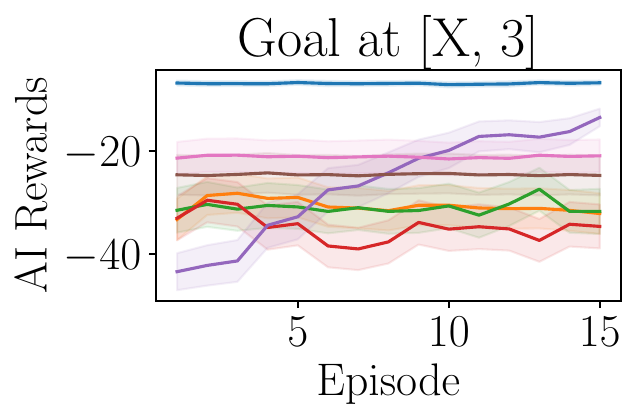}
    \end{subfigure}%
    \begin{subfigure}{0.25\linewidth}
        \includegraphics[width=1\linewidth]{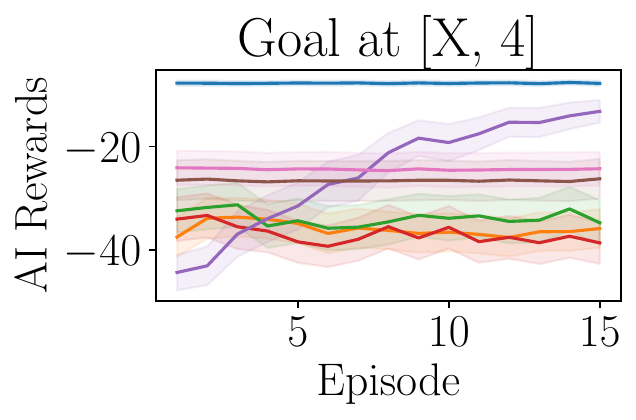}
    \end{subfigure}
    \begin{subfigure}{0.5\linewidth}
        \includegraphics[width=1\linewidth]{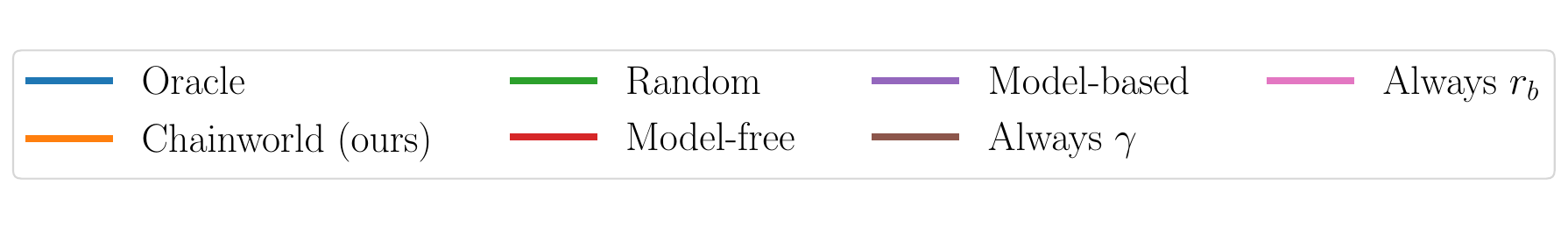}
    \end{subfigure}    
    \caption{Full results for gridworld environments in which goal state $s_g$ moves (and therefore gridworld is not equivalent to chainworld). Chainworld is correctly modeling distance to goal in top row and distance to disengagement in bottom row.}
\end{figure}

\begin{figure}
    \centering
    \begin{subfigure}{0.25\linewidth}
        \includegraphics[width=1\linewidth]{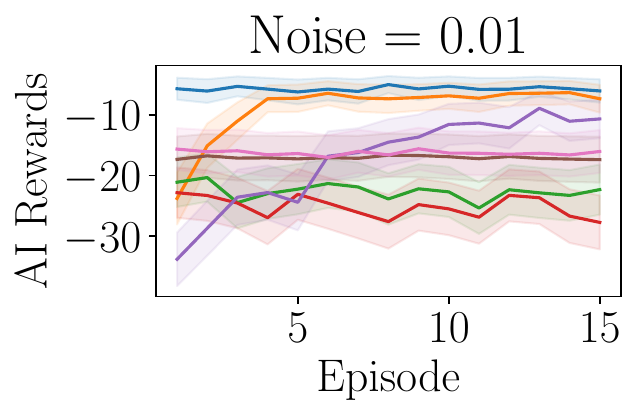}
    \end{subfigure}%
    \begin{subfigure}{0.25\linewidth}
        \includegraphics[width=1\linewidth]{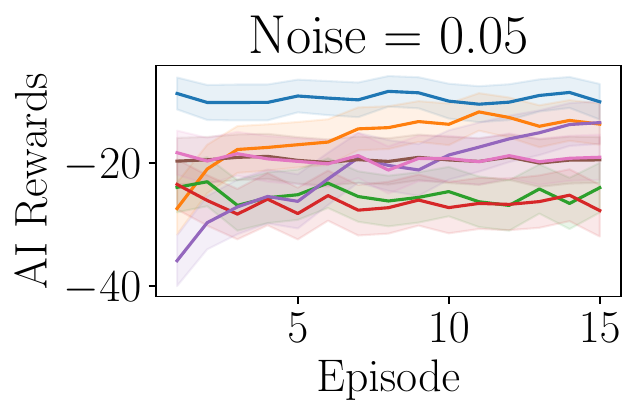}
    \end{subfigure}%
    \begin{subfigure}{0.25\linewidth}
        \includegraphics[width=1\linewidth]{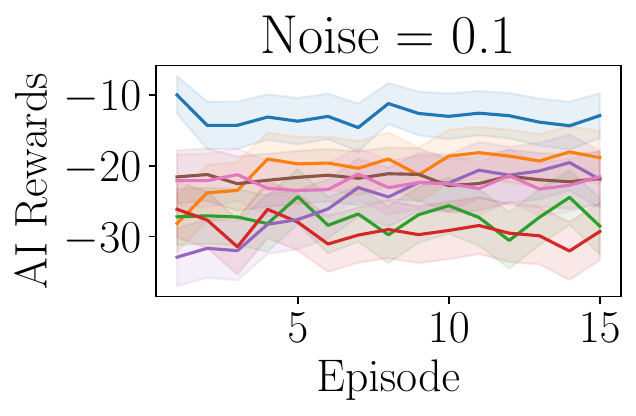}
    \end{subfigure}%
    \begin{subfigure}{0.25\linewidth}
        \includegraphics[width=1\linewidth]{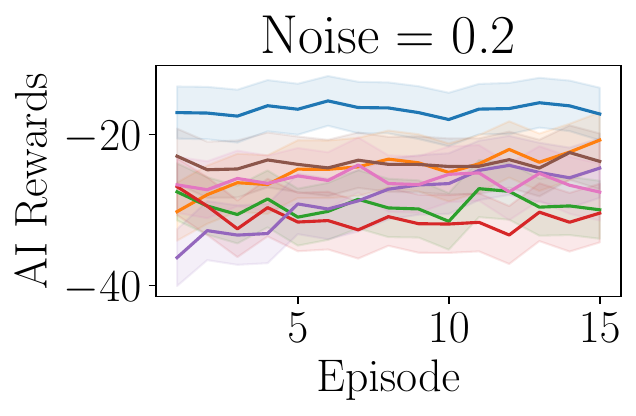}
    \end{subfigure}    
    \begin{subfigure}{0.5\linewidth}
        \includegraphics[width=1\linewidth]{figures/res-legend.pdf}
    \end{subfigure}    
    \caption{Full results for environments in which human follows softmax action selection.}
\end{figure}

\end{document}